\documentclass[11pt,onecolumn]{article}

\usepackage[mathscr]{eucal}
\usepackage[cmex10]{amsmath}
\usepackage{epsfig,epsf,psfrag}
\usepackage{amssymb,amsmath,amsthm,amsfonts,latexsym}
\usepackage{amsmath,graphicx,bm,xcolor,url,overpic}
\usepackage{fixltx2e}
\usepackage{array}
\usepackage{verbatim}
\usepackage{bm}
\usepackage{algorithmic}
\usepackage{algorithm}
\usepackage{verbatim}
\usepackage{textcomp}
\usepackage{epstopdf}
\usepackage{subfigure}
\usepackage{acronym}
\usepackage{diagbox}
\usepackage{tikz}
\usepackage{setspace}
\usepackage{thmtools}
\usepackage{thm-restate}
\usepackage{hyperref}
\usepackage{cleveref}
\usetikzlibrary{graphs, positioning, quotes, shapes.geometric}

\newcommand{\openone}{\leavevmode\hbox{\small1\normalsize\kern-.33em1}} 

\catcode`~=11 \def\UrlSpecials{\do\~{\kern -.15em\lower .7ex\hbox{~}\kern .04em}} \catcode`~=13 

\allowdisplaybreaks[4]

\newcommand{\nn}{\nonumber}

\newcommand{\calA}{\mathcal{A}}
\newcommand{\calB}{\mathcal{B}}
\newcommand{\calC}{\mathcal{C}}
\newcommand{\calD}{\mathcal{D}}
\newcommand{\calE}{\mathcal{E}}
\newcommand{\calF}{\mathcal{F}}
\newcommand{\calG}{\mathcal{G}}
\newcommand{\calH}{\mathcal{H}}
\newcommand{\calI}{\mathcal{I}}
\newcommand{\calJ}{\mathcal{J}}
\newcommand{\calK}{\mathcal{K}}
\newcommand{\calL}{\mathcal{L}}
\newcommand{\calM}{\mathcal{M}}
\newcommand{\calN}{\mathcal{N}}

\newcommand{\calP}{\mathcal{P}}

\newcommand{\calS}{\mathcal{S}}
\newcommand{\calT}{\mathcal{T}}

\newcommand{\calV}{\mathcal{V}}
\newcommand{\calW}{\mathcal{W}}
\newcommand{\calX}{\mathcal{X}}

\newcommand{\hatcalT}{\hat{\calT}}


\newcommand{\bx}{\mathbf{x}}
\newcommand{\bX}{\mathbf{X}}



\newcommand{\bbE}{\mathbb{E}}

\newcommand{\bbN}{\mathbb{N}}

\newcommand{\bbR}{\mathbb{R}}


\newcommand{\frakT}{\mathfrak{T}}


\newcommand{\scC}{\mathscr{C}}

\DeclareMathAlphabet{\mathbsf}{OT1}{cmss}{bx}{n}
\DeclareMathAlphabet{\mathssf}{OT1}{cmss}{m}{sl}

\DeclareSymbolFont{bsfletters}{OT1}{cmss}{bx}{n}  
\DeclareSymbolFont{ssfletters}{OT1}{cmss}{m}{n}
\DeclareMathSymbol{\bsfGamma}{0}{bsfletters}{'000}
\DeclareMathSymbol{\ssfGamma}{0}{ssfletters}{'000}
\DeclareMathSymbol{\bsfDelta}{0}{bsfletters}{'001}
\DeclareMathSymbol{\ssfDelta}{0}{ssfletters}{'001}
\DeclareMathSymbol{\bsfTheta}{0}{bsfletters}{'002}
\DeclareMathSymbol{\ssfTheta}{0}{ssfletters}{'002}
\DeclareMathSymbol{\bsfLambda}{0}{bsfletters}{'003}
\DeclareMathSymbol{\ssfLambda}{0}{ssfletters}{'003}
\DeclareMathSymbol{\bsfXi}{0}{bsfletters}{'004}
\DeclareMathSymbol{\ssfXi}{0}{ssfletters}{'004}
\DeclareMathSymbol{\bsfPi}{0}{bsfletters}{'005}
\DeclareMathSymbol{\ssfPi}{0}{ssfletters}{'005}
\DeclareMathSymbol{\bsfSigma}{0}{bsfletters}{'006}
\DeclareMathSymbol{\ssfSigma}{0}{ssfletters}{'006}
\DeclareMathSymbol{\bsfUpsilon}{0}{bsfletters}{'007}
\DeclareMathSymbol{\ssfUpsilon}{0}{ssfletters}{'007}
\DeclareMathSymbol{\bsfPhi}{0}{bsfletters}{'010}
\DeclareMathSymbol{\ssfPhi}{0}{ssfletters}{'010}
\DeclareMathSymbol{\bsfPsi}{0}{bsfletters}{'011}
\DeclareMathSymbol{\ssfPsi}{0}{ssfletters}{'011}
\DeclareMathSymbol{\bsfOmega}{0}{bsfletters}{'012}
\DeclareMathSymbol{\ssfOmega}{0}{ssfletters}{'012}


\newcommand{\hath}{\hat{h}}

\newcommand{\hatI}{\hat{I}}

\newcommand{\tiln}{\tilde{n}}

\newcommand{\tilp}{\tilde{p}}


\newcommand{\ttheta}{\tilde{\theta}}

\newcommand{\halpha}{\hat{\alpha}}

\newcommand{\hrho}{\hat{\rho}}






\DeclareMathOperator*{\argmax}{arg\,max}



\newtheorem{theorem}{Theorem} 
\newtheorem{lemma}{Lemma}

\newtheorem{proposition}{Proposition}

\newtheorem{definition}{Definition}

\newtheorem{remark}{Remark}

\newtheorem{assumption}{Assumption}
\usepackage{cite}
\usepackage{amsmath,amssymb,amsfonts}
\usepackage{algorithmic}
\usepackage{graphicx}
\usepackage{textcomp}
\usepackage{xcolor,hyperref}
\usepackage{wrapfig}
\usetikzlibrary{chains,scopes,backgrounds,shapes,fit,shadows,calc,arrows.meta,decorations.pathreplacing,calligraphy}
\def\BibTeX{{\rm B\kern-.05em{\sc i\kern-.025em b}\kern-.08em
    T\kern-.1667em\lower.7ex\hbox{E}\kern-.125emX}}
\usepackage{fullpage}
\acrodef{cl}[CL]{\underline{C}how--\underline{L}iu}
\acrodef{scl}[SCL]{\underline{S}implified \underline{C}how--\underline{L}iu}
\acrodef{kl}[KL]{\underline{K}ullback–\underline{L}eibler}
\acrodef{mst}[MWST]{\underline{M}aximum \underline{W}eight \underline{S}panning \underline{T}ree}
\acrodef{hmm}[HMM]{\underline{H}idden \underline{M}arkov \underline{M}odel}
\acrodef{alathe}[{\sc Active-LATHE}]{\underline{Active} \underline{L}earning \underline{A}lgorithm for \underline{T}rees with \underline{H}omogeneous \underline{E}dges}
\acrodef{ampl}[{\sc AMPL}]{\underline{A}daptive \underline{M}arginalization based \underline{P}arallel \underline{L}asso}
\newcommand{\Passive}{{\rm passive}}
\newcommand{\Active}{{\rm active}}
\flushbottom
\begin{document}

\title{Active-LATHE: An Active Learning Algorithm for Boosting the Error Exponent for Learning Homogeneous Ising Trees} 
\date{}



\author{Fengzhuo Zhang\footnotemark[1], Anshoo Tandon\footnotemark[1] \footnotemark[2] $\,$ and Vincent Y.~F.~Tan$^\ddagger$ \\ $^*$Department of Electrical and Computer Engineering, National University of Singapore \\
$^\ddagger$Department of Mathematics, National University of Singapore \\
Emails: \url{fzzhang@u.nus.edu}, \url{anshoo.tandon@gmail.com}, \url{vtan@nus.edu.sg}}


\maketitle

\renewcommand{\thefootnote}{\fnsymbol{footnote}}
\footnotetext[1]{These authors contributed equally to this work.}
\footnotetext[2]{Anshoo Tandon contributed to this paper when he was with the National University of Singapore. He is now with V-Labs, Bengaluru.}

\begin{abstract}
The Chow-Liu algorithm (IEEE Trans.~Inform.~Theory, 1968) has been a mainstay for the learning of tree-structured graphical models from i.i.d.\ sampled data vectors. Its theoretical properties have been well-studied and are well-understood. In this paper, we focus on the class of trees that are arguably even more fundamental, namely {\em homogeneous} trees in which each pair of nodes that forms an edge has the same correlation $\rho$. We ask whether we are able to further reduce the error probability of learning the structure of the homogeneous tree model when {\em active learning} or {\em active sampling of nodes or variables} is allowed. Our figure of merit is the {\em error exponent}, which quantifies the exponential rate of decay of the error probability with   an increasing number of data samples. At first sight, an improvement in the error exponent seems impossible, as all the edges are statistically identical. We design and analyze  an algorithm \ac{alathe}, which surprisingly boosts the error exponent by at least 40\% when $\rho$ is at least $0.8$. For all other values of $\rho$, we also observe commensurate, but more modest, improvements in the error exponent. Our analysis hinges on judiciously exploiting the minute but detectable statistical variation of the samples to allocate more data to parts of the graph in which we are less confident of being correct. 
\end{abstract}
\section{Introduction}
Graphical models provide a succinct representation of the conditional independence relationships among random variables. Each node in the graph represents a random variable, and the independence relationships are captured by the edges in the graph. Tree-structured graphical models are a special class of graphical models whose underlying graphs are trees. Due to their simplicity in inference and learning tasks, they are widely employed in domains such as computational biology \cite{saitou1987neighbor}, natural language processing \cite{parikh2014spectral}, and signal processing \cite{montanari2012graphical}.

The problem of learning the structure  of graphical models is an important and fundamental task in model selection. With independent samples of random variables in the graph, the goal is to estimate the edges in the graph. The \ac{cl} algorithm proposed in \cite{chow1968approximating} is a classical tree structure learning algorithm, which finds the \ac{mst} of the graph whose weights are empirical estimates of mutual information between random variables. The \ac{cl} algorithm essentially finds the closest tree-structured graphical model to the empirical distribution in the sense of \ac{kl} divergence, and so the \ac{cl} tree is the maximum likelihood estimate of the underlying tree structure. Using   large deviation techniques \cite{zeitouni1998large}, Tan, Anandkumar, Tong, and Willsky\cite{tan2011large} analyzed the exponential decay rate  of the error probability, which is known as the {\em error exponent}. For the homogeneous Ising tree models, Tandon, Tan, and Zhu\cite{tandon2020exact} showed that the exponent of the event that the \ac{cl} algorithm estimates the wrong structure tends to $0$ when the correlations between adjacent nodes approaches $1$, which implies that the error probability decreases slowly with an increase in the  number of independent samples. Fortunately, this performance degradation can be greatly mitigated by active learning.

{\em Active learning} is an umbrella term for   algorithms that are allowed to actively select  the data from which they learn the properties of the underlying distribution \cite{settles2009active}. The freedom to choose  data to learn from allows the learner to home in on parts of the model that are deemed to be ``most uncertain''. This    reduces the overall error probability or estimation error. Since all   conventional structure learning algorithms, including the \ac{cl} algorithm, rely on   samples that are collected {\em before} the  learning procedure commences, they are all \emph{passive} learning algorithms. Although Vats, Nowak, and Baraniuk~\cite{vats2014active} and Dasarathy, Singh, Balcan, and Park~\cite{dasarathy2016active} proposed active learning algorithms for general graphical models, their results cannot answer the question that whether the error exponent of the tree structure learning, and in particular {\em homogeneous} trees, can be improved compared to that of the \ac{cl} algorithm. First, they focused on the sample complexities of the algorithms, and the exponents of the probabilities of errors  cannot be directly derived from their results. Second, the effectiveness of their algorithms relies on the {\em precise knowledge} of bounds on the parameters of the graphical models to be estimated, but the \ac{cl} algorithm does not require any information about the parameters of the distribution except that the underlying graph is a tree, and, in our case, a {\em homogeneous} tree. Finally, their algorithms were designed  for the graphical models with general structures, and there is no active structure learning algorithm designed specifically for tree-structured graphical models. Thus, whether active learning techniques can improve the error exponent of arguably the simplest class of graphical models, namely homogeneous trees, constitutes a fascinating and meaningful question to be resolved, both from the statistical and machine learning viewpoints. We remark that {\em prima facie}, active learning cannot improve the error exponent of learning homogeneous trees because each part of the graph is equally easy or difficult to learn, since the correlations on each of the edges are identical. Nevertheless, we show, somewhat  surprisingly, that by judiciously exploiting the minute but detectable statistical variations of the samples at different nodes, we can improve the error exponent for learning homogeneous trees by at least 40\% when the correlations are at least $0.8$! For all other values of the correlation coefficient, we also observe commensurate, but more modest, improvements in the error exponent.  

\renewcommand{\thefootnote}{\arabic{footnote}}
\subsection{Main Contributions}
In this paper, we design an active structure learning algorithm \ac{alathe}\footnote{A \emph{lathe} (noun) is a machine used for shaping an object made of wood or metal by   rotating it about a horizontal axis against a tool. Here,  we are  {\em shaping} or {\em learning}  the structure of a graphical model, analogous to  a lathe, but doing this {\em actively}, hence the origin of the name of the algorithm \ac{alathe}.} for homogeneous Ising trees and prove that its error exponent   algorithm is no less than $c_{\rho}\geq 1$ times the error exponent of the \ac{cl} algorithm on the homogenous Ising trees, where $c_{\rho}$ is a non-decreasing function of the correlation parameter $\rho$ defined in Table~\ref{table:crho}. Informally, denoting $\calT$, $\hatcalT_{\Active}(n)$ and $\hatcalT_{\Passive}(n)$ as true tree, and  the trees learned by \ac{alathe} and the (minimum probability of error) \ac{cl} algorithms (using $n$ vector-valued samples) respectively,  we have
\begin{equation}
    \varliminf_{n\rightarrow \infty}-\frac{1}{n}\log \Pr \big(\hatcalT_{\Active}(n)\neq \calT\big)\ge c_\rho\,\bigg( \varliminf_{n\rightarrow \infty}-\frac{1}{n}\log \Pr \big(\hatcalT_{\Passive}(n)\neq \calT\big)\bigg).
\end{equation}
\ac{alathe} consists of two main phases---the global learning phase and the local refinement phase. The global learning phase learns a coarse structure and estimates the parameter $\rho$ of the underlying model, which serves as a basis for the 
subsequent local phase. The local refinement phase detects the unconfident parts in the coarsely learned structure and uses all the remaining samples to provide more accurate and reliable refinements to the structure of the unconfident parts of the graph.  Empirical simulations for a range of tree structures and correlation parameters corroborate our theoretical findings. 

\def \sca{1}
\def \vsp{12*\sca}
\def \hei{7*\sca}
\def \wid{12*\sca}
\definecolor{lightyel}{RGB}{250,231,143}
\definecolor{stdorg}{RGB}{234,155,70}
\definecolor{deppurp}{RGB}{128,0,128}
\begin{figure}[t]
	\centering
	\begin{tikzpicture}
		\tikzstyle{every node}=[font=\normalsize]
		\node [draw,fill=lightyel,minimum width=\wid mm,minimum height=\hei mm](a1)at(0,0){};
		\node [draw,fill=lightyel,minimum width=\wid mm,minimum height=2*\hei mm](a2)at(0,1.5*\hei mm){};
		\node [draw,fill=lightyel,minimum width=\wid mm,minimum height=1.5*\hei mm](a3)at(0,3.25*\hei mm){};
		\node [draw,fill=lightyel,minimum width=\wid mm,minimum height=1*\hei mm](a4)at(0,4.5*\hei mm){};
		\node [draw,fill=stdorg,minimum width=\wid mm,minimum height=0.8*\hei mm](a5)at(0,5.3*\hei mm){};
		
		\node [draw,fill=lightyel,minimum width=\wid mm,minimum height=2*\hei mm](b1)at(\wid mm,0.5*\hei mm){};
		\node [draw,fill=lightyel,minimum width=\wid mm,minimum height=1.5*\hei mm](b2)at(\wid mm,2.25*\hei mm){};
		\node [draw,fill=lightyel,minimum width=\wid mm,minimum height=1.2*\hei mm](b3)at(\wid mm,3.6*\hei mm){};
		\node [draw,fill=lightyel,minimum width=\wid mm,minimum height=1.3*\hei mm](b4)at(\wid mm,4.85*\hei mm){};
		\node [draw,fill=stdorg,minimum width=\wid mm,minimum height=0.9*\hei mm](b5)at(\wid mm,5.95*\hei mm){};
		
		\node [draw,fill=lightyel,minimum width=\wid mm,minimum height=0.8*\hei mm](c1)at(2*\wid mm,-0.1*\hei mm){};
		\node [draw,fill=lightyel,minimum width=\wid mm,minimum height=0.5*\hei mm](c2)at(2*\wid mm,0.55*\hei mm){};
		\node [draw,fill=lightyel,minimum width=\wid mm,minimum height=1.1*\hei mm](c3)at(2*\wid mm,1.35*\hei mm){};
		\node [draw,fill=lightyel,minimum width=\wid mm,minimum height=0.6*\hei mm](c4)at(2*\wid mm,2.2*\hei mm){};
		\node [draw,fill=stdorg,minimum width=\wid mm,minimum height=1.1*\hei mm](c5)at(2*\wid mm,3.05*\hei mm){};
		
		\node [draw,fill=lightyel,minimum width=\wid mm,minimum height=0.5*\hei mm](d1)at(3*\wid mm,-0.25*\hei mm){};
		\node [draw,fill=lightyel,minimum width=\wid mm,minimum height=1*\hei mm](d2)at(3*\wid mm,0.5*\hei mm){};
		\node [draw,fill=lightyel,minimum width=\wid mm,minimum height=0.7*\hei mm](d3)at(3*\wid mm,1.35*\hei mm){};
		\node [draw,fill=lightyel,minimum width=\wid mm,minimum height=0.3*\hei mm](d4)at(3*\wid mm,1.85*\hei mm){};
		\node [draw,fill=stdorg,minimum width=\wid mm,minimum height=1.2*\hei mm](d5)at(3*\wid mm,2.6*\hei mm){};
		
		\node [draw,fill=lightyel,minimum width=\wid mm,minimum height=\hei mm](e1)at( 4*\wid mm+\vsp mm,0){};
		\node [draw,fill=lightyel,minimum width=\wid mm,minimum height=2*\hei mm](e2)at(4*\wid mm+\vsp mm,1.5*\hei mm){};
		\node [draw,fill=lightyel,minimum width=\wid mm,minimum height=1.5*\hei mm](e3)at(4*\wid mm+\vsp mm,3.25*\hei mm){};
		\node [draw,fill=lightyel,minimum width=\wid mm,minimum height=1*\hei mm](e4)at(4*\wid mm+\vsp mm,4.5*\hei mm){};
		\node [draw,fill=stdorg,minimum width=\wid mm,minimum height=0.8*\hei mm](e5)at(6*\wid mm+\vsp mm,4*\hei mm){};
		
		\node [draw,fill=lightyel,minimum width=\wid mm,minimum height=2*\hei mm](f1)at(5*\wid mm+\vsp mm,0.5*\hei mm){};
		\node [draw,fill=lightyel,minimum width=\wid mm,minimum height=1.5*\hei mm](f2)at(5*\wid mm+\vsp mm,2.25*\hei mm){};
		\node [draw,fill=lightyel,minimum width=\wid mm,minimum height=1.2*\hei mm](f3)at(5*\wid mm+\vsp mm,3.6*\hei mm){};
		\node [draw,fill=lightyel,minimum width=\wid mm,minimum height=1.3*\hei mm](f4)at(5*\wid mm+\vsp mm,4.85*\hei mm){};
		\node [draw,fill=stdorg,minimum width=\wid mm,minimum height=0.9*\hei mm](f5)at(7*\wid mm+\vsp mm,3.65*\hei mm){};
		
		\node [draw,fill=lightyel,minimum width=\wid mm,minimum height=0.8*\hei mm](g1)at(6*\wid mm+\vsp mm,-0.1*\hei mm){};
		\node [draw,fill=lightyel,minimum width=\wid mm,minimum height=0.5*\hei mm](g2)at(6*\wid mm+\vsp mm,0.55*\hei mm){};
		\node [draw,fill=lightyel,minimum width=\wid mm,minimum height=1.1*\hei mm](g3)at(6*\wid mm+\vsp mm,1.35*\hei mm){};
		\node [draw,fill=lightyel,minimum width=\wid mm,minimum height=0.6*\hei mm](g4)at(6*\wid mm+\vsp mm,2.2*\hei mm){};
		\node [draw,fill=stdorg,minimum width=\wid mm,minimum height=1.1*\hei mm](g5)at(6*\wid mm+\vsp mm,3.05*\hei mm){};
		
		\node [draw,fill=lightyel,minimum width=\wid mm,minimum height=0.5*\hei mm](h1)at(7*\wid mm+\vsp mm,-0.25*\hei mm){};
		\node [draw,fill=lightyel,minimum width=\wid mm,minimum height=1*\hei mm](h2)at(7*\wid mm+\vsp mm,0.5*\hei mm){};
		\node [draw,fill=lightyel,minimum width=\wid mm,minimum height=0.7*\hei mm](h3)at(7*\wid mm+\vsp mm,1.35*\hei mm){};
		\node [draw,fill=lightyel,minimum width=\wid mm,minimum height=0.3*\hei mm](h4)at(7*\wid mm+\vsp mm,1.85*\hei mm){};
		\node [draw,fill=stdorg,minimum width=\wid mm,minimum height=1.2*\hei mm](h5)at(7*\wid mm+\vsp mm,2.6*\hei mm){};
		
		\node [draw=none,fill=none,font=\fontsize{10}{6}\selectfont,minimum width=\wid mm,minimum height=1.2*\hei mm](i1)at(1.5*\wid mm,7*\hei mm){\bf{\emph{Passive strategy}}};
		\node [draw=none,fill=none,font=\fontsize{10}{6}\selectfont,minimum width=\wid mm,minimum height=1.2*\hei mm](i2)at(5.5*\wid mm+\vsp mm,7*\hei mm){\bf{\emph{Active strategy}}};
		\node [draw=none,fill=none,font=\fontsize{8}{6}\selectfont,minimum width=\wid mm,minimum height=1.2*\hei mm](i3)at(7*\wid mm+\vsp mm,5.2*\hei mm){\color{red}\bf{expected height}};
		\node [draw=none,fill=none,font=\fontsize{8}{6}\selectfont,minimum width=\wid mm,minimum height=1.2*\hei mm](i4)at(3.2*\wid mm,3.6*\hei mm){\color{deppurp}\bf{active gain}};
		
		\node [draw=none,fill=none,font=\fontsize{8}{6}\selectfont,minimum width=\wid mm,minimum height=1.2*\hei mm](j1)at(0,-0.75*\hei mm){\bf{$1$}};
		\node [draw=none,fill=none,font=\fontsize{8}{6}\selectfont,minimum width=\wid mm,minimum height=1.2*\hei mm](j2)at(1*\wid mm,-0.75*\hei mm){\bf{$2$}};
		\node [draw=none,fill=none,font=\fontsize{8}{6}\selectfont,minimum width=\wid mm,minimum height=1.2*\hei mm](j3)at(2*\wid mm,-0.75*\hei mm){\bf{$3$}};
		\node [draw=none,fill=none,font=\fontsize{8}{6}\selectfont,minimum width=\wid mm,minimum height=1.2*\hei mm](j4)at(3*\wid mm,-0.75*\hei mm){\bf{$4$}};
		\node [draw=none,fill=none,font=\fontsize{8}{6}\selectfont,minimum width=\wid mm,minimum height=1.2*\hei mm](j5)at(4*\wid mm+\vsp mm,-0.75*\hei mm){\bf{$1$}};
		\node [draw=none,fill=none,font=\fontsize{8}{6}\selectfont,minimum width=\wid mm,minimum height=1.2*\hei mm](j6)at(5*\wid mm+\vsp mm,-0.75*\hei mm){\bf{$2$}};
		\node [draw=none,fill=none,font=\fontsize{8}{6}\selectfont,minimum width=\wid mm,minimum height=1.2*\hei mm](j7)at(6*\wid mm+\vsp mm,-0.75*\hei mm){\bf{$3$}};
		\node [draw=none,fill=none,font=\fontsize{8}{6}\selectfont,minimum width=\wid mm,minimum height=1.2*\hei mm](j8)at(7*\wid mm+\vsp mm,-0.75*\hei mm){\bf{$4$}};
		
		\node [draw=none,fill=none,font=\fontsize{10}{6}\selectfont,minimum width=\wid mm,minimum height=1.2*\hei mm](k1)at(-1*\wid mm,-0.5*\hei mm){$h_{0}$};
		\node [draw=none,fill=none,font=\fontsize{10}{6}\selectfont,minimum width=\wid mm,minimum height=1.2*\hei mm](k2)at(-1*\wid mm,3*\hei mm){$b_{1}$};
		\node [draw=none,fill=none,font=\fontsize{10}{6}\selectfont,minimum width=\wid mm,minimum height=1.2*\hei mm](k2)at(0*\wid mm,6.1*\hei mm){$w$};
		\node [draw=none,fill=none,font=\fontsize{10}{6}\selectfont,minimum width=\wid mm,minimum height=1.2*\hei mm](k2)at(0*\wid mm,1.5*\hei mm){$H_{i}$};
		\node [draw=none,fill=none,font=\fontsize{10}{6}\selectfont,minimum width=\wid mm,minimum height=1.2*\hei mm](k2)at(8.0*\wid mm+\vsp mm,2.4*\hei mm){$n\hath$};
		\node [draw=none,fill=none,font=\fontsize{10}{6}\selectfont,minimum width=\wid mm,minimum height=1.2*\hei mm](k2)at(4.5*\wid mm+\vsp mm,6.2*\hei mm){$\calS$};
		\node [draw=none,fill=none,font=\fontsize{10}{6}\selectfont,minimum width=\wid mm,minimum height=1.2*\hei mm](k2)at(6.5*\wid mm+\vsp mm,6.2*\hei mm){$\calW$};
		
		\draw[red,very thick,dashed](3.3*\wid mm+\vsp mm,4.8125*\hei mm)--(7.9*\wid mm+\vsp mm,4.8125*\hei mm);
		\draw[deppurp,very thick,dashed](3.5*\wid mm,4.1*\hei mm)--(7*\wid mm+\vsp mm,4.1*\hei mm);
		\draw[deppurp,very thick,dashed](2.3*\wid mm,3.2*\hei mm)--(3.5*\wid mm+0.5*\vsp mm,3.2*\hei mm);
		\draw[deppurp,very thick,<->](3.5*\wid mm+0.4*\vsp mm,3.2*\hei mm)--(3.5*\wid mm+0.4*\vsp mm,4.1*\hei mm);
		\draw[thick,->](-0.7*\wid mm,-0.5*\hei mm)--(-0.7*\wid mm,5.7*\hei mm);
		\draw[thick,<->](-0.5*\wid mm,5.9*\hei mm)--(0.5*\wid mm,5.9*\hei mm);
		\draw[thick,->](0*\wid mm,1*\hei mm)--(0*\wid mm,0.5*\hei mm);
		\draw[thick,->](0*\wid mm,2.0*\hei mm)--(0*\wid mm,2.5*\hei mm);
		\draw[thick,->](7.7*\wid mm+\vsp mm,-0.5*\hei mm)--(7.7*\wid mm+\vsp mm,4.8125*\hei mm);
		
        \draw[decorate,decoration={brace,raise=8pt,amplitude=0.2cm}] (3.5*\wid mm+\vsp mm,5.2*\hei mm)--(5.5*\wid mm+\vsp mm,5.2*\hei mm);
        \draw[decorate,decoration={brace,raise=8pt,amplitude=0.2cm}] (5.5*\wid mm+\vsp mm,5.2*\hei mm)--(7.5*\wid mm+\vsp mm,5.2*\hei mm);
        
	\end{tikzpicture}
	\caption{An example comparing the passive strategy and the active strategy  based on the analogy in Section~\ref{sec:work}.   The passive strategy allocates all the bricks uniformly among all the $p=4$ breakages. In contrast, the active strategy first allocates $\alpha=80\%$ of all the bricks---namely, the yellow bricks---uniformly over all the breakages. Since the heights of $1^{\mathrm{st}}$ and $2^{\mathrm{nd}}$ breakages, the strong ones, are larger than the expected height, all the remaining bricks---namely, the orange ones---are then allocated to the $3^{\mathrm{rd}}$ and $4^{\mathrm{th}}$ breakages.} 
	\label{fig:demo_fig}
\end{figure}
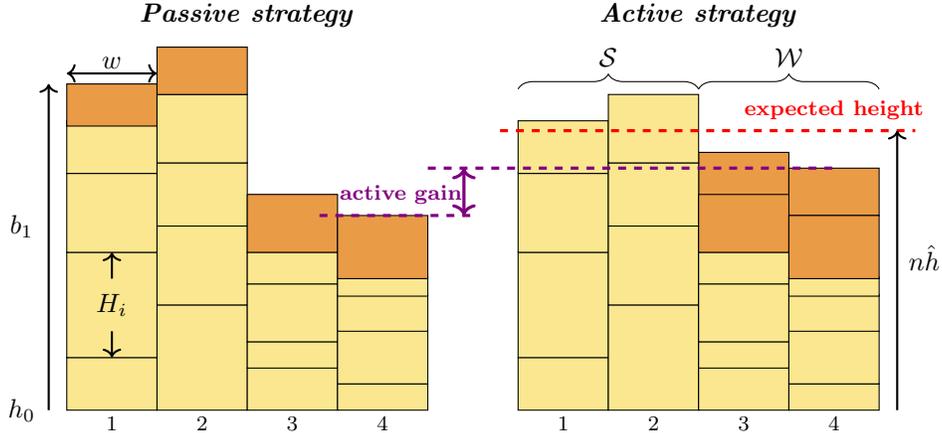
\subsection{Why does \ac{alathe} work?} \label{sec:work}
We now use a toy example to explain why the proposed \ac{alathe} algorithm can reduce the error probability compared to  passive learning in the homogeneous case (in which every edge has the same correlation). See Fig.~\ref{fig:demo_fig}. Suppose that we are tasked   to repair a dam with $p$ breakages using $np$ bricks to defend a city from an impending flood. Each breakage has the same width $w$ and the same initial height $h_{0}$. The width of each 
brick is exactly $w$, and the heights of all the bricks correspond to random variables $H_{i}$ for $i\in[np]=\{1,2,\ldots, np\}$, which are i.i.d.\ generated from some unknown distribution $Q$. We denote the height of the $i^{\rm th}$ breakage as $b_{i}$, then we have 
$b_{i}=b_{i}(I_{i})=h_{0}+\sum_{i\in I_{i}} H_{i}$ for $i\in[p]$, where $I_{i}\subseteq [np]$ is the set of indices of the bricks that are piled up at $i^{\rm th}$ breakage. Our objective here is to maximize the minimum height of the  breakages,  i.e., 
maximize $\min_{i\in[p]} b_{i}$, since the flood will rush in through the shortest, and hence weakest, breakage. The passive repair strategy involves acquiring all these $np$ bricks at once, and sets $I_{i} \triangleq [ip]\backslash[(i-1)p]=\{(i-1)p+1,(i-1)p+2,\ldots,ip\}$ for $i\in[p]$. 
In contrast, our active repair strategy first sets $\alpha \in (0,1)$, then acquires $\alpha np$ bricks, allocating them uniformly to all the breakages, i.e., $I_{i}\triangleq  [\alpha ip]\backslash[\alpha(i-1)p]$ for all $i\in [p]$. Since the heights of the bricks $H_i$ are i.i.d., the expectations of the heights $\bbE[H_i]$ 
can be estimated as $\hath=\sum_{i\in[\alpha np]} H_{i}/(\alpha np)$. We then find the set of ``strong'' breakages $\calS \triangleq \{i\in[p] \mid  b_{i}([\alpha ip]\backslash[\alpha(i-1)p])\geq n\hath\}$. We are then relatively \emph{confident} that 
the heights of the breakages indexed by $\calS$ exceed the expected height $n\hath$. We then assign all the remaining bricks to the ``weak'' breakages $\calW=[p]\backslash\calS$ uniformly at random. This active repair strategy leverages the information that the heights of  bricks have the same distribution, which is known as the \emph{homogeneous} setting, and focuses on the breakages that are below the expected height. We apply this idea in an iterative fashion to the graphical model selection problem at hand where the bricks correspond to samples and the breakages correspond to edges. However, in our problem, there is an additional consideration that the structure is unknown and has to be estimated. In this flooding problem, the structure is ``linear'' and known entirely.

\subsection{Related Works}
The seminal work of Chow and Liu\cite{chow1968approximating} initiated the study of   structure learning of the graphical models. The authors showed that the \ac{mst} of the weighted  graph with weights being the  mutual information quantities is the maximum likelihood estimate of the underlying tree structure. Chow and Wagner  \cite{chow1973consistency} then proved the consistency of the \ac{cl} algorithm. Using large deviation techniques \cite{zeitouni1998large}, Tan, Anandkumar, Tong, and Willsky\cite{tan2011large} studied the exponential rate of the decay of the error probability of the \ac{cl} algorithm. Building on this result, Tandon, Tan, and Zhu\cite{tandon2020exact} derived a closed-form expression of the exponent of the error probability of the \ac{cl} algorithm on homogeneous Ising trees and provided (exact asymptotic/strong large deviations-based) refinements to the prefactors in the exponent. Bhattacharyya {\em et al.}~\cite{bha21} considered near-optimal distribution learning using the \ac{cl} algorithm. Variants of the \ac{cl} algorithm have also attracted extensive attention. The Chow--Liu initialization designed by Choi, Tan, Anandkumar, and Willsky \cite{choi2011learning} was adopted to initialize the latent tree structure learning algorithm and this was recently exploited by Zhang and Tan~\cite{zhang21} to handle arbitrary corruptions in learning Gaussian latent trees.  Bresler and Karzand~\cite{bresler2020learning} learned tree models using \ac{cl} to make predictions. Finally,  Boix-Adsera, Bresler, and Koehler\cite{boix2021chow} considered the tree structure learning problem for performing downstream inference  and prediction tasks optimally.

In the past decade, active learning has been extensively studied\cite{hanneke2009theoretical,naghshvar2013active}. For Bayesian networks,  the structure learning problem has been well studied in \cite{tong2001active, he2008active, li2009active, squires2020active, ben2021active} with different assumptions and performance metrics. In contrast, the active structure learning problem of undirected graphical models has not been thoroughly studied. Vats, Nowak, and Baraniuk\cite{vats2014active} first designed an active structure learning algorithm based on junction tree representations for discrete alphabets. The \ac{ampl} algorithm was then designed to learn the structure of Gaussian graphical models in an active fashion~\cite{dasarathy2016active}. Impossibility results of active structure learning of Ising models and Gaussian models were derived by Scarlett and Cevher~\cite{scarlett2017lower}. These results  show  that the lower bounds of the sample complexity of the active structure learning have the same dependence as that of   passive structure learning in the number of nodes $p$ and the maximal degree $d$ when $p,d\rightarrow\infty$. However, the constants in \cite{scarlett2017lower}, corresponding to our error exponent here,  were not provided explicitly. 

\subsection{Paper Outline}
The rest of this paper is structured as follows. In Section~\ref{sec:prelim}, we state the notation used, mathematical preliminaries, and the problem statement. In Section~\ref{sec:structure}, we introduce the passive structure learning algorithm  and outline the main ideas in the active structure learning algorithm. The main result, Theorem~\ref{thm:mainthem},  is also stated formally in Section~\ref{sec:structure}.  In Section~\ref{sec:sim}, we present simulation results to corroborate Theorem~\ref{thm:mainthem} and show that active learning significantly outperforms the the passive Chow--Liu algorithm even for {\em homogeneous} Ising trees. The proof of Theorem~\ref{thm:mainthem} is outlined in Section~\ref{sec:pfview}, and the details of certain lemmas are relegated to the appendices in Section~\ref{sec:app}. The paper is concluded in Section~\ref{sec:conclusion} where we also suggest directions for future work. 

\section{Preliminaries and problem statement}\label{sec:prelim}
We denote the set $\{1,\ldots,n\}$ as $[n]$. We use san-serif letters $x$, boldface letters $\bx$, uppercase letters~$X$, and bold uppercase letters $\bX$ 
to denote scalars, vectors, random 
variables,  and random vectors respectively. 
The notation $X^{(m)}$ represents the $m^{\mathrm{th}}$ sample of $X$. The unique path connecting nodes $i$ and $j$ in the tree $\calT$ is denoted as ${\rm Path}_{\calT}(i,j)$. The {\em KL-divergence}   between two distributions $P$ and $Q$ with support $\calX$ is 
$D(P\|Q)\triangleq\sum_{x\in\calX}P(x)\log\frac{P(x)}{Q(x)}$. 
The {\em binary KL-divergence}  is denoted as  $D(a\|b)\triangleq D(\mathrm{Bern}(a) \| \mathrm{Bern}(b)) = a\log (a/b)+(1-a)\log( (1-a)/(1-b))$ for any $a,b\in [0,1]$. 

An undirected graphical model is a multivariate probability distribution that factorizes to an undirected graph\cite{lauritzen1996graphical}. More precisely, a $p$-dimensional random vector $\bX=(X_{1},X_{2},\ldots,X_{p})$ is \emph{Markov} on an undirected graph $\calG=(\calV,\calE)$, where $\calV=[p]\triangleq\{1,\ldots,p\}$ is the vertex set and $\calE\subseteq\tbinom{\calV}{2}$ is the edge set, if $\Pr(X_{i} \mid X_{\calV\backslash \{i\}})=\Pr(X_{i} \mid X_{\calN(i)})$ for all $i\in[p]$, where $\calN(i)\triangleq\{j\in\calV \mid \{i,j\}\in\calE\}$ is the neighborhood of node $i$. The degree of  node $i$ is $d (i)\triangleq |\calN(i)|$, and the maximal degree of the graph is $d=\max_{i\in\calV} d (i)$. We focus on   tree-structured graphical models, which are   multivariate distributions that are Markov on  acyclic and connected  graphs.
\subsection{System Model}
We consider the binary vector $\bX=(X_{1},X_2,\ldots, X_p)\in\{+1,-1\}^p$, which is also called an \emph{Ising} model. The random vector $\bX$ is Markov on a tree $\calG=(\calV,\calE)$ and, in this paper, we focus on models in which each random variable has zero mean and the interactions between the nodes that are connected by an edge are   identical. This is also known as a {\em homogeneous Ising model   with zero external field}\cite{daskalakis2019testing,tandon2021sga}. The probability mass function can be expressed as
\begin{align}
	P_\lambda(\bx)=\frac{1}{Z(\lambda)}\exp\bigg(\lambda\sum_{ \{i,j\}\in\calE} x_{i}x_{j}\bigg), \label{eqn:ising}
\end{align}
where $\lambda\in\bbR$ is the parameter that describes the relationship  of adjacent nodes and $Z(\lambda)$ is the partition function. For the model in \eqref{eqn:ising}, we have that for all $\{i,j\}\in\calE$, 
\begin{align}
	&\Pr(X_{i}=+1,X_{j}=+1)=\Pr(X_{i}=-1,X_{j}=-1)=\frac{e^{\lambda}}{2(e^{\lambda}+e^{-\lambda})}\nonumber\\*
	&\Pr(X_{i}=+1,X_{j}=-1)=\Pr(X_{i}=-1,X_{j}=+1)=\frac{e^{-\lambda}}{2(e^{\lambda}+e^{-\lambda})}\triangleq\frac{\theta}{2}.
\end{align}
The correlation of adjacent nodes  (that form an edge)
is
\begin{equation}
\rho_{ij}\triangleq\bbE_{P_\lambda}[X_{i}X_{j}]=\frac{e^{\lambda}-e^{-\lambda}}{(e^{\lambda}+e^{-\lambda})}=1-2\theta\triangleq\rho    
\end{equation}
 for all $\{i,j\}\in\calE$. In this work, we consider the structure learning problem of homogeneous Ising tree models with zero external field parameterized by $\rho\in(0,1)$, i.e., $\theta\in(0,\frac{1}{2})$. We further assume that the underlying graph satisfies the 
following mild assumption.
\begin{assumption}[Small degree relative to graph size]\label{assump:smalldeg}
	The number of vertices $p$ and the maximal degree $d$ satisfy $p\geq 82d$.
\end{assumption}
Intuitively, the reason why this assumption is needed is because in \ac{alathe}, we first estimate the unknown correlation $\rho$ (see~\eqref{eq:rhohat}), similarly to estimating the height of the bricks  $H_i$ in  our  flood-and-breakage-repair example in Section~\ref{sec:work}. The number of nodes $p$, and hence, the number of edges $p-1$, should be sufficiently large so that the estimate of $\rho$  is statistically reliable.

\subsection{Problem Statement}
In this paper, we study the structure learning problem of the homogeneous Ising tree $\calT$ given the independent samples drawn from a distribution $P$ Markov on $\calT$, i.e, we are interested in recovering the edge set $\calE(\calT)$. Different from the 
previous works \cite{chow1968approximating,tan2011large}, we focus on the active learning setting. In general, as stated in Algorithm~\ref{algo:generalal} in Section~\ref{sec:appgenalgo}, the active learning algorithm is designed to be an iterative algorithm. In each stage, it acquires $n_{i}$ independent samples of a subset 
of vertices $S_{i}\subseteq [p]$ and learns the structure of the graph with all the samples that have been acquired before. The selection of the vertices set $S_{i}$ to sample in each stage depends on all the 
previous samples $\bX_{S_{j}}$ for $j=1,\ldots,i-1$. The passive learning algorithm can be regarded as a one-stage active learning algorithm for which $S_{1}=[p]$ and $n_{1}=n$.

In this paper, we focus on the \emph{asymptotic} behavior  of algorithms with   increasing number of vector samples. A vector sample of a graphical model $\calG$ is a vector that contains scalar samples from  all the vertices in $\calG$, while a 
sub-vector sample is a vector containing samples corresponding to a strict subset of the vertices in $\calG$. 
\begin{definition}[Exponent]
	The exponent of a sequence of events $\calA=\{\calA(n)\}_{n\in\bbN}$ (where $n$ denotes the number of vector samples) is defined as
	\begin{align}
		K(\calA)\triangleq \varliminf_{n\rightarrow \infty}-\frac{1}{n}\log\Pr\big(\calA(n)\big).
	\end{align}
\end{definition}
The exponent $K(\calA)$ characterizes the asymptotic behavior of the exponential decay of the sequence of  probabilities of the events $\{\calA (n)\}_{n\in\bbN}$. Roughly speaking, 
$\Pr(\calA(n))\approx \exp (-n K(\calA) )$, 
so the larger the exponent $K(\calA)$ is, the faster the   probability decays to $0$.
For the structure estimate $\hatcalT(n)$ of the homogeneous Ising tree $\calT$ with $n$ vector samples, our work focuses on the behavior of the exponent of the sequence of events $\{\hatcalT(n)\neq \calT\}_{n\in\bbN}$. However, the exponent of $\{\hatcalT(n)\neq \calT\}_{n\in\bbN}$ is intuitively difficult to improve with active learning techniques, because every edge has the same correlation $\rho$, and so each part of the tree is equally good or bad. This   symmetry seems to preclude the possibility of improving the error exponent by focusing on any   particular subset of nodes $S_{i} \subseteq[p]$. We show, somewhat surprisingly,
that by judiciously exploiting the minute but detectable statistical variations of the samples at
different nodes, our active learning algorithm can improve the error exponent for learning homogeneous trees by at least $40\%$
when the correlations are at least $0.8$. For all other values of the correlation coefficient, we also
observe commensurate, but more modest, improvements in the error exponent.

\section{Structure Learning Algorithms and Main Result} \label{sec:structure}
In this section, we describe both the passive (\ac{cl}-based) and active tree structure learning algorithms  (\ac{alathe}) and state the main result of our paper.
\subsection{Passive Structure Learning Algorithm}\label{sec:pasivlearn}
The \ac{cl} algorithm is a classical tree structure learning algorithm\cite{chow1968approximating}. It finds the \ac{mst} of the graph whose weights are the empirical estimates of the mutual information between vertices, i.e.,
\begin{align}
	\hatcalT_{\rm{CL}}(n)=\argmax_{\calT\in\frakT^{p}} \sum_{\{i,j\}\in\calE(\calT)} \hatI(X_{i};X_{j}),
\end{align}
where $\frakT^{p}$ is the set of all the trees with $p$ vertices, and $\hatI(X_{i};X_{j})$ is the empirical estimate of the mutual information between $X_{i}$ and $X_{j}$ computed using $n$ independent samples. The \ac{cl} algorithm takes $n$ vector samples (i.e., $np$ 
scalar samples) as the inputs and acquires the samples of all the vertices at once. Using the terminology in this paper, the \ac{cl} algorithm is a   \emph{passive} learning algorithm. To avoid the analyzing the empirical estimate 
of the mutual information which depends nonlinearly on the empirical distribution, we note that the mutual information between any two vertices $I(X_{i};X_{j})$ is an increasing function of the correlation $0<\rho_{ij}<1$ for Ising trees with zero external field. Thus, instead of taking the mutual 
information as the weights of edges, the \ac{scl} algorithm takes the empirical {\em correlation} estimates  $\hrho_{ij}(m) \triangleq \sum_{\ell=1}^{m}X_{i}^{(\ell)}X_{j}^{(\ell)}/ m$ where $m$ is the number of used samples, as the weights of the edges, i.e.,
\begin{align}\label{eq:cl}
	\hatcalT_{\Passive}(n)\triangleq\hatcalT_{\rm{SCL}}(n)=\argmax_{\calT\in\frakT^{p}} \sum_{e\in\calE(\calT)} \hrho_{e}(n),
\end{align}
where the subscript in $\hatcalT_{\rm{SCL}}$ stands for the \ac{scl} algorithm. The relationship between $\hatcalT_{\rm{SCL}}(n)$ and $\hatcalT_{\rm{CL}}(n)$ was  stated in  \cite[Lemma~A.2]{bresler2020learning}. We restate it here  for ease of reference.
\begin{proposition} \label{prop:SCL}
	For the homogeneous Ising tree-structured graphical model with zero external field  parameterized by $0<\rho<1$, $\hatcalT_{\rm{SCL}}(n)=\hatcalT_{\rm{CL}}(n)$ for all $n\in\bbN$. 
\end{proposition}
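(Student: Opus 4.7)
My plan is to show that the CL and SCL algorithms output identical trees by identifying both as maximum-weight spanning tree (MWST) problems and verifying that their respective edge weights induce the same ranking on vertex pairs. Since any MWST algorithm (e.g.\ Kruskal's) depends only on the relative ordering of the edge weights (with consistent tie-breaking), this equivalence suffices.

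The first step is to compute the partition function of a homogeneous Ising tree. Because a tree with $p$ vertices has exactly $p-1$ edges and the potentials $e^{\lambda x_i x_j}$ take only two values on $\{\pm 1\}^2$, repeated leaf-elimination along the tree, which uses the identity $\sum_{x\in\{\pm 1\}} e^{\lambda x y} = e^\lambda + e^{-\lambda}$, yields
\begin{equation*}
Z(\lambda) \;=\; 2(e^\lambda + e^{-\lambda})^{p-1},
\end{equation*}
which is structure-independent. Substituting into the log-likelihood of $n$ i.i.d.\ samples from $P_\lambda$ on $\calT$ gives
\begin{equation*}
L(\calT,\lambda) \;=\; n\lambda \sum_{e\in \calE(\calT)} \hrho_{e}(n) \;-\; n(p-1)\log(e^\lambda+e^{-\lambda}) \;-\; n\log 2,
\end{equation*}
which for any fixed $\lambda>0$ is maximized in $\calT$ by the MWST with edge weights $\hrho_{e}(n)$. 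This is precisely the SCL output $\hatcalT_{\rm{SCL}}(n)$.

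Next, I would invoke the classical Chow--Liu theorem to interpret $\hatcalT_{\rm{CL}}(n)$ as the MLE over the (richer) class of all tree-structured distributions on $\{\pm 1\}^p$. For binary data with balanced empirical marginals $(1/2, 1/2)$, a direct computation yields the closed form $\hat{I}(X_i;X_j) = \log 2 - H_b\bigl((1+\hrho_{ij}(n))/2\bigr)$, which is strictly increasing in $\hrho_{ij}(n)$ on $[0,1]$; on this balanced slice the CL and SCL edge-rankings coincide, so the MWSTs are identical.

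The main obstacle is that the empirical marginals need not be exactly $1/2$, so $\hat I$ is not in general a pure function of $\hrho$, and the pointwise monotonicity-in-$\hrho$ argument (for fixed marginals) does not immediately yield the same global ranking under $\hat I$ as under $\hrho$. I would close this gap via the likelihood identity derived above: since the homogeneous Ising MLE over $(\calT,\lambda)$ attains the SCL objective for every $\lambda>0$, and since the extra per-edge degrees of freedom in the general tree class can be parameterized so as to match the empirical marginals on each edge exactly without affecting which tree maximizes the correlation sum, the maximizers of the two likelihood criteria must coincide on the edge set. Combining these observations yields $\hatcalT_{\rm{SCL}}(n)=\hatcalT_{\rm{CL}}(n)$ for every $n\in\bbN$.
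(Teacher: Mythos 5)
Your setup is correct and matches the standard interpretation: the partition function $Z(\lambda)=2(e^{\lambda}+e^{-\lambda})^{p-1}$ is structure-independent, the profile log-likelihood of the homogeneous class is $n\lambda\sum_{e\in\calE(\calT)}\hrho_{e}(n)-n(p-1)\log(e^{\lambda}+e^{-\lambda})-n\log 2$, so $\hatcalT_{\rm{SCL}}(n)$ is the ML structure within that class, and on the balanced-marginal slice $\hat I(X_i;X_j)=\log 2-\Hb\big((1+\hrho_{ij}(n))/2\big)$ is increasing in $\hrho_{ij}(n)$. The problem is your final paragraph, which is where the entire content of the proposition lives. Once you ``match the empirical marginals on each edge''---i.e., take the I-projection of the empirical distribution onto the full tree class, which is what the Chow--Liu MLE does---the per-edge contribution to the likelihood becomes exactly $\hat I(X_i;X_j)$, which depends on the empirical node marginals and is \emph{not} a monotone function of $\hrho_{ij}$ across pairs whose marginals differ. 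The assertion that this reparameterization happens ``without affecting which tree maximizes the correlation sum'' is precisely the claim that needs proof, and the pairwise rankings can genuinely disagree: with empirical marginals $\hat P(X_i=1)=\hat P(X_j=1)=1/2$ and $\hat P(X_k=1)=0.7$, one can have $\hrho_{ij}=0.65>\hrho_{ik}=0.6$ while $\hat I(X_i;X_j)\approx 0.229<\hat I(X_i;X_k)\approx 0.274$ (in nats). In general, the fact that each estimator is an MLE within its own model class (all tree distributions versus homogeneous zero-field Ising trees) does not imply the two argmaxes over trees coincide, so the concluding step is a genuine gap rather than a routine verification.

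For comparison, the paper does not prove this proposition at all: it imports it from Lemma A.2 of Bresler and Karzand, and the sketch the authors had in mind is the same ``both are MLEs'' route you took, with the crux being exactly the step you asserted. A self-contained proof would have to work directly with the two weight functions $\hat I_e$ and $\hrho_e$ and show that their maximum-weight spanning trees coincide for every empirical distribution that can arise from samples, which is the point your argument assumes rather than establishes.
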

Since the homogeneous Ising tree $\calT$ is parameterized by (a single parameter) $\rho$, the exponent of the error events $\{\hatcalT_{\Passive}(n)\neq \calT\}_{n\in \bbN}$, i.e., the passive learning algorithm in Eqn.~\eqref{eq:cl} learns the wrong structure, can be expressed as
\begin{align}\label{eq:passiveee}
	K _{\Passive}(\calT,\rho)\triangleq \varliminf_{n\rightarrow \infty}-\frac{1}{n}\log \Pr \big(\hatcalT_{\Passive}(n)\neq \calT\big).
\end{align}
For homogeneous Ising trees, the passive error exponent $K _{\Passive}(\calT,\rho)$ can be expressed as an  explicit  function of $\rho$.
\begin{proposition}[Tandon, Tan, and Zhu \cite{tandon2020exact}]
	For the homogeneous Ising tree with zero external field $\calT$ parameterized by $0<\rho<1$, the error exponent of the \ac{scl} algorithm can be expressed as
	\begin{align}
		K _{\Passive}(\calT,\rho)=-\log\bigg(1-\frac{1-\rho}{2}\big(1-\sqrt{1-\rho^{2}}\big)\bigg).
	\end{align} 
\end{proposition}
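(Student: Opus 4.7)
The plan is to reduce the global error event $\{\hatcalT_{\Passive}(n)\neq\calT\}$ to a finite union of pairwise ``crossover'' events and then identify the dominant one, which controls the exponent. Recall that the SCL algorithm outputs a maximum-weight spanning tree with weights $\hrho_{ij}(n)$. By a standard Kruskal-style argument (of the type used by Tan--Anandkumar--Tong--Willsky to derive the \ac{cl} error exponent), any wrong output must contain a non-edge $e'=\{i,j\}\notin\calE(\calT)$ whose empirical correlation exceeds that of some true edge $e=\{k,l\}$ lying on ${\rm Path}_{\calT}(i,j)$. Applying the union bound and a matching lower bound (Laplace principle / method of types) shows
\begin{equation*}
K_{\Passive}(\calT,\rho)=\min_{e'\notin\calE(\calT)}\;\min_{e\in{\rm Path}_{\calT}(i,j)}\;\Big(\!\!-\varlimsup_{n\to\infty}\tfrac{1}{n}\log\Pr\!\big(\hrho_{e'}(n)\ge\hrho_e(n)\big)\Big).
\end{equation*}

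Because the model is homogeneous and Markov on a tree with zero external field, $\bbE[X_iX_j]=\rho^{|{\rm Path}_{\calT}(i,j)|}$. Each pairwise crossover exponent is non-increasing in the path length $|{\rm Path}_{\calT}(i,j)|$, so the minimization is achieved by a non-edge whose endpoints are exactly two hops apart, say $i$--$k$--$j$, paired with either of the two true edges on that length-two path. Next I would reduce the crossover event to a sum of i.i.d.\ variables: setting $Y_1^{(\ell)}=X_i^{(\ell)}X_k^{(\ell)}$, $Y_2^{(\ell)}=X_k^{(\ell)}X_j^{(\ell)}$, and $W^{(\ell)}=Y_1^{(\ell)}Y_2^{(\ell)}-Y_1^{(\ell)}=Y_1^{(\ell)}(Y_2^{(\ell)}-1)$, the event $\hrho_{e'}(n)\ge\hrho_e(n)$ becomes $\sum_\ell W^{(\ell)}\ge 0$. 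A direct computation using the tree factorization $\Pr(X_i,X_k,X_j)=\Pr(X_k)\Pr(X_i\mid X_k)\Pr(X_j\mid X_k)$ gives the three-point distribution
\begin{equation*}
\Pr(W=-2)=\tfrac{(1-\rho)(1+\rho)}{4},\qquad \Pr(W=0)=\tfrac{1+\rho}{2},\qquad \Pr(W=2)=\tfrac{(1-\rho)^2}{4},
\end{equation*}
with negative mean $\bbE[W]=-\rho(1-\rho)$.

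The final step is Cramér's theorem: the exponent of $\Pr(\sum_\ell W^{(\ell)}\ge0)$ equals $-\log\inf_{t\in\bbR}\bbE[e^{tW}]$. Writing $u=e^{2t}>0$ gives the scalar optimization
\begin{equation*}
\inf_{u>0}\Big[\tfrac{1+\rho}{2}+\tfrac{(1-\rho)(1+\rho)}{4u}+\tfrac{(1-\rho)^2 u}{4}\Big],
\end{equation*}
whose first-order condition yields $u^{\star}=\sqrt{(1+\rho)/(1-\rho)}$. Substituting back, the two $u$-dependent terms collapse to a common value and combine into $\tfrac{1}{2}(1-\rho)\sqrt{1-\rho^{2}}$, so the infimum equals $\tfrac{1+\rho}{2}+\tfrac{1}{2}(1-\rho)\sqrt{1-\rho^{2}}$. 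A short algebraic rearrangement shows that this equals $1-\tfrac{1-\rho}{2}\bigl(1-\sqrt{1-\rho^{2}}\bigr)$, and taking $-\log$ produces the stated formula. The main technical obstacle is not the Chernoff calculation itself (which is elementary once $W$ is identified) but rather the first step of reducing the global tree-recovery error to the path-length-two pairwise crossover; this requires carefully arguing, via monotonicity of the crossover exponent in path length, that longer non-edges and non-adjacent edge comparisons do not dominate, so that the minimization over $(e,e')$ collapses to a single universal quantity depending only on $\rho$.
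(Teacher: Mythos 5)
The paper does not actually prove this proposition---it is imported wholesale from \cite{tandon2020exact}---so there is no internal proof to compare against line by line. That said, your derivation is correct and reconstructs exactly the machinery the paper leans on elsewhere: the crossover decomposition with the dominant error being a two-hop comparison, the independence of the edge products $X_iX_k$ and $X_kX_j$ (Lemma~\ref{lem:indeppair}), and the Chernoff/Cram\'er computation, whose result matches the form $-\log\big(1-\ttheta_{1}(1-\sqrt{4\theta(1-\theta)})\big)$ with $\theta=\ttheta_1=\frac{1-\rho}{2}$ that the paper quotes from Lemma~1 of \cite{tandon2020exact} inside the proof of Lemma~\ref{lem:mono}. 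I checked your three-point law for $W$, the stationary point $u^{\star}=\sqrt{(1+\rho)/(1-\rho)}$, and the final algebra $\tfrac{1+\rho}{2}+\tfrac{1}{2}(1-\rho)\sqrt{1-\rho^{2}}=1-\tfrac{1-\rho}{2}\big(1-\sqrt{1-\rho^{2}}\big)$; all are correct.

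One slip you should fix: you justify collapsing the minimization to path length two by asserting that the pairwise crossover exponent is \emph{non-increasing} in the path length. That is backwards. A longer path means the non-edge correlation $\rho^{t}$ is smaller, so the crossover probability is \emph{smaller} and the exponent is \emph{larger}; i.e., the exponent is strictly increasing in $t$, which is precisely what the paper's Lemma~\ref{lem:mono} proves via the recursion $\ttheta_{k}=(1-2\theta)\ttheta_{k-1}+\theta$. As written, your monotonicity claim would force the minimum onto the longest path, contradicting your own conclusion. The conclusion (the $t=2$ crossover dominates) is correct, but the stated direction of monotonicity must be reversed for the argument to hold together.
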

\begin{remark}
	The error exponent for the passive learning algorithm on homogeneous Ising models does not depend on the tree structure $\calT$, but only depends on the edge 
	correlation \cite{tandon2020exact}. This is because the number of nodes is finite (and not growing) in our setting. In the following, we will denote $K _{\Passive}(\calT,\rho)$ as $K _{\Passive}(\rho)$ to suppress its dependence on the underlying structure~$\calT$.
\end{remark}

\begin{figure}[t] 
	\centering
	\subfigure[Values of $K _{\Passive}(\rho)$ for different $\rho$.]{
	\begin{minipage}[t]{0.45\linewidth}
	\centering
	\includegraphics[width=3in]{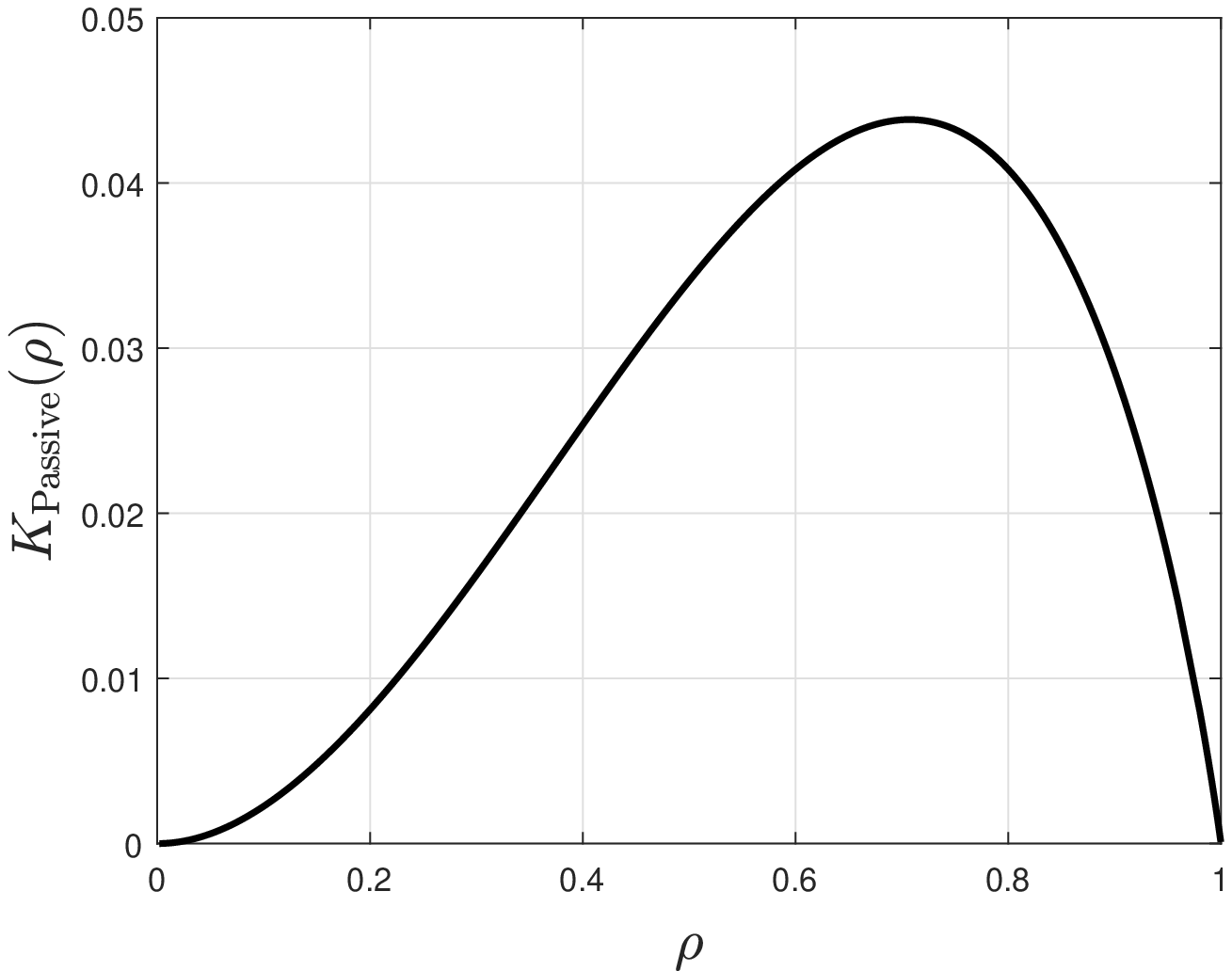}
	\end{minipage}%
	\label{fig:passiveee}
	}%
	\subfigure[Ratios of $K_{t}(\calT,\rho)$ and $K_{2}(\calT,\rho)$ for $t=3,4,5$.]{
	\begin{minipage}[t]{0.45\linewidth}
	\centering
	\includegraphics[width=3in]{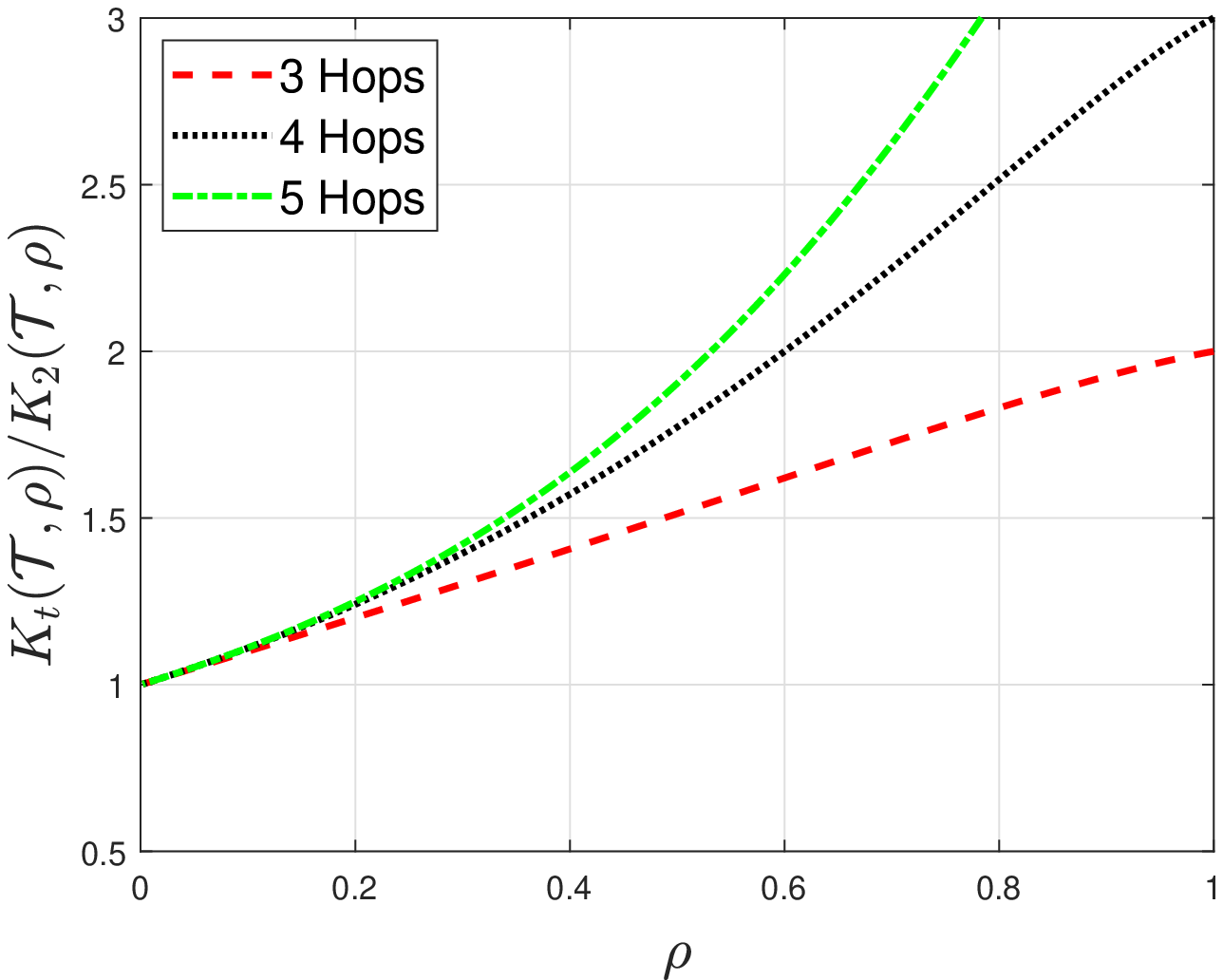}
	\end{minipage}%
	\label{fig:thopexp}
	}%
	\centering
	\caption{The values of $K _{\Passive}(\rho)$ and the ratios between $K_{t}(\calT,\rho)$ and $K_{2}(\calT,\rho)$.}
\end{figure}
	
Fig.~\ref{fig:passiveee} plots the values of $K _{\Passive}(\rho)$ for different values of $\rho$. When $\rho$ 
is close to $1$, the  exponent $K_{\Passive}(\rho)$ is small, which  implies that the error probability of the passive learning algorithm decreases very slowly with an increasing number of i.i.d.\ vector samples. In the following, we present an active learning algorithm \ac{alathe} that mitigates this problem.   \ac{alathe} significantly improves
the error exponent for large values of $\rho$. In particular, for $\rho\in[0.8,1)$, the error exponent of  \ac{alathe} will be shown to be at least $1.4$ times that of $K _{\Passive}(\rho)$.

\subsection{Active Structure Learning Algorithm}
 \ac{alathe}, whose pseudo-code is shown in Algorithm~\ref{algo:alathe} (the full implementation is provided at   \textcolor{blue}{\href{https://github.com/Zhang-Fengzhuo/Active-LATHE}{this    GitHub link}}), has two phases---the global learning  and  local refinement phases. In the following subsections, we describe these phases in detail.
\subsubsection{Global learning phase}
The global learning phase is designed to be iterative up to a stopping time. As indicated in Algorithm~\ref{algo:alathe}, in the $i^{{\rm th}}$ iteration of the global learning phase, we first implement the \ac{scl} algorithm 
with $\halpha_{i}n$ vector samples, where we set $\halpha_{1}=0.8$ for the first iteration; this is the minimum   of $\alpha$ shown in Table~\ref{table:alpha}, i.e., 
\begin{align}\label{eq:clvariant}
	\hatcalT_{\rm{SCL}}(\halpha_{i}n)=\argmax_{\calT\in\frakT^{p}} \sum_{e\in\calE(\calT)} \hrho_{e}(\halpha_{i}n).
\end{align}
With the \ac{cl} structure estimate, we then estimate the correlation $\rho$ of adjacent vertices as
\begin{align}\label{eq:rhohat}
	\hrho(\halpha_{i}n)=\frac{1}{p-1}\sum_{e\in\calE(\hatcalT_{\rm{SCL}}(\halpha_{i}n) )}\hrho_{e}(\halpha_{i}n) .
\end{align}
The estimate $\hrho(\halpha_{i}n)$ is used to find the corresponding $\halpha_{i+1} \triangleq \halpha\big(\hrho(\halpha_{i}n)\big)$ according to Table~\ref{table:alpha}. If $\halpha_{i+1}\leq \halpha_{i}$, we will stop and move on to the local refinement phase. Otherwise, we go to the next 
iteration. Note that   the $\halpha_{i}$'s are  random variables that are  functions of all the acquired samples. We denote the $\halpha_{i-1}$ in the last iteration as $\halpha$. As Table~\ref{table:alpha} suggests, this iterative design of the global learning phase aims to use more vector samples to learn the global structure of the model when $\rho$ is small. 
The intuition is that when $\rho$ is close to 0, all the vertices are close to being independent, and all the $t$-hop errors defined in Section~\ref{sec:pfpre} are close to one another (as shown in Fig.~\ref{fig:thopexp}). Hence, we need to collect more vector samples to resolve these $t$-hop errors.

\subsubsection{Local refinement phase}
In the local refinement phase,  we first detect the confident part of $\hatcalT_{\rm{SCL}}(\halpha_{i}n)$ using the correlation decay property \cite{nikolakakis2021predictive}.  
For a three-node subtree $i-j-k$ in a homogeneous tree with $0<\rho<1$, we have that $\rho_{ik}=\rho_{ij}\rho_{jk}$, which follows from the correlation decay property. Since $\rho_{ij}=\rho_{jk}=\rho$ and $0<\rho<1$, we have that
\begin{align}
	\rho_{ik}\leq \rho_{ij}\cdot\frac{11+9\rho}{20}  \quad\mbox{and}\quad \rho_{ik}\leq \rho_{jk}\cdot\frac{ 11+9\rho}{20}.
\end{align}
The somewhat arcane constants $11$, $9$, and $20$ have been lightly optimized to maximize the active error exponent. In general, for any three nodes $i$, $j$ and $k$ of the graph, we define the \emph{$(i,j,k)$-confident event} 
based on correlation estimates with $m$ vector samples as
\begin{align}\label{eq:confevent}
	\calC_{ijk}(m)\triangleq\Big\{\hrho_{ik}(m)\leq \hrho_{ij}(m)\cdot\frac{11+9\hrho(m)}{20}\Big\}\bigcap\Big\{\hrho_{ik}(m)\leq \hrho_{jk}(m)\cdot\frac{11+9\hrho(m)}{20}\Big\}.
\end{align}
When the number of samples $m$ is clear from the context, we abbreviate   $\hrho_{ij}(m)$ and $\calC_{ijk}(m)$ as $\hrho_{ij}$ and $\calC_{ijk}$ respectively.

\begin{algorithm}[t]
	\caption{Active Learning Algorithm for Trees with Homogeneous Edges}
	\textbf{Input:}  The number of nodes $p$, the number of vector samples $n$\\
	\textbf{Output:} Estimated tree structure $\hatcalT_{\Active}(n)$ \\
	\textbf{Procedure:}
	\begin{algorithmic}[1]\label{algo:alathe}
		\STATE \# Global Learning Phase\\
		\STATE Initialize $i=1$, $\halpha_{0}=0$, $\halpha_{1}=0.8$.\\
		\WHILE{$\halpha_{i}>\halpha_{i-1}$}
			\STATE Acquire $\lfloor\halpha_{i}n\rfloor-\lfloor\halpha_{i-1}n\rfloor$ vector samples of all the nodes.\\
			\STATE Learn the \ac{cl} tree $\hatcalT_{\rm{SCL}}(\lfloor\halpha_{i}n\rfloor)$ as in Eqn.~\eqref{eq:clvariant}.\\
			\STATE Calculate $\hrho(\lfloor\halpha_{i}n\rfloor)$ as in Eqn.~\eqref{eq:rhohat}.\\
			\STATE Set $\halpha_{i+1}$ according to $\hrho(\lfloor\halpha_{i}n\rfloor)$ and Table~\ref{table:alpha}.\\
			\STATE $i=i+1$.\\
		\ENDWHILE
		\STATE Set $\halpha=\halpha_{i-1}$. \\
		\STATE \# Local Refinement Phase\\
		\STATE Set $\scC_{\mathrm{V}}(\lfloor\halpha_{i}n\rfloor)^{\complement}=\emptyset$ and $\scC_{\mathrm{E}}(\lfloor\halpha_{i}n\rfloor)^{\complement}=\emptyset$.\\
		\FOR {All node triples $(i,j,k)$ where $\{i,j\},\{j,k\}\in \calE\big(\hatcalT_{\rm{SCL}}(\lfloor\halpha_{i}n\rfloor)\big)$ }
			\IF {$\calC_{ijk}(\lfloor\halpha_{i}n\rfloor)$  (defined in Eqn.~\eqref{eq:confevent}) does not hold}
				\STATE Add $\{i,j\},\{j,k\}$ to $\scC_{\mathrm{E}}(\lfloor\halpha_{i}n\rfloor)^{\complement}$ and add nodes $i,j,k$ to $\scC_{\mathrm{V}}(\lfloor\halpha_{i}n\rfloor)^{\complement}$.
			\ENDIF
		\ENDFOR
		\STATE Acquire $\lfloor(1-\halpha)np/|\scC_{\mathrm{V}}(\lfloor\halpha_{i}n\rfloor)^{\complement}| \rfloor$  sub-vector samples corresponding only to the nodes in $\scC_{\mathrm{V}}(\lfloor\halpha_{i}n\rfloor)^{\complement}$.\\
		\STATE Implement the \ac{scl} algorithm to relearn the structure of connected components in the subgraph formed by $\scC_{\mathrm{V}}(\lfloor\halpha_{i}n\rfloor)^{\complement}$ and $\scC_{\mathrm{E}}(\lfloor\halpha_{i}n\rfloor)^{\complement}$ using all the sub-vector samples corresponding to each connected component.
		\STATE Replace every connected component with the relearnt structure.
		\STATE Return the derived $\hatcalT_{\Active}(n)$.
	\end{algorithmic}
\end{algorithm}
\begin{remark}
	The event $\calC_{ijk}$ implies that $\{i,k\}\notin\calE(\hatcalT_{\rm{SCL}})$. In fact, if $\calC_{ijk}$ holds, we are ``overconfident'' about the fact  that the subgraph $i-j-k$ among nodes $i$, $j$ and $k$ is correct, since the \ac{scl} algorithm only requires $\hrho_{ik}< \hrho_{ij}$ and $\hrho_{ik}< \hrho_{jk}$ to recover the correct underlying subtree $i-j-k$.
\end{remark}
Intuitively, if the confident event $\calC_{ijk}$ holds, $\hrho_{ik}$ is sufficiently smaller than $\hrho_{ij}$ and $\hrho_{jk}$. This condition suffices  for the \ac{scl} algorithm to find the correct subgraph among these three nodes. To describe the subroutines of the local refinement steps clearly, we define the confident edges, the unconfident edges and the unconfident nodes as follows.
\begin{table}[H]
	\normalsize
	\centering
	\begin{tabular}{|c|c|c|c|c|c|c|c|}
	\hline
	$\hrho$ & $(0,0.02)$ & $[0.02,0.07)$ & $[0.07,0.16)$ & $[0.16,0.34)$ & $[0.34,0.53)$ & $[0.53,0.76)$ & $[0.76,1)$\\
	\hline
	$\halpha(\hrho)$ & 1 & 0.995 & 0.985 & 0.95 & 0.9 & 0.85 & 0.8 \\
	\hline
	\end{tabular}
	\caption{Correlation coefficients $\hrho$ and their corresponding $\halpha(\hrho)$.}
	\label{table:alpha}
\end{table}
\begin{definition}[Confident edges, Unconfident edges and Unconfident nodes]\label{def:confide}
	For the \ac{cl} tree estimated with $m$ vector samples $\hatcalT_{\rm{SCL}}(m)$, the confident and unconfident edge sets are defined as
	\begin{align}
		\scC_{\mathrm{E}}(m)&\triangleq \Big\{\{i,j\}\in\calE\big(\hatcalT_{\rm{SCL}}(m)\big) ~\Big|~\forall\, \{j,k\},\{l,i\}\in\calE\big(\hatcalT_{\rm{SCL}}(m)\big),~\calC_{ijk}(m) \cap\calC_{lij}(m) \text{ hold}\Big\}
		\end{align}
		and $\scC_{\mathrm{E}}(m)^{\complement} \triangleq \calE\big(\hatcalT_{\rm{SCL}}(m)\big)\backslash \scC_{\mathrm{E}}(m)$,
	respectively. The edges in $\scC_{\mathrm{E}}(m)$ (resp.\ $\scC_{\mathrm{E}}(m)^{\complement}$) are called confident (resp.\ unconfident) edges. The set of unconfident nodes  is defined as
	\begin{align}
		\scC_{\mathrm{V}}(m)^{\complement}\triangleq \big\{i\in\calV \ \big| \ \exists\,  j\in\calV \text{ s.t. }\{i,j\}\in \scC_{\mathrm{E}}(m)^{\complement} \big\}.\nonumber
	\end{align}
\end{definition}
\begin{remark}
    The set of unconfident nodes can be equivalently defined as the set of all the terminal nodes of all the unconfident edges.
\end{remark}
We denote the number of unconfident nodes as $\tilp(m)\triangleq |\scC_{\mathrm{V}}(m)^{\complement}|$. Note that $\tilp(m)$  is a function of the number of the vector samples used to learn the tree  $m$. As shown in Algorithm~\ref{algo:alathe}, the local refinement phase first checks the occurrences of each of the events $\calC_{ijk}$ for all the triples $(i,j,k)$ such that 
$\{i,j\}, \{j,k\}\in \calE(\hatcalT_{\rm{SCL}}(\lfloor\halpha_{i}n\rfloor))$. Since the confident edges are correct with higher probability compared to the unconfident edges, we then only assign samples to the unconfident nodes $\scC_{\mathrm{V}}(\lfloor\halpha_{i}n\rfloor)^{\complement}$ and relearn the 
subgraph of these unconfident nodes. The number of scalar samples assigned to each unconfident node is about
\begin{align}
\bigg[\halpha+(1-\halpha)\frac{p}{\tilp(\lfloor\halpha_{i}n\rfloor)}\bigg]n\geq n,    
\end{align}
 so the probability of the event that the 
subgraph of the unconfident nodes is wrong is reduced compared to the passive learning algorithm. Furthermore, we will show (see Lemma~\ref{lem:packnum}) that 
\begin{align}
\bigg[\halpha+(1-\halpha)\frac{p}{\tilp(\lfloor\halpha_{i}n\rfloor)}\bigg]n\geq c_{\rho}n,   \label{eqn:unconf_samples}  
\end{align}
 where $c_{\rho}$ is the  piecewise constant  function  defined in Table~\ref{table:crho}. Eqn.~\eqref{eqn:unconf_samples} shows that the number of scalar samples assigned to the unconfident nodes is strictly  larger than that in the passive learning scenario.

\subsection{The Main Result} \label{sec:main_res}
With the homogeneous Ising tree $\calT$ parameterized by $\rho$, the exponent of the (sequence of) error events $\{\hatcalT_{\Active}(n)\neq \calT\}_{n\in\bbN}$ (this event corresponds to  \ac{alathe}  learning the wrong structure), can be expressed as
\begin{align}\label{eq:activeee}
	K _{\Active}(\calT,\rho)\triangleq \varliminf_{n\rightarrow \infty}-\frac{1}{n}\log \Pr\big(\hatcalT_{\Active}(n)\neq \calT\big).
\end{align}
Our main theorem is stated as follows.

\begin{table}[t]
	\normalsize
	\centering
	\begin{tabular}{|c|c|c|c|c|c|c|c|}
	\hline
	$\rho$ & $(0,0.03)$ & $[0.03,0.1)$ & $[0.1,0.2)$ & $[0.2,0.4)$ & $[0.4,0.6)$ & $[0.6,0.8)$ & $[0.8,1)$\\
	\hline
	$c_{\rho}$ & 1 & 1.01 & 1.03 & 1.08 & 1.19& 1.29 & 1.4 \\
	\hline
	\end{tabular}
	\caption{Correlation coefficients $\rho$ and their corresponding $c_{\rho}$.}
	\label{table:crho}
\end{table}

\begin{theorem}\label{thm:mainthem}
	For the homogeneous Ising tree with zero external field $\calT$ parameterized by $0<\rho<1$ that satisfies Assumption \ref{assump:smalldeg}, the error exponent of the \ac{alathe} algorithm is at least $c_{\rho}$ times the 
	error exponent of the passive learning algorithm $K _{\Passive}(\rho)$, i.e.,
	\begin{align}
		K _{\Active}(\calT,\rho)\geq c_{\rho}K _{\Passive}(\calT,\rho) \quad \forall\, \calT\in\frakT^{p},
	\end{align}
	where $c_{\rho}$ is the piecewise constant function   defined in Table~\ref{table:crho}.
\end{theorem}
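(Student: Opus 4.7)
The plan is to bound $\Pr(\hatcalT_{\Active}(n) \neq \calT)$ by decomposing it according to which phase of \ac{alathe} fails, and then to show that each piece decays at a rate no slower than $c_\rho K_{\Passive}(\rho)$. Concretely, I would introduce three bad events: (i) $\calE_\rho$, the event that the sample correlation $\hrho(\lfloor\halpha n\rfloor)$ falls outside the bin of Table~\ref{table:alpha} containing the true $\rho$ (so the wrong $\halpha$ is selected and the iteration stops at the wrong stage); (ii) $\calE_{\mathrm{conf}}$, the event that some triple $(i,j,k)$ with $\{i,j\},\{j,k\}\in\calE(\calT)$ is misclassified---either a correct edge-pair fails the $\calC_{ijk}$ test, or a spurious non-edge $\{i,k\}$ appears in a triple that is flagged as confident; and (iii) $\calE_{\mathrm{ref}}$, the event that, given the correct unconfident vertex set $\scC_{\mathrm V}^{\complement}$, the \ac{scl} re-run on the sub-vector samples produces an incorrect subtree on some connected component of the unconfident subgraph.

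First I would dispense with $\calE_\rho$. By Hoeffding's inequality applied to the $p-1$ empirical edge correlations averaged in \eqref{eq:rhohat}, $\hrho$ concentrates around $\rho$ with a Gaussian tail whose exponent scales like $\halpha n (p-1)$ times a gap depending on the distance of $\rho$ from bin boundaries; because the bins in Table~\ref{table:alpha} have positive width and $p-1$ is fixed but $\geq 81d$ by Assumption~\ref{assump:smalldeg}, this probability decays faster than $\exp(-n K_{\Passive}(\rho))$. The conditional analysis given the correct $\halpha$ is then straightforward.

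Second, I would analyze $\calE_{\mathrm{conf}}$ using Proposition~1's large-deviation machinery applied to the events $\calC_{ijk}$. For a true length-two path $i$–$j$–$k$ in $\calT$ we have $\rho_{ik}=\rho^{2}<\rho\cdot\frac{11+9\rho}{20}$, so the event $\calC_{ijk}^{\complement}$ is a large-deviation event for $(\hrho_{ij},\hrho_{jk},\hrho_{ik})$ whose exponent I can bound below by $K_{\Passive}(\rho)$ (indeed the inequality $\rho^{2}<\rho\cdot\frac{11+9\rho}{20}$ gives a strictly larger gap than $\rho^{2}<\rho$ which corresponds to the passive exponent). A union bound over the $O(p)$ triples preserves the exponent since $p$ is constant. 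This classification step is performed using $\halpha n$ samples, so the contribution is at most $\exp(-\halpha n K_{\Passive}(\rho))$ up to subexponential factors; with $\halpha$ chosen according to Table~\ref{table:alpha}, the product $\halpha c_\rho^{-1}\geq 1$ for every row which I would verify row by row against Table~\ref{table:crho}.

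The main obstacle, and hence the key lemma, is bounding the number of unconfident nodes $\tilp(\lfloor\halpha n\rfloor)$ with high probability so that the effective per-node sample count $\bigl[\halpha+(1-\halpha)p/\tilp\bigr]n$ in \eqref{eqn:unconf_samples} is at least $c_\rho n$. I would prove this as Lemma~\ref{lem:packnum} (assumed in the text) by (a) showing $\Pr(\text{edge }e\in\scC_{\mathrm E}^{\complement})\leq \exp(-\halpha n K_{\Passive}(\rho)(1+\delta))$ via the same large-deviation argument used for $\calE_{\mathrm{conf}}$, (b) invoking a Markov/Bernstein bound on $\tilp=|\scC_{\mathrm V}^{\complement}|$ (a sum of indicators over edges, with each flagged edge contributing at most two nodes), and (c) using Assumption~\ref{assump:smalldeg} so that even when $\tilp$ is on the order of its mean, the bracketed factor in \eqref{eqn:unconf_samples} exceeds $c_\rho$ uniformly in the bin of $\rho$. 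Given this lemma, $\calE_{\mathrm{ref}}$ is handled by applying Proposition~1 to each connected component with effective sample size $\geq c_\rho n$, giving a rate of at least $c_\rho K_{\Passive}(\rho)$. Finally I combine the three bounds by a union bound, observe that the slowest decaying term is $\Pr(\calE_{\mathrm{ref}})$ which decays as $\exp(-c_\rho n K_{\Passive}(\rho))$, and take $\liminf -\tfrac1n\log$ on both sides to conclude $K_{\Active}(\calT,\rho)\geq c_\rho K_{\Passive}(\rho)$, completing the proof for every tree $\calT\in\frakT^p$.
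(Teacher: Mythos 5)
Your high-level decomposition (control $\hrho$, control the confidence classification, control the size of the unconfident set, then exploit the boosted per-node sample count) mirrors the paper's strategy, but two of your three pillars have genuine gaps. First, your event $\calE_\rho$ --- that $\hrho$ lands in the wrong bin of Table~\ref{table:alpha} --- cannot be shown to have a large exponent: if the true $\rho$ sits at or near a bin boundary (e.g.\ $\rho=0.53$ or $0.76$), the ``gap'' in your Hoeffding bound vanishes and the exponent of landing in the adjacent bin tends to $0$, which is certainly not faster than $c_\rho K_{\Passive}(\rho)$. The paper never requires $\hrho$ to identify the correct bin; Lemma~\ref{lem:rhobound} only establishes concentration in a window of width $\Theta(1-\rho)$ around $\rho$, and Lemmas~\ref{lem:thoperror} and~\ref{lem:confide} are engineered to tolerate a whole \emph{set} $\calM\subseteq\Psi$ of possible values of $\halpha$ rather than a single correct one. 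Relatedly, the average in \eqref{eq:rhohat} runs over the edges of the \emph{learned} tree, so your appeal to independence of the $p-1$ summands presupposes the structure is (mostly) correct; the paper must first dispose of $t$-hop errors for $t\ge 3$ and of independent pairs of $2$-hop errors (Lemmas~\ref{lem:mono}, \ref{lem:thoperror}, \ref{lem:indeppair}) before it can run that concentration argument on the $\ge 79d+1$ genuinely independent products $X_iX_j$.

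Second, and more seriously, your key quantitative claim in step (a) --- $\Pr(e\in\scC_{\mathrm E}^{\complement})\leq \exp(-\halpha n K_{\Passive}(\rho)(1+\delta))$ --- is false. For a correctly learned triple $i$--$j$--$k$, failing the confidence test requires only $\hrho_{ik}>\hrho_{ij}\frac{11+9\hrho}{20}$, which is a \emph{smaller} deviation than the $2$-hop error $\hrho_{ik}>\hrho_{ij}$ (e.g.\ at $\rho=0.9$ the threshold ratio is $\approx 0.955$ versus $1$, with the true ratio $0.9$); its exponent, further multiplied by $\halpha\le 1$, is therefore strictly \emph{below} $K_{\Passive}(\rho)$. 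For the same reason, including ``a correct edge-pair fails the $\calC_{ijk}$ test'' inside a bad event $\calE_{\mathrm{conf}}$ that is supposed to have exponent $\ge c_\rho K_{\Passive}(\rho)$ cannot work --- that event is not an error at all, merely a trigger for reallocating samples. Because the per-edge exponent is small, a Markov or Bernstein bound on $\tilp$ in step (b) cannot reach the required rate either. The paper's fix is the $2$-packing argument of Lemma~\ref{lem:packnum}: the event $\{\tilp\ge 26d\}$ forces at least $13$ unconfident triples that are pairwise edge-disjoint, hence (by Lemma~\ref{lem:indeppair}) statistically independent, so their probabilities \emph{multiply} and the total exponent becomes $0.8\times 13$ times the small per-triple exponent, which is then verified numerically to exceed $c_\rho K_{\Passive}(\rho)$. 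Without this independence-plus-packing mechanism your argument for the sufficiency of the refinement-phase sample budget does not close.
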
 
Theorem \ref{thm:mainthem} shows that the error exponent of the \ac{alathe} algorithm $K _{\Active}(\calT,\rho)$ is no less than the error exponent of the passive learning algorithm $K _{\Passive}(\rho)$. In fact, it is strictly larger than $K _{\Active}(\calT,\rho)$ by a multiplicative factor of $c_{\rho}$. In particular, when 
$\rho\in[0.8,1)$, in which $K _{\Passive}(\rho)$ suffers from a rapid decline as $\rho\to 1^-$ (see Fig.~\ref{fig:passiveee}), the \ac{alathe} algorithm improves the error exponent by at least 40\% relative to  its passive counterpart, which implies that the probability of error decays significantly faster if active learning is allowed. 

\section{Simulation Results} \label{sec:sim}

In this section, we present   simulation results to demonstrate the efficacy of the \ac{alathe} algorithm compared to its passive counterpart. Samples are generated from three different tree  structures: (1) A chain graph with $p=300$; (2) A \ac{hmm}-structured graph with $p=300$ (where all the nodes are observed); and (3) A binary tree with $p = 255$ (seven levels). The correlations between adjacent nodes $\rho$ are set to $0.9$, $0.7$, and $0.5$. Theorem 3 in \cite{tandon2020exact} provides an exact asymptotic expression for the best possible error probability, and it serves as a baseline of the simulation; this is indicated as ``Passive \ac{scl}: Theory'' in  Figs.~\ref{fig:chain},  \ref{fig:hmm}, and~\ref{fig:binary_tree}. Note that no other passive algorithm can perform better than that given by Theorem 3 in \cite{tandon2020exact} (since this is based on the maximum likelihood or minimum error probability principle), so it also serves as a {\em bona fide} impossibility result for tree structure learning using passive strategies.

\begin{figure}[t] 
	\centering
	\subfigure[$p=300$, $\rho=0.9$.]{
	\begin{minipage}[t]{0.32\linewidth}
	\centering
	\includegraphics[width=2.2in]{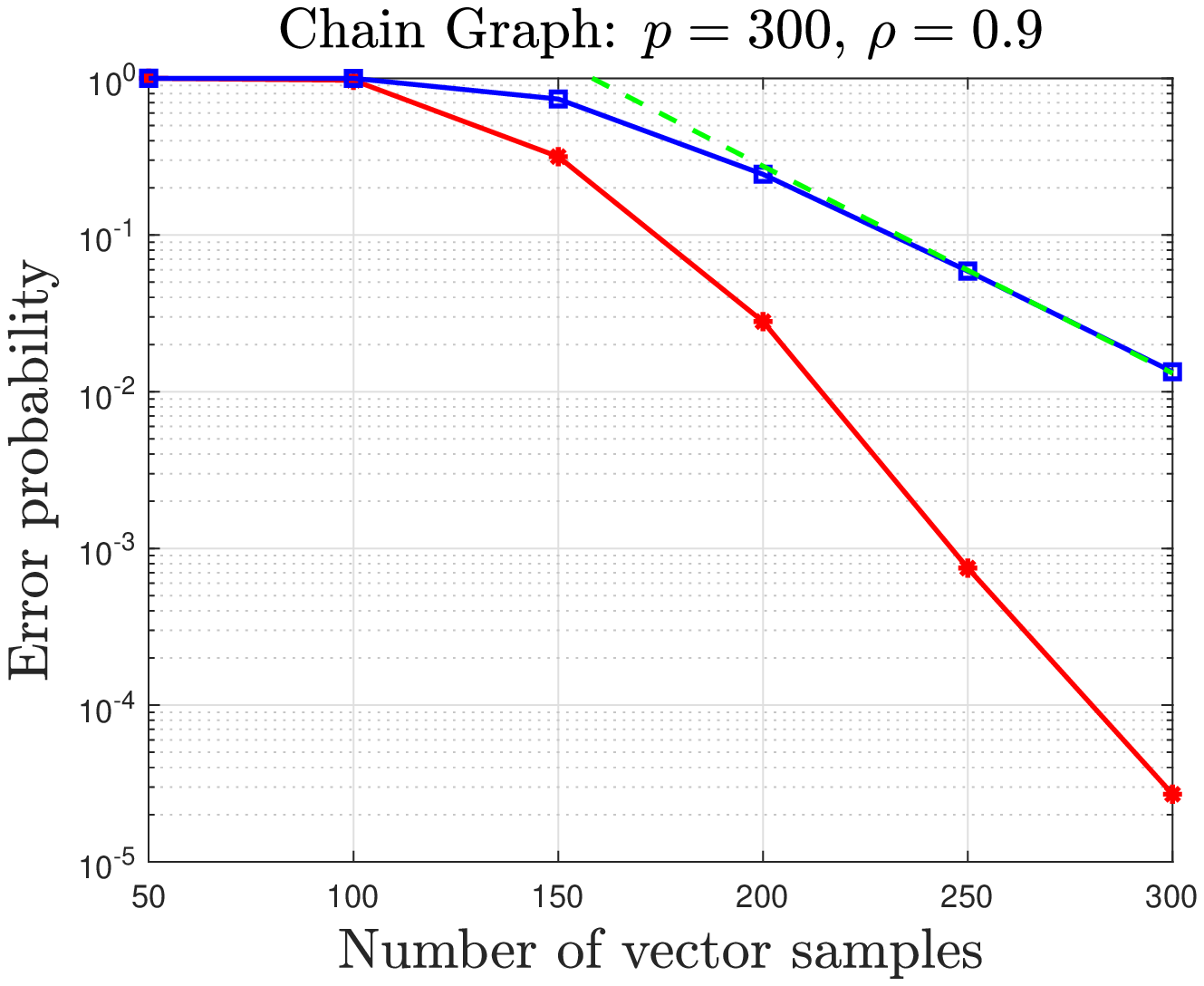}
	\end{minipage}%
	\label{fig:chain_9}
	}%
	\subfigure[$p=300$, $\rho=0.7$.]{
	\begin{minipage}[t]{0.32\linewidth}
	\centering
	\includegraphics[width=2.2in]{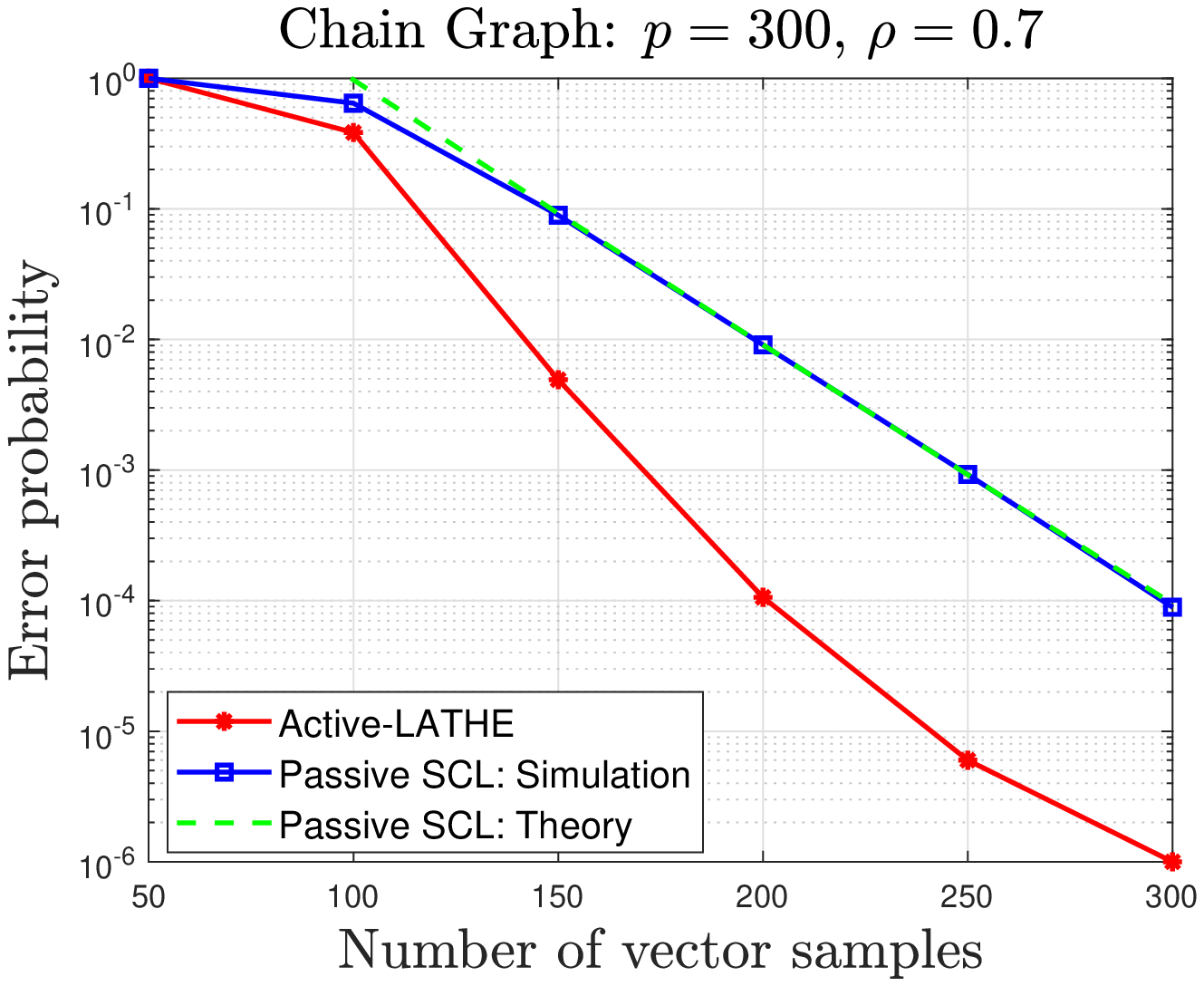}
	\end{minipage}%
	\label{fig:chain_7}
	}%
	\subfigure[$p=300$, $\rho=0.5$.]{
	\begin{minipage}[t]{0.32\linewidth}
	\centering
	\includegraphics[width=2.2in]{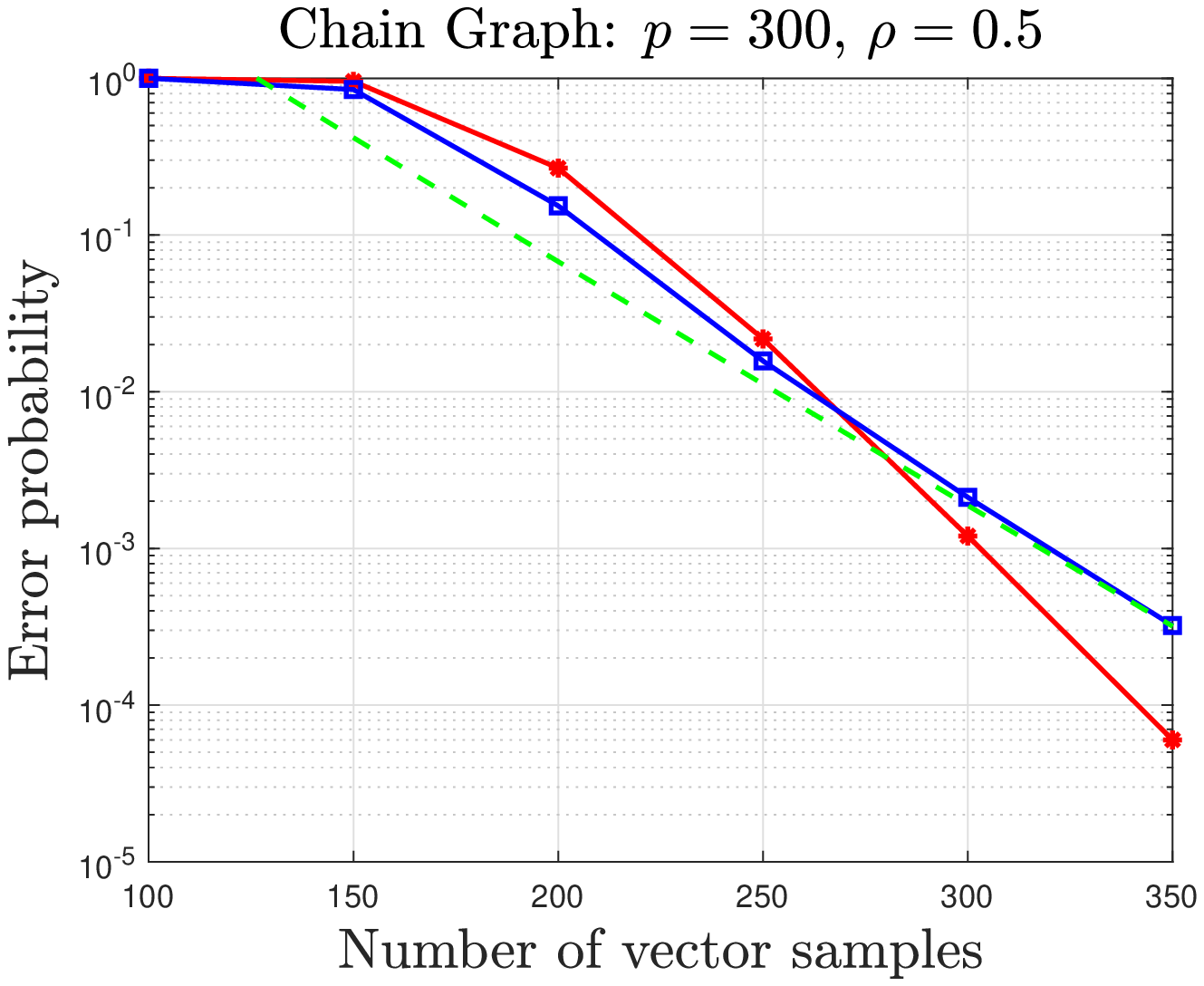}
	\end{minipage}%
	\label{fig:chain_5}
	}%
	\caption{Simulated error probabilities of structure learning algorithms on  chains}
	\label{fig:chain}
\end{figure}
\begin{figure}[t] 
	\centering
	\subfigure[$p=300$, $\rho=0.9$.]{
	\begin{minipage}[t]{0.32\linewidth}
	\centering
	\includegraphics[width=2.2in]{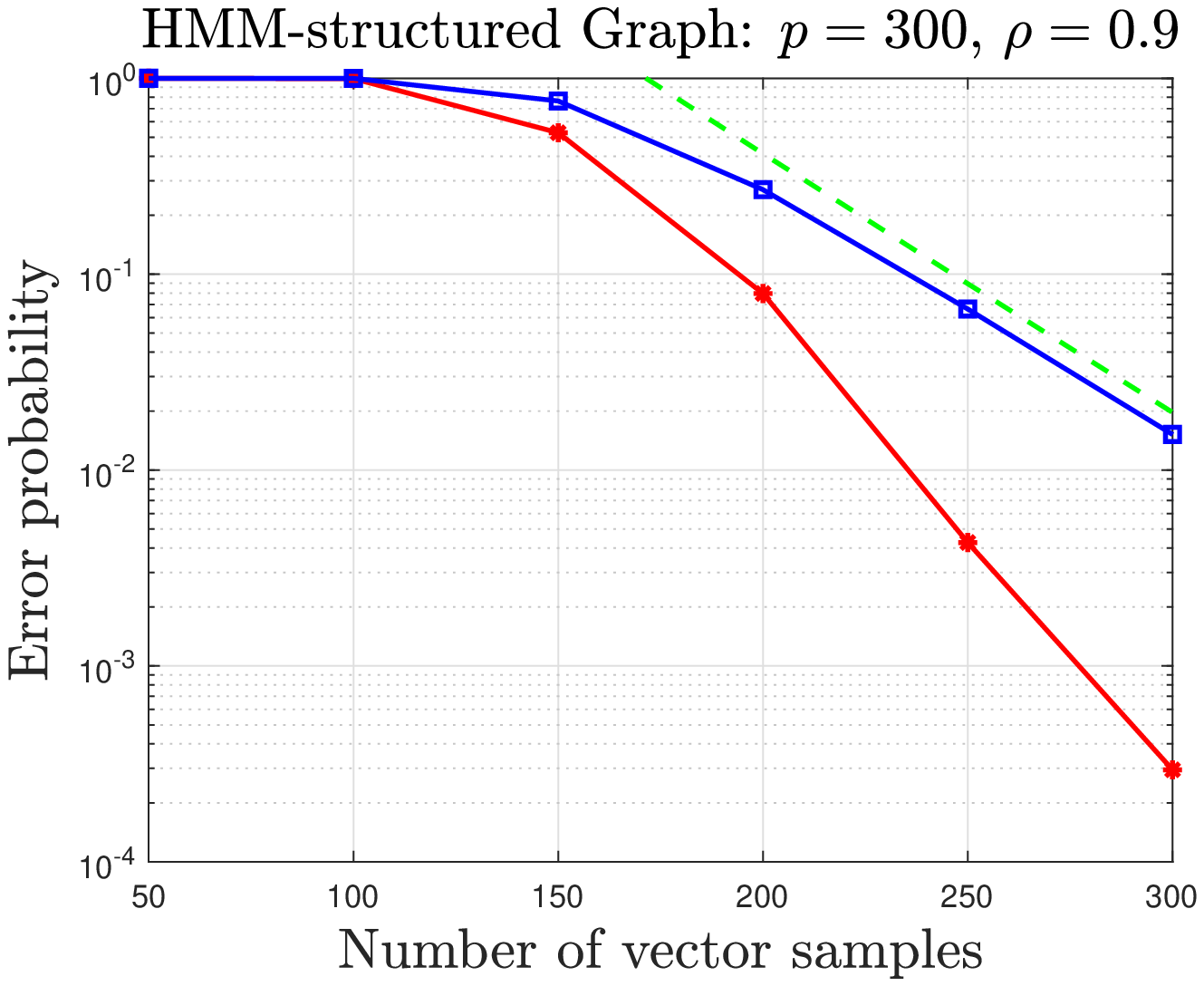}
	\end{minipage}%
	\label{fig:hmm_9}
	}%
	\centering
	\subfigure[$p=300$, $\rho=0.7$.]{
	\begin{minipage}[t]{0.32\linewidth}
	\centering
	\includegraphics[width=2.2in]{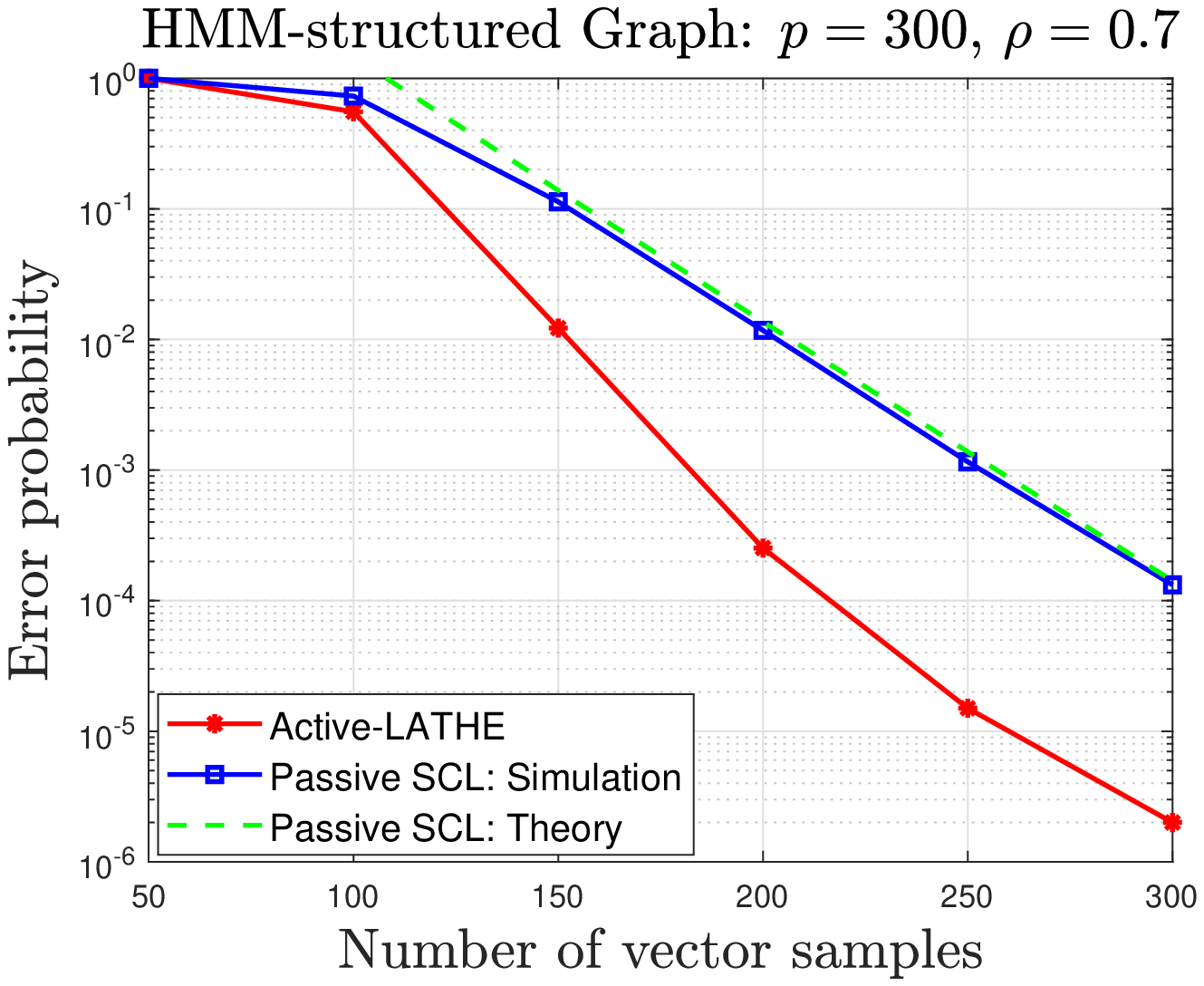}
	\end{minipage}%
	\label{fig:hmm_7}
	}%
	\centering
	\subfigure[$p=300$, $\rho=0.5$.]{
	\begin{minipage}[t]{0.32\linewidth}
	\centering
	\includegraphics[width=2.2in]{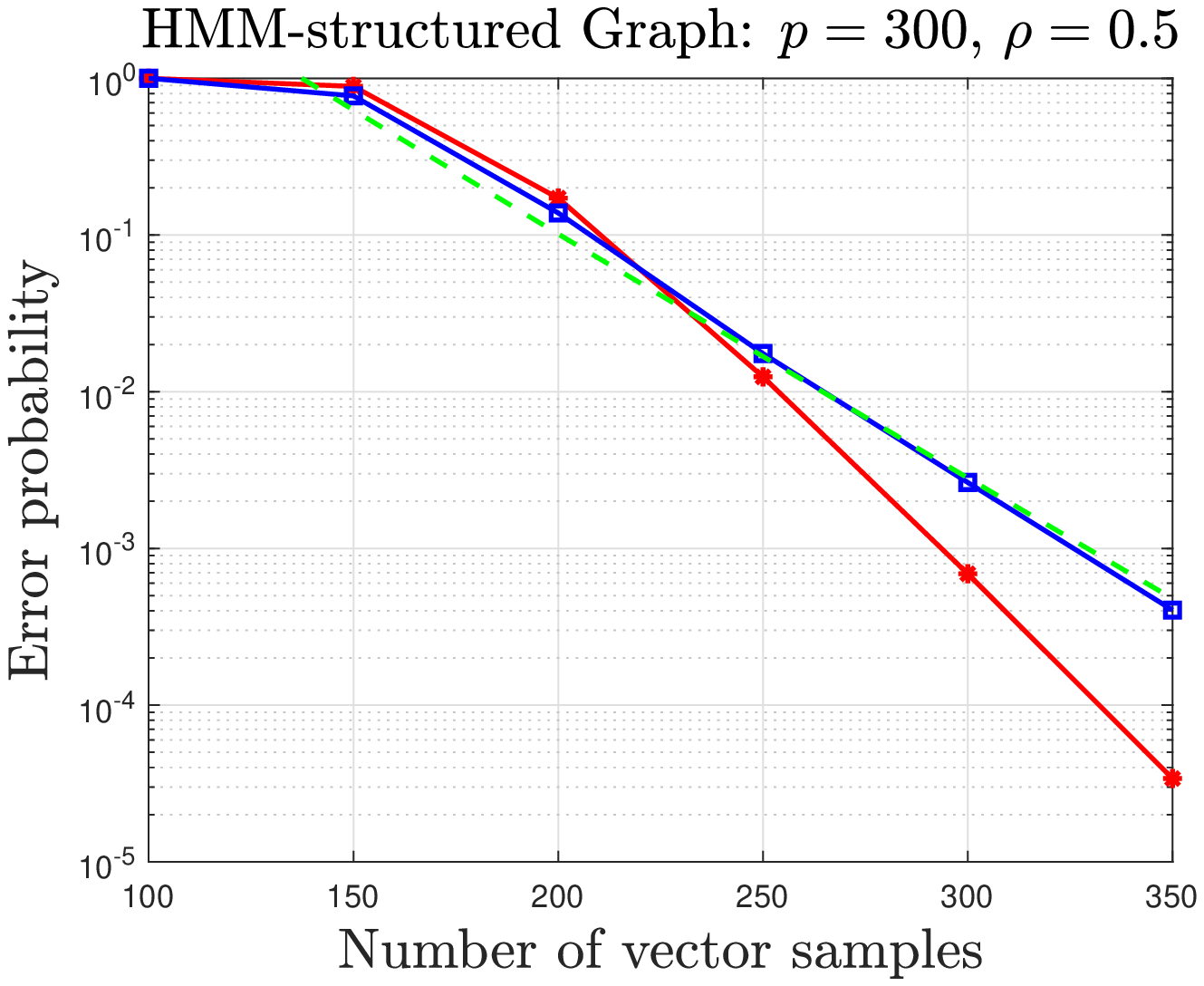}
	\end{minipage}%
	\label{fig:hmm_5}
	}%
	\centering
	\caption{Simulated error probabilities of structure learning algorithms on HMMs }
	\label{fig:hmm}
\end{figure}
\begin{figure}[t]
	\centering
	\subfigure[$p=255$, $\rho=0.9$.]{
	\begin{minipage}[t]{0.32\linewidth}
	\centering
	\includegraphics[width=2.2in]{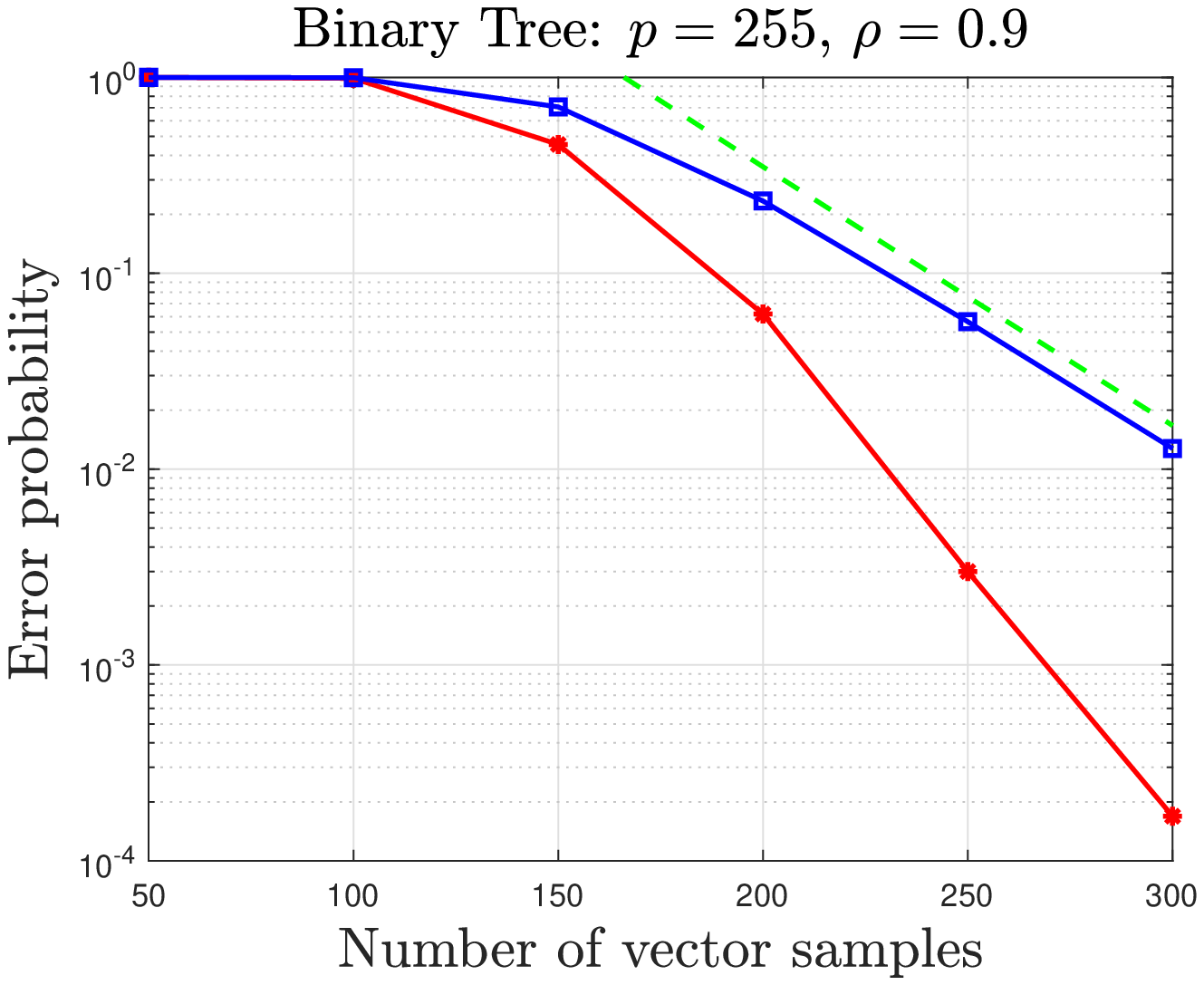}
	\end{minipage}%
	\label{fig:bt_9}
	}%
	\centering
	\subfigure[$p=255$, $\rho=0.7$.]{
	\begin{minipage}[t]{0.32\linewidth}
	\centering
	\includegraphics[width=2.2in]{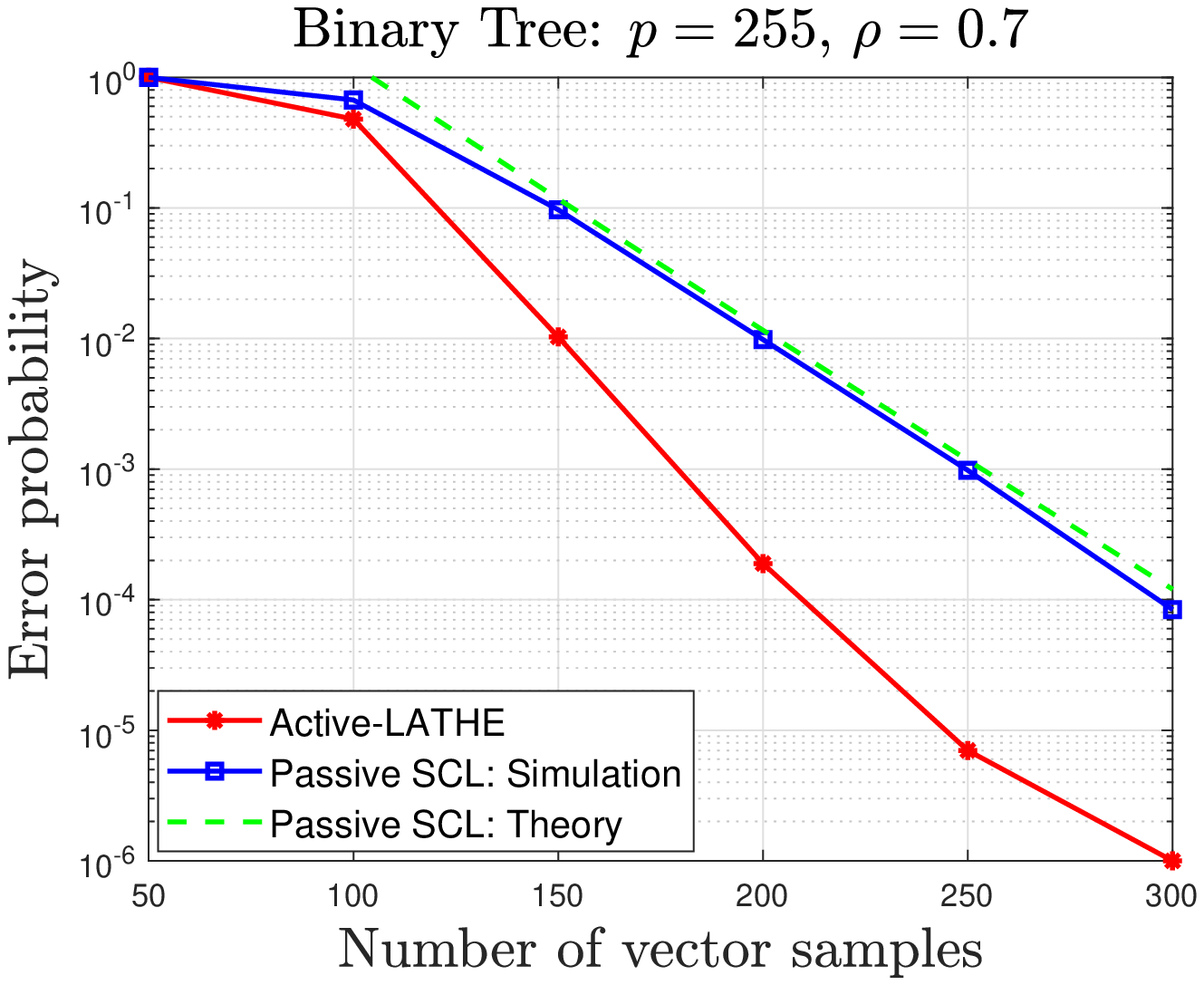}
	\end{minipage}%
	\label{fig:bt_7}
	}%
	\centering
	\subfigure[$p=255$, $\rho=0.5$.]{
	\begin{minipage}[t]{0.32\linewidth}
	\centering
	\includegraphics[width=2.2in]{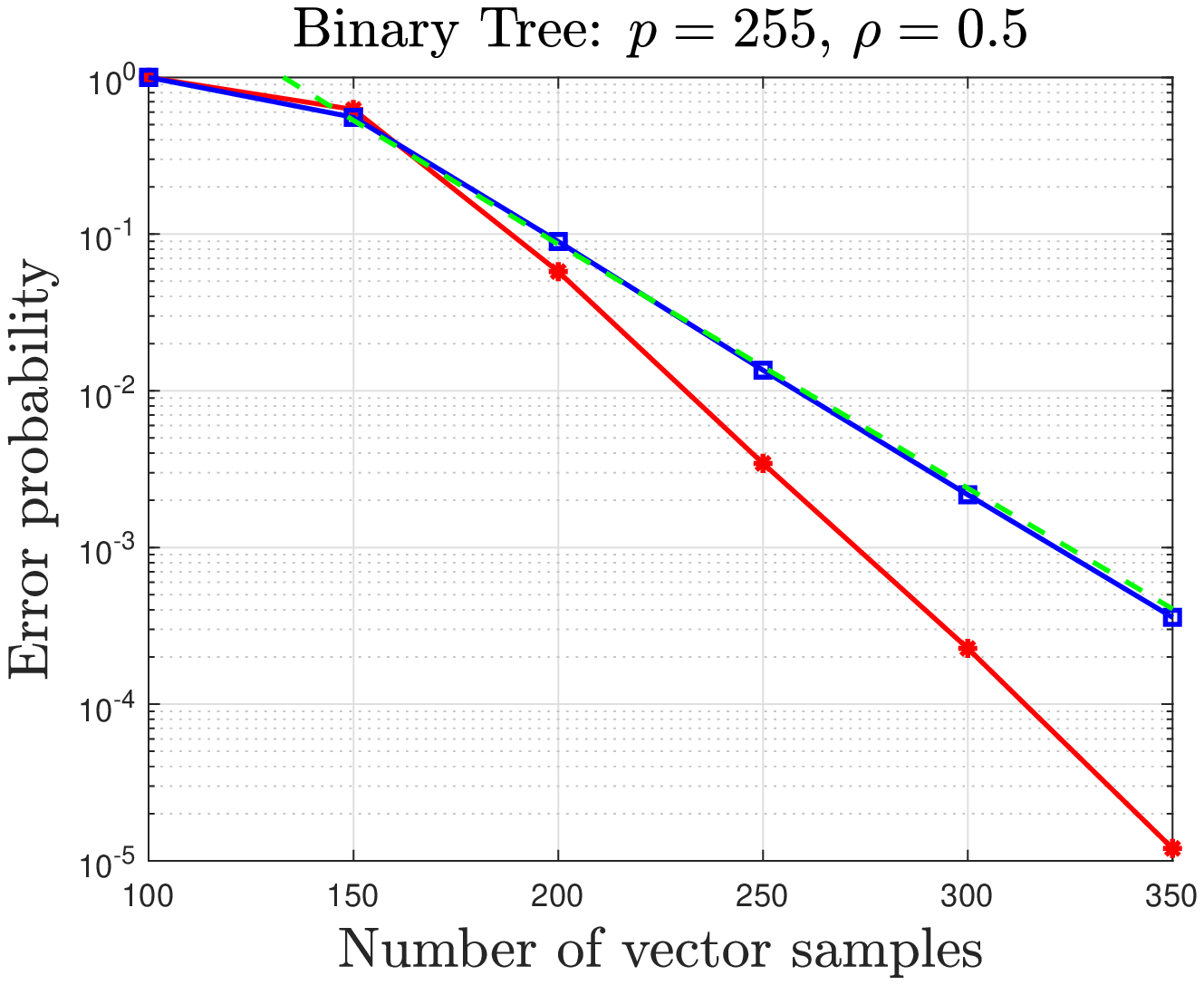}
	\end{minipage}%
	\label{fig:bt_54}
	}%
	\centering
	\caption{Simulated error probabilities of structure learning algorithms on   binary trees}
	\label{fig:binary_tree}
\end{figure}

Figs.~\ref{fig:chain} to \ref{fig:binary_tree} corroborate the behavior of the (passive) exponent in Fig.~\ref{fig:passiveee} and demonstrate the deficiencies of the passive learning algorithm, especially when $\rho $ is close to $1$. Indeed, in Fig.~\ref{fig:chain}, the error probabilities with $250$ vector samples on the chain graphs with different $\rho$'s show that the Ising tree with $\rho=0.9$ is the most difficult to learn (among $\rho \in \{0.5, 0.7, 0.9\}$), which coincides with the intuition gleaned by the exponent shown in Fig.~\ref{fig:passiveee}. The match between the experimental results and the exact asymptotic expression (indicated by ``Passive \ac{scl}: Theory'') also substantiates the slower decrease of the error probability when $\rho$ is close to $1$ (compared to smaller values of $\rho$). Similar phenomena can be observed in Figs.~\ref{fig:hmm} and~\ref{fig:binary_tree} for other tree structures.

Figs.~\ref{fig:chain} to \ref{fig:binary_tree} clearly demonstrate the effectiveness of the \ac{alathe} algorithm whose error probabilities are much smaller than those of the passive Chow--Liu learning algorithm, especially when $n$ is sufficiently large. 
The {\em slopes} of the error curves correspond to the error exponents.  
The improvements to  the slopes   of the \ac{alathe} algorithm  (the slopes are more negative for \ac{alathe} compared to the passive algorithm) are especially pronounced when $\rho$ is large. 
The large reductions of the error probability (the error exponent increases by at least $40\%$) shown in Figs.~\ref{fig:chain_9} to~\ref{fig:bt_9} greatly mitigate the inherent deficiencies of the passive learning for large $\rho$ as discussed in Section~\ref{sec:pasivlearn}. The code to reproduce the figures is available at  \textcolor{blue}{\href{https://github.com/Zhang-Fengzhuo/Active-LATHE}{this  GitHub link}}.

\section{Proof Overview}\label{sec:pfview}
In this section, we provide the overview of the proof of Theorem~\ref{thm:mainthem}. We first introduce the definitions and key results that will be used in the proof. Then we provide the outline of the proof of Theorem~\ref{thm:mainthem}. The proofs of the supporting lemmas are provided in Section~\ref{sec:app}.

\subsection{Preliminaries}\label{sec:pfpre}
We begin with definitions of the errors that occur in the learning of the \ac{mst} in the global learning phase which uses $\halpha n$ vector samples.  
\begin{definition}[$t$-hop error] \label{def:thop}
	Let $i_{k_{1}}-i_{k_{2}}-\cdots -i_{k_{t+1}}$ be a   path $\calP_{t}$ of length $t$. The corresponding $t$-hop error is defined as the event $E_{t}(\calP_{t},m)=\bigcup_{j=1}^{t}\{\hrho_{k_{1}k_{t+1}}(m)\geq \hrho_{k_{j}k_{j+1}}(m)\}$. A graph 
	$\calG$ has a $t$-hop error if one of its subgraphs is a path that has a $t$-hop error. This error event   is denoted as $E_{t}(\calG,m)=\cup_{\calP_{t} \subseteq \calG}E_{t}(\calP_{t},m).$
\end{definition}
\begin{remark}
	The \ac{cl} estimate of $\calT$ is the tree $\hatcalT_{{\rm SCL}}(m)$, which is the \ac{mst} of the graph with edge weights $\hrho_{e}(m)$. The event $E_{t}(\calT,m)$ implies that a path $\calP_{t}$ in $\calT$ is wrongly estimated, and 
	there is an edge between the terminal nodes of $\calP_{t}$ in $\hatcalT_{{\rm SCL}}(m)$, which are the terminal nodes of a $t$-long path in the nominal tree. Thus, $\hatcalT_{{\rm SCL}}(m)$ is said to have a $t$-hop error if $E_{t}(\calT,m)$ occurs, and 
	we will use the notation $E_{t}(\hatcalT_{{\rm SCL}},\calT,m)$ to denote the same event as $E_{t}(\calT,m)$ when we want to emphasize that the learned \ac{cl} tree $\hatcalT_{{\rm SCL}}$ is wrong.
\end{remark}
\begin{definition}[$t$-hop error exponent]
For a homogeneous tree Ising with zero external field $\calT$ parameterized by $0<\rho<1$, the exponent corresponding to the $t$-hop error event is defined as
\begin{align}\label{eq:thopee}
	K _{t}(\calT,\rho)\triangleq \varliminf_{n\rightarrow \infty}-\frac{1}{n}\log\Pr \big(E_{t}(\calT,n)\big),
\end{align}
where $n$ is the number of vector samples used for estimating correlations between nodes $\hrho_{ij}$.
\end{definition} 
In the following, we also use $K _{t}(\hatcalT_{{\rm SCL}},\calT,\rho)$ to denote the exponent $K _{t}(\calT,\rho)$ when we wish to emphasize that the \ac{cl} tree $\hatcalT_{{\rm SCL}}$ is wrong (i.e., not equal to the true tree $\calT$). 
The global learning phase of Algorithm~\ref{algo:alathe} uses $\halpha n$ vector samples to learn the \ac{cl} tree, where $\halpha$ is a random variable. Thus, we need to define the error exponent of 
$t$-hop error with a random number of samples.
\begin{definition}[$t$-hop error exponent with a random number of samples]
For a homogeneous Ising tree with zero external field $\calT$ parameterized by $0<\rho<1$, the exponent corresponding to the $t$-hop error event with $\alpha$ fraction of samples $E_{t}(\hatcalT_{{\rm SCL}},\calT,\alpha n)$ is defined as
\begin{align}
	K_{t}(\calT,\rho,P_{\alpha})\triangleq\varliminf_{n\rightarrow\infty} -\frac{1}{n}\log\bbE_{\alpha}\big[\Pr\big(E_{t}(\hatcalT_{{\rm SCL}},\calT,\alpha n)\,\big|\,\alpha \big)\big],
\end{align}
where $P_{\alpha}$ is the probability distribution of random variable $\alpha$.
\end{definition}
We also use $K _{t}(\hatcalT_{{\rm SCL}},\calT,\rho,P_{\alpha})$ to denote $K _{t}(\calT,\rho,P_{\alpha})$ when we want to emphasize that the \ac{cl} tree $\hatcalT_{{\rm SCL}}$ is incorrect (i.e., does not equal to the true tree $\calT$). When we invoke the definition of $K_t(\hatcalT_{{\rm SCL}},\calT,\rho,P_{\alpha})$ (e.g., in Lemma~\ref{lem:thoperror}), $\alpha$ will be set to $\halpha$, the parameter that is produced by the global learning phase (Line 10 of Algorithm~\ref{algo:alathe}).
\begin{remark}
	On the event $\{\alpha=\alpha_{0}\}$ (i.e., $P_\alpha$ is the Dirac measure at $\alpha_0$) where $\alpha_{0}\in(0,1]$, we have
	\begin{align}
		K _{t}(\calT,\rho,P_{\alpha})=\alpha_{0} \varliminf_{n\rightarrow\infty} -\frac{1}{n}\log\Pr\big(E_{t}(\hatcalT_{{\rm SCL}},\calT,n)\big)=\alpha_{0} K _{t}(\hatcalT_{{\rm SCL}},\calT,\rho).
	\end{align}
\end{remark}
To describe the distance between edges, we adopt the definition of the edge geodesic pre-distance in \cite{neykov2019property} 
to measure the distance between edges.
\begin{definition}[Edge geodesic pre-distance]
	Let $\calG$ be a graph and $\{e,e^{\prime}\}$ be two pairs of nodes that do not necessarily  belong to $\calG$. The {\em edge geodesic pre-distance} is given by
	\begin{align}
		d_{\calG}(e,e^{\prime})\triangleq \min_{u\in e,v\in e^{\prime}} |{\rm Path}_{\calG}(u,v)|.
	\end{align}
	If there is no path between  every $u\in e$ and $v\in e^{\prime}$, $d_{\calG}(e,e^{\prime})=\infty$.
\end{definition}
Next, we define the corresponding packing and the packing number in graphs.
\begin{definition}[$r$-packing and $r$-packing number]
	Given a graph $\calG=(\calV,\calE)$ and a collection of edges $\calC$ with vertices in $\calV$, an $r$-packing of $\calC$ is any subset of edges $S\subseteq\calC$ such that 
	for all $e$, $e^{\prime}\in S$ satisfy $d_{\calG}(e,e^{\prime})\geq r$. The $r$-packing number is defined as
	\begin{align}
		N(\calC,\calG,r)\triangleq\max \{|S| \ | \ S\subseteq\calC , S \text{ is an }r\text{-packing}\}.
	\end{align}
\end{definition}
In other words,  $N(\calC,\calG,r)$ is the maximum cardinality over all  $r$-packings of $\calC$ in $\calG$.
\subsection{Proof of Theorem \ref{thm:mainthem}}
Fig.~\ref{fig:prooffig} shows the outline of the proof of Theorem \ref{thm:mainthem}. Since we consider the asymptotic regime in which  $n\rightarrow\infty$, we  simplify the  subsequent notations and, in particular, denote  the number of samples used in the global learning phase as  $\halpha n$ (instead of the more verbose $\lfloor\halpha n\rfloor$). 

\def \hos{3}
\def \ves{1.5}
\def \vess{1.7}
\def \scal{0.9}
\definecolor{lightora}{RGB}{250,217,120}
\definecolor{lightprop}{RGB}{147,170,216}
\definecolor{lightblue}{RGB}{224,235,246}
\definecolor{lightgreen}{RGB}{213,222,203}
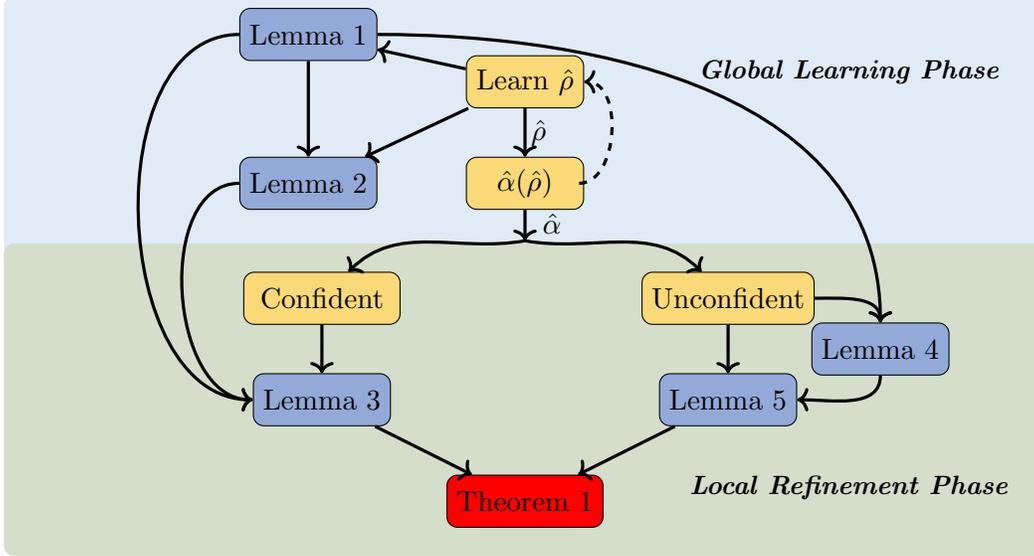
\begin{figure}[t]
	\centering
	\begin{tikzpicture}
		\tikzstyle{every node}=[font=\normalsize]
		\node [draw=none,rounded corners,fill=lightblue,minimum width=\scal*40em,minimum height=\scal*10.5em](m)at(0,-\scal*0.75){ };
		\node [draw=none,rounded corners,fill=lightgreen,minimum width=\scal*40em,minimum height=\scal*12em](n)at(0,-\scal*4.7){ };
		\node [draw,rounded corners,fill=lightora,minimum width=\scal*4.5em,minimum height=\scal*2em](a)at(0,0){Learn $\hrho$};
		\node [draw,rounded corners,fill=lightora,minimum width=\scal*4.5em,minimum height=\scal*2em](b)at(0,-\scal*\ves){$\halpha(\hrho)$};
		\node [draw,rounded corners,fill=lightora,minimum width=\scal*6em,minimum height=\scal*2em](c)at(-\scal*\hos,-\scal*\ves-\scal*\vess){Confident};
		\node [draw,rounded corners,fill=lightora,minimum width=\scal*6em,minimum height=\scal*2em](d)at(\scal*\hos,-\scal*\ves-\scal*\vess){Unconfident};
		\node [draw,rounded corners,fill=lightprop,minimum width=\scal*5em,minimum height=\scal*2em](e)at(-\scal*\hos,-2*\scal*\ves-\scal*\vess){Lemma \ref{lem:confide}};
		\node [draw,rounded corners,fill=lightprop,minimum width=\scal*5em,minimum height=\scal*2em](f)at(\scal*\hos,-2*\scal*\ves-\scal*\vess){Lemma \ref{lem:unconfide}};
		\node [draw,rounded corners,fill=red,minimum width=\scal*5em,minimum height=\scal*2em](g)at(0,-3*\scal*\ves-\scal*\vess){Theorem \ref{thm:mainthem}};
		\node [draw,rounded corners,fill=lightprop,minimum width=\scal*5em,minimum height=\scal*2em](h)at(-\scal*\hos-\scal*0.2,\scal*0.7){Lemma \ref{lem:thoperror}};
		\node [draw,rounded corners,fill=lightprop,minimum width=\scal*5em,minimum height=\scal*2em](i)at(-\scal*\hos-\scal*0.2,-\scal*\ves){Lemma \ref{lem:rhobound}};
		\node [draw,rounded corners,fill=lightprop,minimum width=\scal*5em,minimum height=\scal*2em](j)at(1.75*\scal*\hos,-1.5*\scal*\ves-\scal*\vess){Lemma \ref{lem:packnum}};
		\node (k)at(\scal*0.2,-\scal*0.5*\ves){$\hrho$};
		\node (l)at(\scal*0.4,-\scal*\ves-\scal*0.35*\vess){$\halpha$};
		\node [draw=none,rounded corners,fill=none,font=\fontsize{10}{6}\selectfont, minimum width=\scal*5em,minimum height=\scal*2em](o)at(1.6*\scal*\hos,0.1*\scal*\ves){\textbf{\emph{Global Learning Phase}}};
		\node [draw=none,rounded corners,fill=none,font=\fontsize{10}{6}\selectfont, minimum width=\scal*5em,minimum height=\scal*2em](o)at(1.6*\scal*\hos,-4*\scal*\ves){\textbf{\emph{Local Refinement Phase}}};

		\draw[very thick] [->] (a) to [out=270,in=90] (b);
		\draw[very thick,dashed] [->] (\scal*0.8,-\scal*\ves) to [out=0,in=0] (a);
		\draw[very thick] [->] (b) to [out=270,in=90] (0,-\scal*\ves-0.5*\scal*\vess);
		\draw[very thick] [->] (0,-\scal*\ves-0.5*\scal*\vess) to [out=190,in=45](c);
		\draw[very thick] [->] (0,-\scal*\ves-0.5*\scal*\vess) to [out=350,in=135](d);
		\draw[very thick] [->] (c) to (e);
		\draw[very thick] [->] (d) to (f);
		\draw[very thick] [->] (e) to (g);
		\draw[very thick] [->] (f) to (g);
		\draw[very thick] [->] (a) to (h);
		\draw[very thick] [->] (a) to (i);
		\draw[very thick] [->] (h) to (i);
		\draw[very thick] [->] (h) to [out=180,in=180] (e);
		\draw[very thick] [->] (i) to [out=180,in=180] (e);
		\draw[very thick] [->] (d) to [out=0,in=90] (j);
		\draw[very thick] [->] (j) to [out=270,in=0](f);
		\draw[very thick] [->] (h) to [out=0,in=90](j);

	\end{tikzpicture}
	\caption{A flowchart of the outline of the proof of Theorem \ref{thm:mainthem}.} 
	\label{fig:prooffig} 
\end{figure}
The error event $\{\hatcalT_{\Active}(n)\neq \calT\}$ can be decomposed as the union of two events as follows,
\begin{align}
	\{\hatcalT_{\Active}(n)\neq \calT\}=E_{{\rm Conf}}(n)\cup E_{{\rm Unconf}}(n),
\end{align}
where
\begin{align}
	E_{{\rm Conf}}(n)&\triangleq\Big\{\big(\calE\big(\hatcalT_{\rm{SCL}}(\halpha n)\big)\backslash \calE(\calT)\big)\cap \scC_{\mathrm{E}}(\halpha n)\neq \emptyset\Big\},\quad\text{and}\\*
	E_{{\rm Unconf}}(n)&\triangleq\Big\{\hatcalT_{\Active}\big(\scC_{\mathrm{V}}(\halpha n)^{\complement}\big)\neq \calT\big(\scC_{\mathrm{V}}(\halpha n)^{\complement}\big)\Big\}.
\end{align}
For ease of notation, when the number of vector samples $n$ is clear from the context, we will omit the dependencies of   $E_{{\rm Conf}}$ and $E_{{\rm Unconf}}$ on $n$. The event $E_{{\rm Conf}}$ signifies that there is an edge $e\in \hatcalT_{\rm{SCL}}(\halpha n)\big)\backslash \calE(\calT)$ that is identified as confident; this edge then will not be 
rectified in the local refinement phase, since it is not in any connected component of $\scC_{\mathrm{V}}(\halpha n)^{\complement}$ in $\hatcalT_{{\rm SCL}}(\halpha n)$. The event $E_{{\rm Unconf}}$ signifies that the edges in the connected components of $\scC_{\mathrm{V}}(\halpha n)^{\complement}$ 
in $\hatcalT_{{\rm SCL}}(\halpha n)$ are not correctly learned in the local refinement phase. Then we have that
\begin{align}
	K (\hatcalT_{\Active}\neq \calT)\geq \min \{K (E_{{\rm Conf}}),K (E_{{\rm Unconf}})\}.
\end{align}
To prove the desired inequality, we need to lower bound the terms on the right-hand side separately. 

We first state Lemma \ref{lem:thoperror}, which serves as a basis for all the following lemmas.
\begin{restatable}{lemma}{lemthoperror}
\label{lem:thoperror}
For a homogeneous Ising tree with zero external field parameterized by $0<\rho<1$ that satisfies Assumption \ref{assump:smalldeg}, $\hatcalT_{\rm{SCL}}(\halpha n)$ learned in Algorithm~\ref{algo:alathe} satisfies
\begin{align}
	K _{t}(\hatcalT_{{\rm SCL}},\calT,\rho,P_{\halpha})\geq c_{\rho}K _{\Passive}(\rho) \text{ for all }t\geq 3,
\end{align}
\end{restatable}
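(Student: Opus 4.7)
The plan is to combine three ingredients: exponential concentration of $\hrho$ around the true correlation $\rho$; a large-deviations estimate of the $t$-hop error probability conditional on the random sample budget $\halpha n$; and a bin-by-bin numerical verification that Tables~\ref{table:alpha} and~\ref{table:crho} are calibrated to give the claimed inequality. The argument never needs the closed form of $K_t(\calT,\rho)$; it only needs that $K_t(\calT,\rho)\ge K_2(\calT,\rho)=K_{\Passive}(\rho)$ and that $K_t$ is effectively computable.

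First, I would show that, for $\rho$ in the interior of one of the seven bins of Table~\ref{table:crho}, the random variable $\halpha$ equals the corresponding bin value $\alpha^{*}(\rho)$ with probability $1-e^{-\Omega(n)}$. Since $\halpha$ is driven by $\hrho$, which is the mean of $p-1$ edge-correlation estimates each built from at least $\halpha_1 n = 0.8 n$ i.i.d.\ samples, Hoeffding's inequality, combined with the high-probability event that the \ac{scl} estimate coincides with $\calT$ (itself holding with probability $1-e^{-\Omega(n)}$ by the results of \cite{tandon2020exact}), gives $\Pr(|\hrho-\rho|>\epsilon)\le e^{-\gamma(\epsilon)n}$ for every $\epsilon>0$. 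Assumption~\ref{assump:smalldeg} is what licenses the claim that the $p-1$ edge estimates furnish enough averaging so that the crossing probability into a wrong bin decays at a uniform exponential rate in $n$.

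Next, I would expand
\begin{equation*}
\bbE_{\halpha}\bigl[\Pr(E_t(\hatcalT_{\rm SCL},\calT,\halpha n)\mid\halpha)\bigr]=\sum_{\alpha_0}\Pr(\halpha=\alpha_0)\,\Pr\bigl(E_t\mid\halpha=\alpha_0\bigr),
\end{equation*}
the sum ranging over the seven values in Table~\ref{table:alpha}. By the previous step, every term except the one with $\alpha_0=\alpha^{*}(\rho)$ is exponentially suppressed by its prefactor. For the dominant term, a union bound over the at most polynomially-many length-$t$ paths in $\calT$, combined with standard Chernoff bounds on each crossover $\{\hrho_{\text{long}}\ge\hrho_{\text{short}}\}$, yields $\Pr(E_t(\hatcalT_{\rm SCL},\calT,\alpha^{*}(\rho)n))\le\poly(p)\,e^{-\alpha^{*}(\rho)n\,K_t(\calT,\rho)}$. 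Taking $-\tfrac{1}{n}\log$ and passing to the limit gives $K_t(\hatcalT_{\rm SCL},\calT,\rho,P_{\halpha})\ge \alpha^{*}(\rho)\,K_t(\calT,\rho)$. To conclude, I would verify, bin by bin, that $\alpha^{*}(\rho)\,K_t(\calT,\rho)\ge c_\rho\,K_{\Passive}(\rho)$ for every $t\ge 3$. Since $K_t$ is nondecreasing in $t\ge 2$ and $K_{\Passive}(\rho)=K_2(\calT,\rho)$, the binding case is $t=3$, so the check reduces to the pointwise inequality $\alpha^{*}(\rho)\cdot (K_3/K_2)(\rho)\ge c_\rho$. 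This is a finite numerical verification using the closed-form expression for $K_{\Passive}$ from the earlier proposition and an easily-computed expression for $K_3$; the entries of Tables~\ref{table:alpha} and~\ref{table:crho} are chosen precisely so that this holds, with $c_\rho$ taken as a conservative lower bound. Bin-boundary values of $\rho$ are handled by the standard device of checking the inequality under either adjacent bin assignment and keeping the smaller $c_\rho$ as the worst case.

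The main obstacle is the book-keeping in the concentration step rather than the large-deviations step. The random sample size $\halpha n$, the learned tree $\hatcalT_{\rm SCL}(\halpha n)$, and the $t$-hop error event are all functions of the same underlying samples, so one cannot naively condition on $\halpha$ and quote a fixed-$n$ exponent without controlling the conditioning bias. Splitting on the discrete value $\{\halpha=\alpha_0\}$ and bounding each branch uniformly, while using Assumption~\ref{assump:smalldeg} to ensure that $\hrho$ truly concentrates despite being averaged over a data-dependent edge set, is what makes the argument close.
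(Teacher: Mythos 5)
Your overall skeleton matches the paper's: reduce to $t=3$ by monotonicity of $K_t$ in $t$, split the expectation over the finitely many values of $\halpha$, bound the conditional error probability with a deterministic sample budget $\beta n$ to obtain an exponent $\beta K_3(\calT,\rho)$, and finish with a bin-by-bin numerical check against $c_\rho K_{\Passive}(\rho)$. The gap is in your concentration step. You claim that for $\rho$ in the interior of a bin, $\halpha$ equals the ``correct'' value $\alpha^{*}(\rho)$ with probability $1-e^{-\Omega(n)}$, and that the remaining terms of the sum are exponentially suppressed by their prefactors. Two problems. First, exponential suppression only helps if the suppression rate exceeds $c_\rho K_{\Passive}(\rho)$: the joint probability $\Pr(\halpha=\alpha_0,\,E_3)$ is bounded only by $\Pr(\halpha=\alpha_0)$, its exponent enters the final answer through a minimum over the seven terms, and an unquantified $\Omega(n)$ rate can be the smallest of the seven. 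You never compare that rate to $c_\rho K_{\Passive}(\rho)$. Second, and more seriously, the rate genuinely degenerates: $\halpha$ is determined by which cell of Table~\ref{table:alpha} the estimate $\hrho$ lands in, and the cell boundaries of Table~\ref{table:alpha} ($0.02,0.07,0.16,0.34,0.53,0.76$) do not coincide with the $\rho$-bins of Table~\ref{table:crho}. For $\rho$ near $0.53$, say, which lies in the interior of the $c_\rho$ bin $[0.4,0.6)$, the exponent of $\hrho$ crossing into the adjacent $\halpha$ cell tends to zero, so $\halpha$ does not concentrate on a single value and the adjacent (smaller) $\alpha$ must be carried into the dominant-term analysis. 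Your remedy for ``bin-boundary values of $\rho$'' treats only exact boundary points, not the neighborhoods where the crossing exponent is positive but smaller than $c_\rho K_{\Passive}(\rho)$.

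The paper closes exactly this hole by never attempting to pin down $\halpha$: it chooses a subset $\calM\subseteq\{0.8,0.85,0.9,0.95,0.985,0.995,1\}$, bounds $K(E_3\cap\{\halpha\in\calM\})\geq\min_{\beta\in\calM}\beta\,K_3(\calT,\rho)$ (worst-casing over every $\beta\in\calM$, which for most $\rho$ gives $0.8\,K_3$), and only when that worst case is too weak (for $\rho\in[0.4,0.5)$, where $\beta=0.8$ must be excluded) does it separately verify that the excluded event itself is rare enough, namely $K(\{\halpha=0.8\})=K(\{\hrho\geq 0.76\})\geq 0.8\,D\big(\tfrac{1-\rho}{2}\big\|0.12\big)\geq c_\rho K_{\Passive}(\rho)$ --- a large, non-degenerate deviation whose exponent can be checked numerically. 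Your argument can be repaired by adopting this dichotomy (for each $\beta$, either $\beta K_3(\calT,\rho)\geq c_\rho K_{\Passive}(\rho)$, or the event $\{\halpha=\beta\}$ has exponent at least $c_\rho K_{\Passive}(\rho)$), but as written the concentration-on-$\alpha^{*}(\rho)$ step does not go through uniformly in $\rho$.
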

In other words, the exponent of the event that $\hatcalT_{\rm{SCL}}(\halpha n) \text{ has a }t\text{-hop error}$ is at least as large as $c_{\rho}K _{\Passive}(\rho)$ for all $t\geq 3$. Lemma \ref{lem:thoperror} thus implies 
that we only need to prove the following lemmas assuming that 
there are  $2$-hop errors (and not $t$-hop ones for $t\ge3$). 
Using this fact, we can then derive a high-probability bound of the estimate $\hrho(\halpha n)$.

\begin{restatable}{lemma}{lemrhobound}
\label{lem:rhobound}
For a homogeneous Ising tree with zero external field parameterized by $0<\rho<1$ that satisfies Assumption \ref{assump:smalldeg}, the estimate $\hrho(\halpha n)$ in Algorithm~\ref{algo:alathe} satisfies 
\begin{align}
	\varliminf_{n\rightarrow\infty}-\frac{1}{n}\log\Pr\Big(\rho+\frac{2(1-\rho)}{9}<\hrho(\halpha n)\Big)&\geq c_{\rho}K _{\Passive}(\rho)\qquad\mbox{and}\\ \varliminf_{n\rightarrow\infty}-\frac{1}{n}\log\Pr\Big(\rho-\frac{1-\rho}{6}>\hrho(\halpha n)\Big)&\geq c_{\rho}K _{\Passive}(\rho).
\end{align}
\end{restatable}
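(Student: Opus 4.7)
My plan rests on a key algebraic fact for homogeneous Ising trees with zero external field: after rooting $\calT$ at an arbitrary vertex $r$, the root variable $X_r$ together with the edge products $Z_e^{(\ell)} := X_i^{(\ell)} X_j^{(\ell)}$ over $e = \{i,j\} \in \calE(\calT)$ are mutually independent, with each $Z_e^{(\ell)}$ being Bernoulli on $\{-1,+1\}$ with mean $\rho$. This follows from the bijection $\bx \leftrightarrow (x_r, (z_e)_e)$ and the factorization $P_\lambda(\bx) \propto \exp(\lambda \sum_e z_e)$. Consequently, the oracle estimator $\hrho^{(\calT)}(\halpha n) := \frac{1}{p-1}\sum_{e \in \calE(\calT)}\hrho_e(\halpha n)$ is an average of $(p-1)\halpha n$ iid Bernoulli variables with mean $\rho$, and Chernoff yields $\Pr(|\hrho^{(\calT)}(\halpha n) - \rho| > \delta \mid \halpha) \leq 2e^{-(p-1)\halpha n D_{\delta}}$ where $D_{\delta} := \min\{D(\tfrac{1+\rho+\delta}{2} \| \tfrac{1+\rho}{2}), D(\tfrac{1+\rho-\delta}{2} \| \tfrac{1+\rho}{2})\} > 0$. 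Under Assumption~\ref{assump:smalldeg} ($p\ge 82$) and $\halpha \geq 0.8$ (Table~\ref{table:alpha}), the per-sample exponent $(p-1)\halpha D_{\delta}$ handily dominates $c_{\rho} K_{\Passive}(\rho)$.

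For the lower-tail inequality I would use that $\hatcalT_{\rm{SCL}}(\halpha n)$ is the maximum-weight spanning tree of the empirical correlations, so $\hrho(\halpha n) \geq \hrho^{(\calT)}(\halpha n)$ holds deterministically. Hence $\{\hrho(\halpha n) < \rho - (1-\rho)/6\} \subseteq \{\hrho^{(\calT)}(\halpha n) < \rho - (1-\rho)/6\}$, and applying the Chernoff bound above with $\delta = (1-\rho)/6$ immediately gives the required exponent. For the upper-tail inequality I would decompose
\begin{equation*}
\Pr\Big(\hrho(\halpha n) > \rho + \tfrac{2(1-\rho)}{9}\Big) \leq \Pr\Big(\hrho^{(\calT)}(\halpha n) > \rho + \tfrac{2(1-\rho)}{9}\Big) + \Pr\Big(\bigcup_{t \geq 3} E_t(\hatcalT_{\rm{SCL}},\calT,\halpha n)\Big) + \Pr(\calU_2),
\end{equation*}
where $\calU_2$ is the event that $\hatcalT_{\rm{SCL}}$ differs from $\calT$ only by $2$-hop swaps while $\hrho(\halpha n) > \rho + 2(1-\rho)/9$. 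The first summand is controlled by Chernoff; the second by Lemma~\ref{lem:thoperror}.

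The main obstacle is bounding $\Pr(\calU_2)$. I would enumerate over post-swap trees $T_S$ (indexed by the set $S$ of $2$-hop swaps) and bound $\Pr(\hatcalT_{\rm{SCL}} = T_S, \hrho(\halpha n) > \rho + \epsilon) \leq \Pr(\hrho^{(T_S)}(\halpha n) > \rho + \epsilon)$, where $\hrho^{(T_S)} := \frac{1}{p-1}\sum_{e\in T_S}\hrho_e$. Since every shortcut edge $(i,k) \in T_S \setminus \calE(\calT)$ has true correlation $\rho^2 < \rho$, the mean is $\bbE[\hrho^{(T_S)}] = \rho - |S|\rho(1-\rho)/(p-1) < \rho$, so an upward deviation to $\rho + \epsilon$ actually requires a \emph{larger} fluctuation than in the correct-tree case. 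The delicate point is that $\hrho^{(T_S)}$ mixes true-edge correlations $\hrho_e$ with shortcut correlations $\hrho_{(i,k)} = (\halpha n)^{-1}\sum_\ell Z_{ij}^{(\ell)}Z_{jk}^{(\ell)}$, which share the Bernoulli $Z_{jk}^{(\ell)}$ with the retained edge $(j,k) \in T_S \cap \calE(\calT)$. To decouple this I would work in the $(X_r,(Z_e^{(\ell)})_{e \in \calE(\calT),\,\ell})$ coordinates, where all randomness is iid Bernoulli, express $\hrho^{(T_S)}$ as an explicit linear functional of the per-sample $Z_e^{(\ell)}$'s, and apply Cram\'er's theorem to that functional; the resulting rate function scales with $p$. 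A union bound over the at most $p^{p-2}$ candidate trees is absorbed as a constant (in $n$) prefactor, producing an exponent that again exceeds $c_{\rho} K_{\Passive}(\rho)$.
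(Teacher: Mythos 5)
Your lower-tail argument is essentially the paper's: the MWST property gives $\hrho(\halpha n)\geq\frac{1}{p-1}\sum_{e\in\calE(\calT)}\hrho_e(\halpha n)$ deterministically, the edge products $X_iX_j$ over true edges are i.i.d.\ (Lemma~\ref{lem:indeppair}), and a Chernoff/Sanov bound with rate $0.8(p-1)D\big(\tfrac{7}{6}\theta\big\|\theta\big)$ finishes it. Two caveats even here: (i) "handily dominates" is not a proof --- both $D_\delta$ with $\delta\propto(1-\rho)$ and $K_{\Passive}(\rho)$ vanish as $\rho\to 1$, and the paper needs a computer-aided verification that $130\,D(\tfrac{7}{6}\theta\|\theta)$ and $125\,D(\tfrac{5}{6}\theta\|\theta)$ exceed $c_\rho K_{\Passive}(\rho)$ uniformly (the margins are not huge, roughly $1.7\theta$ versus $1.4\theta$ near $\rho=1$); (ii) the randomness of $\halpha$ has to be handled by summing over its seven possible values, which is routine but should be stated.

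The genuine gap is in your treatment of $\calU_2$. The paper does \emph{not} attempt a direct large-deviations analysis of $\hrho^{(T_S)}$ over all $2$-hop-swapped trees $T_S$. Instead it intersects with the additional event $\calD_1$ that no two \emph{independent} $t$-hop errors occur; $K(\calD_1^{\complement})\geq 2\times 0.8\times K_2=1.6K_{\Passive}(\rho)$ by Lemmas~\ref{lem:indeppair} and~\ref{lem:mono}, and on $\calD_1\cap\calD_2$ all erroneous edges are pairwise dependent, so at most $3d-2$ edges of $\hatcalT_{\rm{SCL}}$ are wrong. Their empirical correlations can then be crudely bounded by $1$, costing only $3d/(p-1)\leq 3/81$ of the margin (this is precisely where Assumption~\ref{assump:smalldeg} enters), and the remaining $\geq p-3d-1$ true edges give an i.i.d.\ sum. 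Your route must instead control $\min_{S}$ of the Cram\'er rate of $\hrho^{(T_S)}$ at level $\rho+\tfrac{2(1-\rho)}{9}$ uniformly over sets $S$ of size up to $\Theta(p)$, and the step you wave at --- "the resulting rate function scales with $p$" --- is exactly what is unproven. Note that $\hrho^{(T_S)}$ is not a linear functional of the $Z_e^{(\ell)}$ (the shortcut terms are products $Z_{ij}^{(\ell)}Z_{jk}^{(\ell)}$), and each such product is \emph{positively} correlated with the retained term $Z_{jk}^{(\ell)}$ (covariance $\rho(1-\rho^2)>0$), which inflates the variance of the sum and pushes the rate \emph{down}, in competition with the favorable downward mean shift $-|S|\rho(1-\rho)/(p-1)$. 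Without a quantitative argument that the mean-shift effect wins uniformly in $|S|$ (and in $\rho$), the claim that the exponent exceeds $c_\rho K_{\Passive}(\rho)$ does not follow. Either supply that uniform bound, or adopt the paper's device of excluding independent error pairs to cap the number of wrong edges at $O(d)$.
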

The purpose of Lemma \ref{lem:rhobound} is to provide   bounds on $\hrho(\halpha n)$ to facilitate the  lower  and upper bounding the probability of the event that some edge is identified as confident.
\begin{restatable}{lemma}{lemconfide}
\label{lem:confide}
For a homogeneous Ising tree with zero external field parameterized by $0<\rho<1$ that satisfies Assumption \ref{assump:smalldeg}, $\hatcalT_{\rm{SCL}}(\halpha n)$ learned in  \ac{alathe} in Algorithm~\ref{algo:alathe} satisfies
\begin{align} 
	\varliminf_{n\rightarrow\infty}-\frac{1}{n}\log\Pr\Big(\big(\calE\big(\hatcalT_{\rm{SCL}}(\halpha n)\big)\backslash \calE(\calT)\big)\cap \scC_{\mathrm{E}}(\halpha n)\neq \emptyset\Big)\geq c_{\rho}K _{\Passive}(\rho),\label{ieq:confide}
\end{align}
where $\scC_{\mathrm{E}}(\halpha n)$ is the set of confident edges (see Definition \ref{def:confide}). 
\end{restatable}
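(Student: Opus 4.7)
The plan is to decompose the bad event in \eqref{ieq:confide} according to the length $t$ of the true path in $\calT$ corresponding to each erroneously estimated edge. If $\{i,k\}\in \calE(\hatcalT_{\rm SCL}(\halpha n))\setminus \calE(\calT)$, then $\mathrm{Path}_{\calT}(i,k)$ has some length $t\geq 2$, and the $t$-hop error event $E_t(\mathrm{Path}_{\calT}(i,k),\halpha n)$ must occur. For $t\geq 3$, Lemma \ref{lem:thoperror} together with a union bound over the at most $\binom{p}{2}$ candidate endpoint pairs already gives an exponent of at least $c_\rho K _{\Passive}(\rho)$; the rest of the proof focuses on the $t=2$ contribution.

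For an ordered triple $(i,j,k)$ with $\{i,j\},\{j,k\}\in \calE(\calT)$, consider the event that $\{i,k\}$ enters $\hatcalT_{\rm SCL}(\halpha n)$ and is declared confident. A Kruskal-type run on a 2-hop error cannot simultaneously discard both $\{i,j\}$ and $\{j,k\}$ when it inserts $\{i,k\}$, so $j$ stays adjacent (in $\hatcalT_{\rm SCL}$) to at least one endpoint of $\{i,k\}$; confidence at that endpoint therefore triggers $\calC_{ikj}(\halpha n)$ or $\calC_{kij}(\halpha n)$, and in particular the inequality
\begin{align}
    \hrho_{ij}(\halpha n) \;\leq\; \hrho_{ik}(\halpha n)\cdot \frac{11+9\hrho(\halpha n)}{20}.
\end{align}
Since $(11+9\hrho)/20<1$ for $\hrho<1$, this is \emph{strictly stronger} than the naked 2-hop error $\{\hrho_{ik}\geq \hrho_{ij}\}$ whose exponent is precisely $K _{\Passive}(\rho)$ (as computed in \cite{tandon2020exact}); the confidence rule therefore costs an extra multiplicative slack that must be paid for in large deviations.

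To turn the above into a clean exponent, I invoke Lemma \ref{lem:rhobound} to replace the random multiplier $\hrho(\halpha n)$ by its deterministic upper bound $\rho+2(1-\rho)/9$; the residual event on which that replacement fails already has exponent $\geq c_\rho K _{\Passive}(\rho)$ and is absorbed. Conditioning on $\halpha=\alpha_0$ from the discrete set in Table \ref{table:alpha} and applying Sanov's theorem to the empirical joint law of $(X_i,X_j,X_k)$ (whose true law under $P_\lambda$ is determined by $\rho$ alone via the correlation-decay identity $\rho_{ik}=\rho^2$), the per-triple contribution becomes
\begin{align}
    \alpha_0\cdot \inf\!\Big\{ D(Q\,\|\,P_{ijk}) : \bbE_Q[X_iX_j]\leq \bbE_Q[X_iX_k]\cdot \tfrac{11+9(\rho+2(1-\rho)/9)}{20}\Big\}.
\end{align}
A single-parameter exponential tilt argument along the Ising family $\{P_\lambda\}$, parallel to the derivation of $K _{\Passive}(\rho)=-\log\bigl(1-\tfrac{1-\rho}{2}(1-\sqrt{1-\rho^2})\bigr)$ in \cite{tandon2020exact}, reduces this infimum to a closed-form function of $\rho$. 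A union bound over the $O(pd)$ triples in $\calT$ does not spoil the exponent.

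The main obstacle, and the place where the piecewise structure of Tables \ref{table:alpha} and \ref{table:crho} is really used, is the numerical end-game: on each sub-interval of $\rho$ in Table \ref{table:crho}, I must verify that the explicit rate from the previous display, multiplied by the smallest $\alpha_0$ permitted on that sub-interval, exceeds $c_\rho K _{\Passive}(\rho)$. The two tables are calibrated so that $\alpha_0$ is pushed down (i.e.\ more samples are reallocated to the local phase) exactly when the confidence margin $1-(11+9\rho)/20$ becomes small, which is also the regime where $K _{\Passive}(\rho)$ itself collapses. Since the resulting inequality is univariate in $\rho$ on each sub-interval, the verification reduces to a finite collection of elementary monotonicity checks in $\rho$, which when combined with the $t\geq 3$ bound and the $\hrho$-replacement bound close the proof of \eqref{ieq:confide}.
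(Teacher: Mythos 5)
Your proposal follows essentially the same route as the paper: decompose the confident-but-wrong event by the hop-length $t$ of the true path, dispose of $t\geq 3$ via Lemma~\ref{lem:thoperror}, and for $t=2$ extract the key inequality $\hrho_{ij}\leq \hrho_{ik}\cdot\frac{11+9\hrho}{20}$ from the confidence event, use Lemma~\ref{lem:rhobound} to replace $\hrho$ by $\rho+\frac{2(1-\rho)}{9}$ (yielding the multiplier $\frac{13+7\rho}{20}$), apply Sanov's theorem conditioned on $\halpha$, and finish with a per-sub-interval numerical check against $c_\rho K_{\Passive}(\rho)$. The only cosmetic deviation is your claim that the resulting infimum admits a closed form via an exponential tilt --- the paper instead evaluates the convex program $\min_{Q}\{D(Q\|P):\rho^{(Q)}_{ij}\leq\rho^{(Q)}_{ik}\frac{13+7\rho}{20}\}$ numerically and compares it to $K_3(\hatcalT_{\rm SCL},\calT,\rho)$ so as to reuse the sets $\calM$ from Lemma~\ref{lem:thoperror}; this does not affect the validity of your argument since the numerical end-game you describe works either way.
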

The proof of Lemma \ref{lem:confide} decomposes $E_{{\rm Conf}}$ into two disjoint events
\begin{align}
	E_{{\rm Conf}}(n)&=\big(E_{{\rm Conf}}(n)\cap E_{\rho}(n)\big)\cup\big(E_{{\rm Conf}}(n)\cap E_{\rho}(n)^{\complement}\big),
\end{align}
where 
\begin{align}
	E_{\rho}(n)&\triangleq\Big\{\rho-\frac{1-\rho}{6}\leq\hrho(\halpha n)\leq\rho+\frac{2(1-\rho)}{9}\Big\},
\end{align}
and it applies  Lemma \ref{lem:rhobound} as follows
\begin{align}
    K (E_{{\rm Conf}})\geq\min \big\{K (E_{{\rm Conf}}\cap E_{\rho}),K (E_{{\rm Conf}}\cap E_{\rho}^{\complement})\big\}\geq \min \big\{K (E_{{\rm Conf}}\cap E_{\rho}),K (E_{\rho}^{\complement})\big\}.
\end{align}
Then Lemma \ref{lem:confide} shows that $K (E_{{\rm Conf}}\cap E_{\rho})\geq c_{\rho}K _{\Passive}(\rho)$, and Lemma \ref{lem:rhobound} proves that $K (E_{\rho}^{\complement})\geq c_{\rho}K _{\Passive}(\rho)$. Thus, we have $K (E_{{\rm Conf}})\geq c_{\rho}K _{\Passive}(\rho)$.

To state the next lemma, we recall the notion of an induced subgraph. For a graph $\calG=(\calV,\calE)$ and a set of nodes $\calV^{\prime}\subseteq \calV$, the  {\em subgraph induced} by $\calV^{\prime}$ on $\calG$ is   denoted as $\calG(\calV^{\prime}) \triangleq (\calV^{\prime},\calE^{\prime})$, where 
$\calE^{\prime}\triangleq\{\{i,j\}\ | \ \{i,j\}\in \calE \text{ and }i,j\in\calV^{\prime}\}$. 

\begin{restatable}{lemma}{lempacknum}
\label{lem:packnum}
For a homogeneous Ising tree with zero external field  parameterized by the correlation coefficient $0<\rho<1$ and satisfies Assumption \ref{assump:smalldeg}, the packing number $N\big(\scC_{\mathrm{E}}(\halpha n)^{\complement},\calT\big(\scC_{\mathrm{V}}(\halpha n)^{\complement}\big),2\big)$ of the unconfident edges set $\scC_{\mathrm{E}}(\halpha n)^{\complement}$ 
learned in Algorithm~\ref{algo:alathe} satisfies
\begin{align}
	K\Big(\big\{N(\scC_{\mathrm{E}}(\halpha n)^{\complement},\calT(\scC_{\mathrm{V}}(\halpha n)^{\complement}),2)\geq 13\big\}_{n\in\bbN}\Big)>c_{\rho}K _{\Passive}(\rho).
\end{align}
\end{restatable}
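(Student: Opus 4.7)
The plan is to show that requiring thirteen mutually well-separated unconfident edges is an exponentially rare deviation because each one costs its own ``budget'' on independent sample averages, and thirteen such costs collectively exceed the single global exponent $c_{\rho}K_{\Passive}(\rho)$. I would begin by using Lemmas~\ref{lem:thoperror} and~\ref{lem:rhobound} to condition on the ``nice'' event
\begin{equation*}
    \calN \triangleq \Bigl\{\hatcalT_{\rm{SCL}}(\halpha n) \text{ has no } t\text{-hop error for } t\ge 3\Bigr\} \cap \Bigl\{\rho-\tfrac{1-\rho}{6}\le \hrho(\halpha n)\le \rho+\tfrac{2(1-\rho)}{9}\Bigr\}.
\end{equation*}
The two lemmas together show $K(\calN^{\complement})\ge c_{\rho}K_{\Passive}(\rho)$, so on $\calN^{\complement}$ we obtain the bound essentially for free; we inherit a strict improvement from the thirteen-packing condition only on $\calN$, and the overall exponent is the minimum of the two contributions.

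On $\calN$ the learned tree $\hatcalT_{\rm{SCL}}(\halpha n)$ differs from $\calT$ only through strictly local 2-hop rewirings, so the unconfident edge set and the induced subgraph $\calT\bigl(\scC_{\mathrm{V}}(\halpha n)^{\complement}\bigr)$ are pinned down by correlation statistics along short sub-paths of $\calT$. The event $\{N\ge 13\}$ is the union, over all $2$-packings $S\subseteq \calE(\calT)$ of size $13$, of $\calA_{S}\triangleq\{\text{every edge in }S\text{ is unconfident}\}$. There are at most $\binom{p-1}{13}$ such packings, so by the union bound $\Pr(\{N\ge 13\}\cap \calN) \le \binom{p-1}{13}\max_{S}\Pr(\calA_{S}\cap\calN)$, and the binomial prefactor is subexponential in $n$ and hence inconsequential for the exponent.

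For each fixed packing $S=\{e_{1},\ldots,e_{13}\}$, the unconfidence of $e_{s}=\{i_{s},j_{s}\}$ forces the failure of some local confident event $\calC_{i_{s}j_{s}k_{s}}$ or $\calC_{l_{s}i_{s}j_{s}}$, which depends only on the three consecutive correlation estimates on the length-$2$ sub-path around $e_{s}$. Because the edges in $S$ are pairwise at edge geodesic distance $\ge 2$ in $\calT$, the length-$2$ sub-paths assigned to distinct $e_{s}$'s use disjoint collections of true-tree edges and therefore rely on disjoint coordinates of the sufficient-statistic vector $\{X_{a}^{(m)}X_{b}^{(m)}\}_{\{a,b\}\in \calE(\calT)}$. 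Conditional on $\hrho$ being close to $\rho$, the thirteen failure events factor and yield
\begin{equation*}
    \Pr(\calA_{S}\cap \calN) \;\le\; \prod_{e\in S}\Pr\bigl(\text{local confident event fails on }e\bigr) \;\le\; \exp\bigl(-13\,\halpha\, K_{\star}(\rho)\, n\bigr),
\end{equation*}
where $K_{\star}(\rho)$ is the Chernoff/Sanov exponent of a single event $\calC_{ijk}^{\complement}$ computed with the critical ratio $(11+9\rho)/20$ and the true $\rho_{ik}=\rho^{2}$, $\rho_{ij}=\rho_{jk}=\rho$.

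The concluding step is the numerical verification that $13\,\halpha\, K_{\star}(\rho)>c_{\rho}K_{\Passive}(\rho)$ uniformly in $\rho\in(0,1)$, using $\halpha\ge 0.8$ from Table~\ref{table:alpha} together with the closed-form expression for $K_{\Passive}(\rho)$ and a Chernoff bound for $K_{\star}(\rho)$; this reduces to seven scalar inequalities, one for each interval of $\rho$ in Table~\ref{table:crho}. I expect the main technical obstacle to be the factorization argument: although a 2-packing forces disjoint edges, the endpoint nodes of two packed edges may coincide with an intermediate node of a neighbor's sub-path, so strict independence fails and one must instead extract true independence from the sub-vectors of sample-wise products on edge-disjoint sub-paths (or, if needed, lightly strengthen the packing radius in the bookkeeping while preserving the count $13$). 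Once factorization is secured, the numerical verification is a routine but slightly delicate piecewise computation; the constants $11$, $9$, $20$ in $\calC_{ijk}$ and the threshold $13$ in the statement have been chosen precisely so that this final inequality is strict on every interval.
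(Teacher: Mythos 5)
Your proposal follows essentially the same route as the paper: reduce to the regime of only $2$-hop errors and a controlled $\hrho$ (via Lemmas~\ref{lem:thoperror} and~\ref{lem:rhobound}), observe that thirteen $2$-separated unconfident edges force thirteen local failure events supported on edge-disjoint length-$2$ sub-paths of $\calT$, factorize using the independence of the edge-products $X_iX_j$ (Lemma~\ref{lem:indeppair}), and conclude with $13\times 0.8$ times a single-triple Sanov exponent checked numerically against $c_\rho K_{\Passive}(\rho)$; the paper packages the separation/factorization step as ``independent unconfident triples'' $T_{\rm uc}$ rather than a union bound over packings, but this is cosmetic. One correction: your single-event exponent $K_\star(\rho)$ should not be computed with the critical ratio $(11+9\rho)/20$, since on your conditioning event you only know $\hrho\geq \rho-\tfrac{1-\rho}{6}$; substituting this worst case gives the ratio $\tfrac{19+21\rho}{40}$, which is the threshold the paper actually optimizes over in $\min_{Q}\{D(Q\|P):\rho_{ik}^{(Q)}\geq\rho_{ij}^{(Q)}\tfrac{19+21\rho}{40}\}$ --- using the plug-in $\hrho=\rho$ would overstate the exponent and invalidate the lower bound, though the fix lives entirely inside the event $\calN$ you already condition on.
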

The purpose of Lemma~\ref{lem:packnum} is to provide a lower bound for the number of scalar samples of the set of unconfident nodes. The number $13$ is chosen to ensure that the number of unconfident nodes $\tilp(\halpha n)<26d$ with high probability, and the number of scalar samples of each unconfident node $(1-\halpha)\frac{p}{\tilp(\halpha n)}$ is then guaranteed to be sufficiently large. Together with Lemmas~\ref{lem:tilpbound} and~\ref{lem:numsample}, Lemma~\ref{lem:packnum} shows that we can safely assume   that the number of scalar samples of each unconfident node is lower bounded by $[\halpha+3(1-\halpha)]n=(3-2\halpha)n$. Thus, we have a sufficient number of scalar samples to boost the correctness of the unconfident parts of the graph in the local refinement stage of \ac{alathe}. 

\begin{restatable}{lemma}{lemunconfide}
\label{lem:unconfide}
For a homogeneous Ising tree with zero external field parameterized by $0<\rho<1$ that satisfies Assumption \ref{assump:smalldeg}, the error exponent of the event that the subgraph of unconfident nodes in $\hatcalT_{\Active}(n)$ in Algorithm~\ref{algo:alathe} is wrong is lower bounded as
\begin{align}
	K \Big(\big\{\hatcalT_{\Active}\big(\scC_{\mathrm{V}}(\halpha n)^{\complement}\big)\neq \calT\big(\scC_{\mathrm{V}}(\halpha n)^{\complement}\big)\big\}_{n\in\bbN}\Big)\geq c_{\rho}K _{\Passive}(\rho).
\end{align}
\end{restatable}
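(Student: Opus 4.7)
The plan is to split the error event on the unconfident nodes according to two ``typicality'' sub-events whose complements already have error exponents at least $c_\rho K_{\Passive}(\rho)$, and then on the typical event argue that the local SCL step succeeds at an enhanced passive exponent $(3-2\halpha) K_{\Passive}(\rho)$, which matches $c_\rho K_{\Passive}(\rho)$ by a direct comparison of Tables~\ref{table:alpha} and~\ref{table:crho}.

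I would first define the two good sub-events: $E_\rho \triangleq \{\rho - (1-\rho)/6 \leq \hrho(\halpha n) \leq \rho + 2(1-\rho)/9\}$ (as in the proof of Lemma~\ref{lem:confide}) and $E_{\mathrm{pack}} \triangleq \{N(\scC_{\mathrm{E}}(\halpha n)^{\complement}, \calT(\scC_{\mathrm{V}}(\halpha n)^{\complement}), 2) \leq 13\}$. Lemma~\ref{lem:rhobound} gives $K(E_\rho^{\complement}) \geq c_\rho K_{\Passive}(\rho)$, and Lemma~\ref{lem:packnum} together with the stated auxiliary Lemmas~\ref{lem:tilpbound} and~\ref{lem:numsample} (which translate the packing bound into $\tilp(\halpha n) \leq 26d$) give $K(E_{\mathrm{pack}}^{\complement}) > c_\rho K_{\Passive}(\rho)$. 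Hence it suffices to bound the exponent of the intersection $\{\hatcalT_{\Active}(\scC_{\mathrm{V}}(\halpha n)^{\complement}) \neq \calT(\scC_{\mathrm{V}}(\halpha n)^{\complement})\} \cap E_\rho \cap E_{\mathrm{pack}}$.

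On $E_\rho$, the quantization rule of Table~\ref{table:alpha} forces $\halpha$ to take the specific value paired with the interval of $\rho$ in Table~\ref{table:crho}, and a row-by-row check yields the key numerical inequality $3 - 2\halpha \geq c_\rho$. On $E_{\mathrm{pack}}$, the bound $\tilp(\halpha n) \leq 26d$ combined with Assumption~\ref{assump:smalldeg} gives $p / \tilp(\halpha n) > 3$, so for every pair of unconfident nodes $(i,j)$ belonging to the same component of $(\scC_{\mathrm{V}}(\halpha n)^{\complement}, \scC_{\mathrm{E}}(\halpha n)^{\complement})$, the effective number of scalar samples available for the local estimate of $\hrho_{ij}$ is at least
\begin{equation*}
\halpha n + (1-\halpha) n \cdot p / \tilp(\halpha n) \geq (3 - 2\halpha)\, n.
\end{equation*}
Conditioning on a realization of $(\halpha, \scC_{\mathrm{V}}(\halpha n)^{\complement})$, each connected component of the unconfident subgraph is relearned by SCL with at least $(3-2\halpha)n$ samples per pairwise correlation. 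Since the passive exponent $K_{\Passive}(\rho)$ depends only on $\rho$ and not on the subtree structure, the large-deviation bounds underlying the passive analysis show that the SCL error probability on a component has exponent at least $(3-2\halpha) K_{\Passive}(\rho)$; a union bound over the at most $p$ components contributes only a polynomial prefactor that is absorbed in the exponent.

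Combining the three ingredients,
\begin{align*}
K\big(\{\hatcalT_{\Active}(\scC_{\mathrm{V}}(\halpha n)^{\complement}) \neq \calT(\scC_{\mathrm{V}}(\halpha n)^{\complement})\}\big)
&\geq \min\big\{K(E_\rho^{\complement}),\, K(E_{\mathrm{pack}}^{\complement}),\, (3-2\halpha)K_{\Passive}(\rho)\big\} \\
&\geq c_\rho K_{\Passive}(\rho).
\end{align*}
The main obstacle will be the bookkeeping of how the random partition of nodes into confident and unconfident pieces interacts with $\calT$: the connected components of the unconfident subgraph in $\hatcalT_{\mathrm{SCL}}(\halpha n)$ need not coincide with the connected components of $\calT$ restricted to those nodes, and one must verify that the discrepancy is already absorbed into $E_\rho^{\complement} \cup E_{\mathrm{pack}}^{\complement}$ via Lemmas~\ref{lem:thoperror}, \ref{lem:rhobound}, and~\ref{lem:packnum}, so that on $E_\rho \cap E_{\mathrm{pack}}$ the local SCL is genuinely operating on a subforest of $\calT$ with the claimed effective sample count.
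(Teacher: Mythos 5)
Your overall architecture is the same as the paper's: peel off a $\hrho$-concentration event and a packing event whose complements are controlled by Lemmas~\ref{lem:rhobound} and~\ref{lem:packnum}, then on the good event convert the packing bound into $\tilp(\halpha n)<26d$, hence $p/\tilp(\halpha n)>3$ via Assumption~\ref{assump:smalldeg}, hence at least $(3-2\halpha)n$ samples per unconfident node and an exponent of $(3-2\halpha)K_{\Passive}(\rho)$ for the relearned components. However, your key numerical step --- ``on $E_\rho$ the quantization of Table~\ref{table:alpha} forces $3-2\halpha\geq c_\rho$'' --- is false for moderate and small $\rho$. The event $E_\rho$ only guarantees $\hrho\geq\rho-\frac{1-\rho}{6}$, and this lower bound can fall below the relevant quantization boundary of Table~\ref{table:alpha}. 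Concretely, for $\rho=0.41$ one has $\rho-\frac{1-\rho}{6}\approx 0.3117<0.34$, so on $E_\rho$ the algorithm may still select $\halpha=0.95$, giving $3-2\halpha=1.1<1.19=c_\rho$; the same failure occurs throughout $\rho\in[0.4,0.434)$ and in the lower intervals (e.g.\ at $\rho=0.2$, $E_\rho$ even permits $\halpha=0.995$, giving $1.01<1.08$). This is why the paper does \emph{not} reuse $E_\rho$ here: for each interval of $\rho$ it introduces a bespoke bad event $\{\halpha>\alpha^*\}$, rewrites it as $\{\hrho<\tau\}$ where $\tau$ is the fixed boundary of Table~\ref{table:alpha} (e.g.\ $\tau=0.76$ for $\rho\in[0.8,1)$, $\tau=0.53$ for $\rho\in[0.6,0.8)$), expresses $\tau$ as $\rho-\frac{1-\rho}{k}$ with an interval-specific $k$ ($k=5$, $k=5.72$, etc.), and verifies numerically (Figs.~\ref{fig:hrho5} and~\ref{fig:hrho57}) that the exponent of this tailored deviation exceeds $c_\rho K_{\Passive}(\rho)$. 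Your proof needs this per-interval construction and verification; the generic Lemma~\ref{lem:rhobound} event does not suffice.

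Two smaller points. First, your $E_{\mathrm{pack}}$ is defined with $N(\cdot)\leq 13$, but Lemma~\ref{lem:packnum} controls $\{N\geq 13\}$ and Lemma~\ref{lem:tilpbound} requires $N\leq 12$ to conclude $\tilp(\halpha n)<26d$; with $N\leq 13$ you would only get $\tilp<28d$, and then $3\tilp<84d$ is not dominated by $p\geq 82d$, so Lemma~\ref{lem:numsample} breaks. The threshold must be $12$. Second, your closing worry about confident edges appearing inside the induced subgraph on $\scC_{\mathrm{V}}(\halpha n)^{\complement}$ is legitimate; the paper disposes of it at the start of its proof by observing that errors on confident edges are already covered by Lemma~\ref{lem:confide}, so the analysis can restrict to the unconfident edges.
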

The proof of Lemma \ref{lem:unconfide} decomposes $E_{{\rm Unconf}}$ as
\begin{align}
	E_{{\rm Unconf}}(n)&=\big(E_{{\rm Unconf}}\cap E_{{\rm SS}}(n)\big)\cup \big(E_{{\rm Unconf}}\cap E_{{\rm SS}}(n)^{\complement}\big)\text{,}\\
	\text{where }E_{{\rm SS}}(n)&\triangleq\Big\{N(\scC_{\mathrm{E}}(\halpha n)^{\complement},\calT\big(\scC_{\mathrm{V}}(\halpha n)^{\complement}),2\big)\leq 12\Big\}.
\end{align}
We note that the event $E_{{\rm SS}}$, whose subscript stands for \underline{S}ufficient \underline{S}amples, implies that there are at least $[3-2\halpha]n$ samples for each unconfident node. Then we have
\begin{align}
	K (E_{{\rm Unconf}})&\geq \min \big\{K (E_{{\rm Unconf}}\cap E_{{\rm SS}}),K (E_{{\rm Unconf}}\cap E_{{\rm SS}}^{\complement})\big\}\\
	&\geq \min\big\{K (E_{{\rm Unconf}}\cap E_{{\rm SS}}),K (E_{{\rm SS}}^{\complement})\big\}.
\end{align}
Lemma \ref{lem:unconfide} shows that $K (E_{{\rm Unconf}}\cap E_{{\rm SS}})\geq c_{\rho}K _{\Passive}(\rho)$, and Lemma \ref{lem:packnum} shows that $K (E_{{\rm SS}}^{\complement})\geq c_{\rho}K _{\Passive}(\rho)$. Thus, we 
have $K (E_{{\rm Unconf}})\geq c_{\rho}K_{\Passive}(\rho)$, as desired.

\section{Concluding Remarks and Future Work} \label{sec:conclusion}
In this paper, we have proposed an algorithm \ac{alathe} that improves the error exponent---and hence {\em significantly} reduces the error probability---of learning the structures of the simplest class of graphical models, namely, homogeneous Ising tree models. By exploiting the fact that the number of nodes $p$ is sufficiently large relative to the maximum degree of the graph (Assumption~\ref{assump:smalldeg}), we have been able to  exploit the statistical variability of the samples across the various parts of the graph to allocate a fraction  of the sample budget to those parts  that we are more unconfident of  (compared to other parts). This boosts the overall probability of success of learning the graph and we have also  corroborated this via  numerical experiments. 

The principle that we have outlined using our flood-and-breakage-repair example in Section~\ref{sec:work} and that is used to analyze \ac{alathe} seems to be  more generally applicable to other statistical learning problems (in which each constituent component ``has the same quality'') beyond graphical model selection. In future, we hope to exploit similar ideas to boost the performances of other statistical tasks such as high-dimensional property testing or ranking of a large number of objects when active learning is permitted. In addition, establishing an upper bound (impossibility result) on the error exponent for active learning of homogeneous trees as a function of $\rho$ is also a fruitful but challenging endeavor. We leave this as an open problem. 

\section{Appendices}\label{sec:app}

\subsection{General active learning algorithm in Section~\ref{sec:prelim}}\label{sec:appgenalgo}
The general algorithm for active structure learning is shown in Algorithm~\ref{alg:general}.
\begin{algorithm}[t]
\label{alg:general}
	\caption{General Active Structure Learning Algorithm}
	\textbf{Input:}  The number of nodes $p$, the number of vector samples $n$\\
	\textbf{Output:} Estimated tree structure $\hatcalT(n)$ \\
	\textbf{Procedure:}
	\begin{algorithmic}[1]\label{algo:generalal}
		\STATE Initialize $i=0$, $T_{0}=0$.
		\WHILE{$T_{i}<np$}
			\STATE $i=i+1$.\\
			\STATE Choose $S_{i}\subseteq [p]$ s.t. $|S_{i}|\geq 1$.\\
			\STATE Obtain $n_{i}$ independent samples $\bX_{S_{i}}$ having distribution $P_{S_{i}}$\\
			\STATE Learn partial statistics of the tree model using $\{\bX_{S_{1}},\bX_{S_{2}},\cdots,\bX_{S_{i}}\}$.\\
			\STATE Update $T_{i}=\sum_{j=1}^{i}n_{j}|S_{j}|$.
		\ENDWHILE
		\STATE Return the tree structure $\hatcalT(n)$ learned from $\{\bX_{S_{1}},\bX_{S_{2}},\cdots,\bX_{S_{i}}\}$.
	\end{algorithmic}
\end{algorithm}

\subsection{Proofs of results in Section~\ref{sec:pfview}}
In this section, we provide the proofs of the lemmas stated in the proof overview of Theorem \ref{thm:mainthem} in Section~\ref{sec:pfview}. We start with some additional definitions and results, then prove Lemmas~\ref{lem:thoperror} to~\ref{lem:unconfide}. 
\subsubsection{Additional Definition and Results}
  Define the set of possible values of $\halpha$ as  $\Psi\triangleq\{0.8,0.85,0.9,0.95,0.985,0.995,1\}$ (see Table~\ref{table:alpha}). To describe the relationship between two error events, we define the notion of independent $t$-hop errors. See Fig.~\ref{fig:indeperr} for an illustration of this definition.
\begin{definition}[Independent $t$-hop errors]
	Given a graph $\calG$, the two error events $E_{t_{1}}(\calP_{t_{1}},m)$ and $E_{t}(\calP_{t_{2}},m)$ (defined in Definition~\ref{def:thop}) are said to be {\em independent}  if $\calE (\calP_{t_{1}} )\cap\calE (\calP_{t_{2}} )= \emptyset$. 
\end{definition}
As shown in Fig.~\ref{fig:indeperr}, the node sets of the paths where two error events are independent can have some common nodes. We denote the event that there is an independent $t_{1}$-hop error and $t_{2}$-hop error in the graph $\calT$ as $\mathrm{IE}_{t_{1},t_{2}}(\calT,m)$. To emphasize that $\hatcalT_{{\rm SCL}}$ is not equal to the true tree $\calT$, we also write $\mathrm{IE}_{t_{1},t_{2}}(\calT,m)$ as $\mathrm{IE}_{t_{1},t_{2}}(\hatcalT_{{\rm SCL}},\calT,m)$.
The Ising tree models with zero-external field have the property that the products of terminal nodes of edges are independent\cite{nikolakakis2021predictive}.

\begin{figure}[t]
	\centering
	\includegraphics[width=0.665\textwidth]{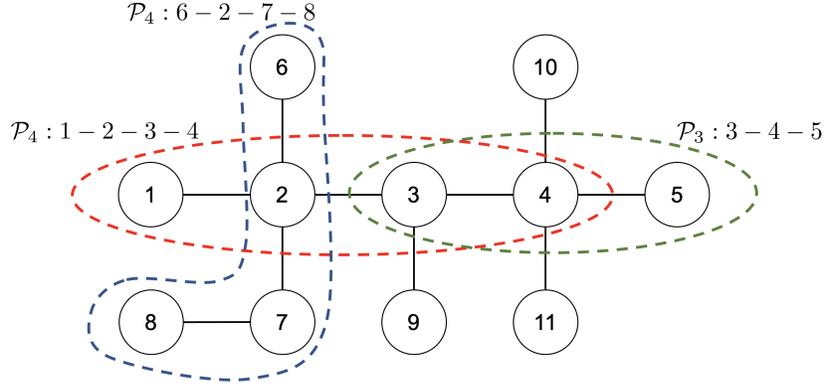}
	\caption{\small The 4-hop errors involving path $6-2-7-8$ and $1-2-3-4$ are independent, while the 4-hop error involving path $1-2-3-4$ and the 3-hop error involving $3-4-5$ are not independent.}
	\label{fig:indeperr}
\end{figure}
\begin{lemma}[Nikolakakis, Kalogerias, and Sarwate  \cite{nikolakakis2021predictive}]\label{lem:indeppair}
	If $\bX=(X_1, X_2, \ldots, X_p) \in \{+1,-1\}^{p}$ is a  binary random vector    drawn from a forest-structured distribution with uniform marginals for each entry $X_{i}$ in $\bX$, then the elements  $X_iX_j$
	of the collection of $|\calE|$ random variables $\{X_{i}X_{j}~|~\{i,j\}\in\calE\}$ are independent.
\end{lemma}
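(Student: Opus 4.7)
The plan is to exploit the tree-structured factorization of $P(\bx)$ together with the fact that uniform marginals on binary variables force each edge's joint distribution to be a one-parameter family. First I would reduce to the tree case, since a forest factorizes as a product over its connected components and independent collections remain independent when concatenated across components; so without loss of generality assume $\calT$ is a tree with $|\calE|=p-1$ edges.

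Next I would record two facts. First, by the standard tree factorization and $P(x_i)=1/2$, one has
\begin{equation}
P(\bx)=\prod_i P(x_i)\prod_{\{i,j\}\in\calE}\frac{P(x_i,x_j)}{P(x_i)P(x_j)}=\frac{1}{2^p}\prod_{\{i,j\}\in\calE}(1+\rho_{ij}x_ix_j),
\end{equation}
where in the last step I use that for $X_i,X_j\in\{+1,-1\}$ with uniform marginals and correlation $\rho_{ij}$, the pairwise joint is $P(x_i,x_j)=(1+\rho_{ij}x_ix_j)/4$. Second, fix an arbitrary root $r\in\calV$, and for each edge $e=\{i,j\}$ set $Y_e\triangleq X_iX_j$. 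Traversing the tree outward from $r$, every $X_k$ is obtained as $X_r\prod_{e\in\mathrm{Path}_{\calT}(r,k)}Y_e$ (since squares of $\pm 1$ variables are $1$), so the map $\bx\mapsto(x_r,\{y_e\}_{e\in\calE})$ is a bijection between $\{+1,-1\}^p$ and $\{+1,-1\}^{1+(p-1)}=\{+1,-1\}^p$.

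Under this change of variables, the right-hand side of the displayed factorization becomes
\begin{equation}
P(X_r=x_r,\{Y_e=y_e\}_{e\in\calE})=\frac{1}{2}\cdot\prod_{e\in\calE}\frac{1+\rho_e y_e}{2},
\end{equation}
which is a product of a marginal in $x_r$ and one marginal per $y_e$. This immediately yields that $X_r$ and all the edge products $\{Y_e\}$ are mutually independent, with each $Y_e$ taking value $+1$ with probability $(1+\rho_e)/2$ and $-1$ with the complementary probability; in particular the $\{Y_e\}$ are mutually independent, which is the claim.

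I do not foresee a serious obstacle: the only delicate point is verifying that the map $\bx\mapsto(x_r,\{y_e\})$ is genuinely a bijection on $\{+1,-1\}^p$, which requires that the $p-1$ relations $y_e=x_ix_j$ are algebraically independent in $\{+1,-1\}$, and this is exactly what acyclicity of $\calT$ provides (any cycle would impose a nontrivial multiplicative relation among the $y_e$). The forest extension and the observation that marginals of $Y_e$ read off from the factored form are then straightforward book-keeping.
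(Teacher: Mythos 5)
Your proof is correct and complete. The paper itself does not prove this lemma---it is imported verbatim from Nikolakakis, Kalogerias, and Sarwate---so there is no in-paper argument to compare against; your derivation is the standard one and is airtight: the uniform-marginal pairwise joints give $P(x_i,x_j)=(1+\rho_{ij}x_ix_j)/4$, the tree factorization then yields $P(\bx)=2^{-p}\prod_{\{i,j\}\in\calE}(1+\rho_{ij}x_ix_j)$, and the change of variables $\bx\mapsto(x_r,\{y_e\})$ is a bijection precisely because acyclicity lets you reconstruct every $x_k$ from $x_r$ and the edge products along ${\rm Path}_{\calT}(r,k)$, after which the pushforward visibly splits as a product measure. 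The reduction from forests to trees via independence across connected components is also handled correctly.
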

The independence of the products of terminal nodes of edges and the definition of $t$-hop errors imply that two so-called ``independent'' $t$-hop error events are indeed statistically independent. 

\begin{lemma}\label{lem:mono}
	In the homogeneous Ising tree model with zero external field parameterized by $0<\rho<1$, the error exponent for a $t$-hop error, $K _{t}(\calT,\rho)$, is strictly increasing in $t\geq 2$.
\end{lemma}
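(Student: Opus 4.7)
\textbf{Proof plan for Lemma~\ref{lem:mono}.} The plan is to derive a closed-form expression for $K_t(\calT,\rho)$ via Cramér's theorem, then to read off strict monotonicity in $t$ directly from the formula.

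\textbf{Step 1 (reduction to a single crossing event on one path).} Since $\calT$ contains only finitely many paths of length $t$ and every edge in the homogeneous Ising tree has correlation $\rho$, all paths are probabilistically equivalent. The event $E_{t}(\calP_{t},n)=\bigcup_{j=1}^{t}\{\hrho_{k_{1}k_{t+1}}(n)\geq \hrho_{k_{j}k_{j+1}}(n)\}$ is a union of $t$ events that are pairwise symmetric (the true edge correlations on the path are all equal to $\rho$). By the principle of the largest term, the exponent of $E_{t}(\calT,n)=\bigcup_{\calP_{t}\subseteq\calT}E_{t}(\calP_{t},n)$ equals the exponent of a \emph{single} event of the form $\{\hrho_{i_{1}i_{t+1}}(n)\geq \hrho_{i_{1}i_{2}}(n)\}$.

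\textbf{Step 2 (product structure via independence of edge products).} Writing $Y_{j}\triangleq X_{i_{j}}X_{i_{j+1}}$, the telescoping identity $X_{i_{1}}X_{i_{t+1}}=\prod_{j=1}^{t}Y_{j}$ holds because $X^{2}=1$. By Lemma~\ref{lem:indeppair}, the random variables $Y_{1},\ldots,Y_{t}$ are independent Bernoulli-type $\pm 1$ variables with common mean $\rho$. Let $W^{(\ell)}\triangleq Y_{1}^{(\ell)}\big(Y_{2}^{(\ell)}\cdots Y_{t}^{(\ell)}-1\big)$, so that the event $\{\hrho_{i_{1}i_{t+1}}\geq\hrho_{i_{1}i_{2}}\}$ is precisely $\{\tfrac{1}{n}\sum_{\ell}W^{(\ell)}\geq 0\}$. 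Since $\bbE[Y_{2}\cdots Y_{t}]=\rho^{t-1}$ and $Y_{1}$ is independent of $Y_{2}\cdots Y_{t}$, we have $\bbE[W]=\rho^{t}-\rho<0$ for $t\geq 2$, so $\{\bar{W}_{n}\geq 0\}$ is a genuine large-deviation event.

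\textbf{Step 3 (Cramér's theorem yields an explicit formula).} A direct computation of the MGF gives
\begin{equation*}
\bbE\big[e^{\lambda W}\big]=\tfrac{1+\rho^{t-1}}{2}+\tfrac{1-\rho^{t-1}}{2}\bigg[\tfrac{1+\rho}{2}e^{-2\lambda}+\tfrac{1-\rho}{2}e^{2\lambda}\bigg],
\end{equation*}
which is minimized at $\lambda^{\ast}=\tfrac{1}{4}\log\tfrac{1+\rho}{1-\rho}$, yielding $\inf_{\lambda}\bbE[e^{\lambda W}]=\tfrac{1+\rho^{t-1}}{2}+\tfrac{1-\rho^{t-1}}{2}\sqrt{1-\rho^{2}}$. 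By Cramér's theorem, the exponent is
\begin{equation*}
K_{t}(\calT,\rho)=-\log f(\rho^{t-1}),\qquad f(x)\triangleq\tfrac{1+x}{2}+\tfrac{1-x}{2}\sqrt{1-\rho^{2}}.
\end{equation*}
A sanity check at $t=2$ recovers $K_{\Passive}(\rho)=-\log\big(1-\tfrac{1-\rho}{2}(1-\sqrt{1-\rho^{2}})\big)$ from the excerpt.

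\textbf{Step 4 (monotonicity).} Since $f'(x)=\tfrac{1}{2}\big(1-\sqrt{1-\rho^{2}}\big)>0$ for $\rho\in(0,1)$, the map $x\mapsto f(x)$ is strictly increasing. Because $0<\rho<1$, the sequence $\rho^{t-1}$ is strictly decreasing in $t$, so $f(\rho^{t-1})>f(\rho^{t})$ and therefore $K_{t}(\calT,\rho)<K_{t+1}(\calT,\rho)$, which is the claim.

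\textbf{Main obstacle.} The only subtle point is the Step 1 reduction: one must justify that the polynomial (in $p$) number of paths and the $t$-fold union within each path do not perturb the exponent. This is standard (union bound plus the principle of the largest term) because all individual crossing events share the same exponent by the homogeneity assumption, but one should state it cleanly to avoid ambiguity. The MGF computation itself is mechanical once the independence from Lemma~\ref{lem:indeppair} is invoked.
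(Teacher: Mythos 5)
Your proposal is correct, and it arrives at the same closed-form exponent as the paper by a somewhat different route. The paper's proof also reduces to a single event $\{\hrho_{k_1k_{t+1}}\geq\hrho_{k_1k_2}\}$ on one path by symmetry, but then it \emph{cites} Lemma~1 of \cite{tandon2020exact} to write $K_t(\calP_t,1-2\theta)=-\log\big(1-\ttheta_{t-1}(1-\sqrt{4\theta(1-\theta)})\big)$, where $\ttheta_{t-1}=P_{t+1|2}(0|1)$ is the disagreement probability across $t-1$ hops, and finishes by an induction on the recursion $\ttheta_k=(1-2\theta)\ttheta_{k-1}+\theta$ to show $\ttheta_k$ is strictly increasing and stays in $(0,\tfrac12)$. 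You instead derive the rate function from scratch: the telescoping identity $X_{i_1}X_{i_{t+1}}=\prod_j Y_j$ together with Lemma~\ref{lem:indeppair} reduces the crossing event to a Chernoff computation for the bounded variable $W=Y_1(Y_2\cdots Y_t-1)$, and your $\inf_\lambda\bbE[e^{\lambda W}]=\tfrac{1+\rho^{t-1}}{2}+\tfrac{1-\rho^{t-1}}{2}\sqrt{1-\rho^2}$ is exactly the paper's expression, since $\ttheta_{t-1}=\tfrac{1-\rho^{t-1}}{2}$ and $\sqrt{4\theta(1-\theta)}=\sqrt{1-\rho^2}$; your monotonicity step (the geometric sequence $\rho^{t-1}$ is strictly decreasing and $f$ is strictly increasing) is equivalent to the paper's induction that $\ttheta_t>\ttheta_{t-1}$. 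What your approach buys is self-containedness (no appeal to the external lemma) and an explicit formula from which monotonicity is immediate; what the paper's approach buys is brevity by reusing a known result. The one point worth stating cleanly, as you note, is that the Chernoff upper bound must be matched by the Cramér lower bound (which holds here since $W$ is bounded, $\bbE[W]=\rho^t-\rho<0$, and $0$ lies in the interior of the support's convex hull), so that $K_t$ is an equality rather than only a one-sided bound --- strict monotonicity of $K_t$ genuinely needs both directions.
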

\begin{proof}[Proof of Lemma \ref{lem:mono}]
	Since the number of length-$t$ paths in $\calT$ is finite, if $\calT$ has at least one $\calP_{t}$ as its subgraph, Eqn.~\eqref{eq:thopee} can be expressed as
	\begin{align}
		K _{t}(\calT,\rho)=\varliminf_{n\rightarrow \infty}-\frac{1}{n}\log\Pr (E_{t}(\calP_{t},n)),
	\end{align}
	where $\calP_{t}$ is a path with length $t$ that is also parameterized by the correlation of adjacent nodes $\rho$. Thus, we only need to consider the path graph. Lemma 1 in \cite{tandon2020exact} shows that for a three-node path $1-2-3$ which is parameterized by $P_{1}(0)=\frac{1}{2}$, $P_{2|1}(0|1)=P_{2|1}(1|0)=\theta$ 
	and $P_{3|2}(0|1)=P_{3|2}(1|0)\triangleq\ttheta_{1}$, we have
	\begin{align}
		K_{2}(\calP_{2},1-2\theta)=\varliminf_{n\rightarrow \infty}-\frac{1}{n}\log\Pr\big(\hrho_{13}(n)\geq\hrho_{12}(n)\big) 
		=-\log\Big(1-\ttheta_{1}\big(1-\sqrt{4\theta(1-\theta)}\big)\Big),
	\end{align}
	where $\theta=\frac{1-\rho}{2}$ in the homogeneous Ising trees. To calculate the error exponent of $t$-hop error in the path with $\theta\in (0,0.5)$, we note that
	\begin{align}
		K _{t}(\calP_{t},1-2\theta)&=\varliminf_{n\rightarrow \infty}-\frac{1}{n}\log\Pr\bigg(\bigcup_{j=1}^{t}\{\hrho_{k_{1}k_{t+1}}\geq \hrho_{k_{j}k_{j+1}}\}\bigg)\\*
		&=\varliminf_{n\rightarrow \infty}-\frac{1}{n}\log\Pr\big(\hrho_{k_{1}k_{t+1}}\geq \hrho_{k_{1}k_{2}}\big),
	\end{align}
	which results from the symmetry of the underlying homogeneous tree model.

	Taking $\ttheta_{t-1}\triangleq P_{t+1|2}(0|1)=P_{t+1|2}(1|0)$, we have that
	\begin{align}
		\varliminf_{n\rightarrow \infty}-\frac{1}{n}\log\Pr\big(\hrho_{k_{1}k_{t+1}}\geq \hrho_{k_{1}k_{2}}\big)=-\log\Big(1-\ttheta_{t-1}\big(1-\sqrt{4\theta(1-\theta)}\big)\Big).
	\end{align}
	To prove that $\varliminf_{n\rightarrow \infty}-\frac{1}{n}\log \Pr\big(\hrho_{k_{1}k_{t+1}}\geq \hrho_{k_{1}k_{2}}\big)$ strictly increases with   $t$, we only need to show that 
	$\ttheta_{t-1}$ is strictly increasing in $t$. In fact, we prove that $\ttheta_{t}\in(0,\frac{1}{2})$ and $\ttheta_{t}>\ttheta_{t-1}$ for $t\geq 2$ by induction. When 
	$t=2$, $\ttheta_{2}=2\theta(1-\theta)\in (0,\frac{1}{2})$ and $\ttheta_{2}=2\theta(1-\theta)>\theta=\ttheta_{1}$. If $\ttheta_{t}>\ttheta_{t-1}$ and $\ttheta_{t}\in(0,\frac{1}{2})$ holds 
	for $t=2,\ldots,k-1$, then for $t=k$ we have
	\begin{align}
		\ttheta_{k}&=\ttheta_{k-1}(1-\theta)+(1-\ttheta_{k-1})\theta=(1-2\theta)\ttheta_{k-1}+\theta<(1-2\theta)\frac{1}{2}+\theta=\frac{1}{2}\\*
		\ttheta_{k}-\ttheta_{k-1}&=\theta(1-2\ttheta_{k-1})>0,
	\end{align}
	as desired.
\end{proof}

\subsubsection{Proof of Lemma \ref{lem:thoperror}}
We start with a basic lemma. 
\begin{lemma}\label{lem:twoseq}
    For two sequences of events $\calA=\big\{\calA(n)\big\}_{n\in\bbN}$ and $\calB=\big\{\calB(n)\big\}_{n\in\bbN}$, the exponent of $\calA$ satisfies
    $
        K ( \calA)\geq \min\{K ( \calA\cap \calB),K (\calB^{\complement})\}.
    $
\end{lemma}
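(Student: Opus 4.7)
The plan is to use the elementary union bound $\calA(n) \subseteq (\calA(n)\cap \calB(n)) \cup \calB(n)^{\complement}$, translate it to a probability inequality, and then take logarithms to pass to exponents. This is a standard ``max of two exponentials is dominated by the smaller exponent'' argument.

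Concretely, first I would write
\begin{equation*}
\Pr\!\big(\calA(n)\big) \;\le\; \Pr\!\big(\calA(n)\cap\calB(n)\big) + \Pr\!\big(\calB(n)^{\complement}\big).
\end{equation*}
Let $K_1 \triangleq K(\calA\cap\calB)$ and $K_2 \triangleq K(\calB^{\complement})$, and fix an arbitrary $\veps>0$. By the definition of $K$ via $\varliminf$, there exists $N(\veps)$ such that for all $n\ge N(\veps)$ both $\Pr(\calA(n)\cap\calB(n)) \le e^{-n(K_1-\veps)}$ and $\Pr(\calB(n)^{\complement}) \le e^{-n(K_2-\veps)}$ hold (with the usual convention that if $K_1$ or $K_2$ equals $+\infty$ the corresponding bound still goes to $0$ faster than any exponential, so we may replace $+\infty$ by an arbitrarily large finite constant).

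Next, summing the two bounds gives
\begin{equation*}
\Pr\!\big(\calA(n)\big) \;\le\; 2\, e^{-n(\min\{K_1,K_2\}-\veps)}.
\end{equation*}
Taking $-\tfrac{1}{n}\log$ of both sides yields
\begin{equation*}
-\tfrac{1}{n}\log \Pr\!\big(\calA(n)\big) \;\ge\; \min\{K_1,K_2\} - \veps - \tfrac{\log 2}{n},
\end{equation*}
and passing to the $\varliminf$ as $n\to\infty$ gives $K(\calA) \ge \min\{K_1,K_2\} - \veps$. Since $\veps>0$ was arbitrary, the desired inequality $K(\calA)\ge \min\{K(\calA\cap\calB),K(\calB^{\complement})\}$ follows.

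There is no real obstacle here: the only subtlety worth mentioning is the handling of the case when one of $K_1,K_2$ is infinite (i.e.\ the corresponding probability decays super-exponentially or is eventually zero), which is handled either by the ``replace $\infty$ by arbitrarily large finite constant'' convention above or by noting that then $\min\{K_1,K_2\}$ equals the finite one and the argument proceeds verbatim.
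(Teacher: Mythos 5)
Your proposal is correct and follows essentially the same route as the paper's proof: the union bound $\Pr(\calA(n))\le\Pr(\calA(n)\cap\calB(n))+\Pr(\calB(n)^{\complement})$ followed by taking normalized logarithms and limits. The $\veps$-argument you spell out (including the treatment of an infinite exponent) is just the detail the paper leaves implicit, so there is nothing to add.
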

\begin{proof}[Proof of Lemma \ref{lem:twoseq}]
    We have
    $
	    \Pr\big(\calA(n)\big)\le \Pr\big( \calA(n)\cap \calB(n)\big)+\Pr\big(\calB(n)^{\complement}\big)
	$  for all $n\in\bbN$. The result then follows by considering  the normalized logarithm of the probabilities and taking limits. 
\end{proof}
\begin{proof}[Proof of Lemma \ref{lem:thoperror}]
	We adopt a computer-aided approach to prove the lemma. Lemma \ref{lem:mono} shows that the error exponent is strictly increasing in $t$, so we only need to prove the lemma for $t=3$. 
	
	Let $\calA(n)\triangleq E_{3}(\hatcalT_{{\rm SCL}},\calT,\halpha n)$ be the event that $\hatcalT_{\rm{SCL}}(\halpha n)\text{ has a }3\text{-hop error}$. In addition,   let $\calB(n)\triangleq\{\halpha \in \calM\}$ be the event that 
	$\halpha$ takes the value in a set $\calM$, where $\calM\subseteq \Psi$ is an appropriately chosen subset of values of $\halpha$. Lemma \ref{lem:twoseq} implies that we only need to separately lower bound $K(\calA\cap \calB)$ and $K (\calB^{\complement})$. We have
	\begin{align}
		K ( \calA\cap \calB)&=\varliminf_{n\rightarrow\infty}-\frac{1}{n}\log \bigg(\sum_{\beta\in \calM}\Pr\big(E_{3}(\hatcalT_{{\rm SCL}},\calT,\halpha n),\halpha=\beta\big)\bigg)\\
		&\geq \varliminf_{n\rightarrow\infty}-\frac{1}{n}\log \bigg(\sum_{\beta\in \calM}\Pr\big(E_{3}(\hatcalT_{{\rm SCL}},\calT,\beta n)\big)\bigg)\\
		&=\min_{\beta\in \calM}\varliminf_{n\rightarrow\infty}-\frac{1}{n}\log\Pr\big(E_{3}(\hatcalT_{{\rm SCL}},\calT,\beta n)\big)\\
		&=\min_{\beta\in \calM}\,\beta\cdot \varliminf_{n\rightarrow\infty}-\frac{1}{n}\log\Pr\big(E_{3}(\hatcalT_{{\rm SCL}},\calT,n)\big)\\
		&=\min_{\beta\in \calM}\,\beta \cdot K _{3}(\hatcalT_{{\rm SCL}},\calT,\rho)\\
		&=\min_{\beta\in \calM}\,\beta \bigg(-\log\Big(1-2\theta(1-\theta)\big(1-\sqrt{4\theta(1-\theta)}\big)\Big)\bigg), \label{ieq:lb1}
	\end{align}
	where the last equality follows from \cite[Lemma 1]{tandon2020exact}.

	In the following, we prove that the lower bound for $K ( \calA\cap \calB)$ in inequality \eqref{ieq:lb1} is at least $c_{\rho}K _{\Passive}(\rho)$ using a computer-aided approach.

	For $\rho\in[0.8,1)$, we set $\calM=\Psi$, then $K ( \calA\cap \calB)\geq 0.8K _{3}(\hatcalT_{{\rm SCL}},\calT,\rho)$. It clearly holds that  $\Pr(\halpha\in \calM^{\complement})=0$ from Table~\ref{table:alpha}; hence,  $K (\calB^{\complement})=\infty$. 
	\begin{figure}[t] 
		\centering
		\subfigure[Ratios $0.8K _{3}(\hatcalT_{{\rm SCL}},\calT,\rho)$, $0.8K _{4}(\hatcalT_{{\rm SCL}},\calT,\rho)$ and $K _{\Passive}(\rho)$.]{
		\begin{minipage}[t]{0.45\linewidth}
		\centering
		\includegraphics[width=3in]{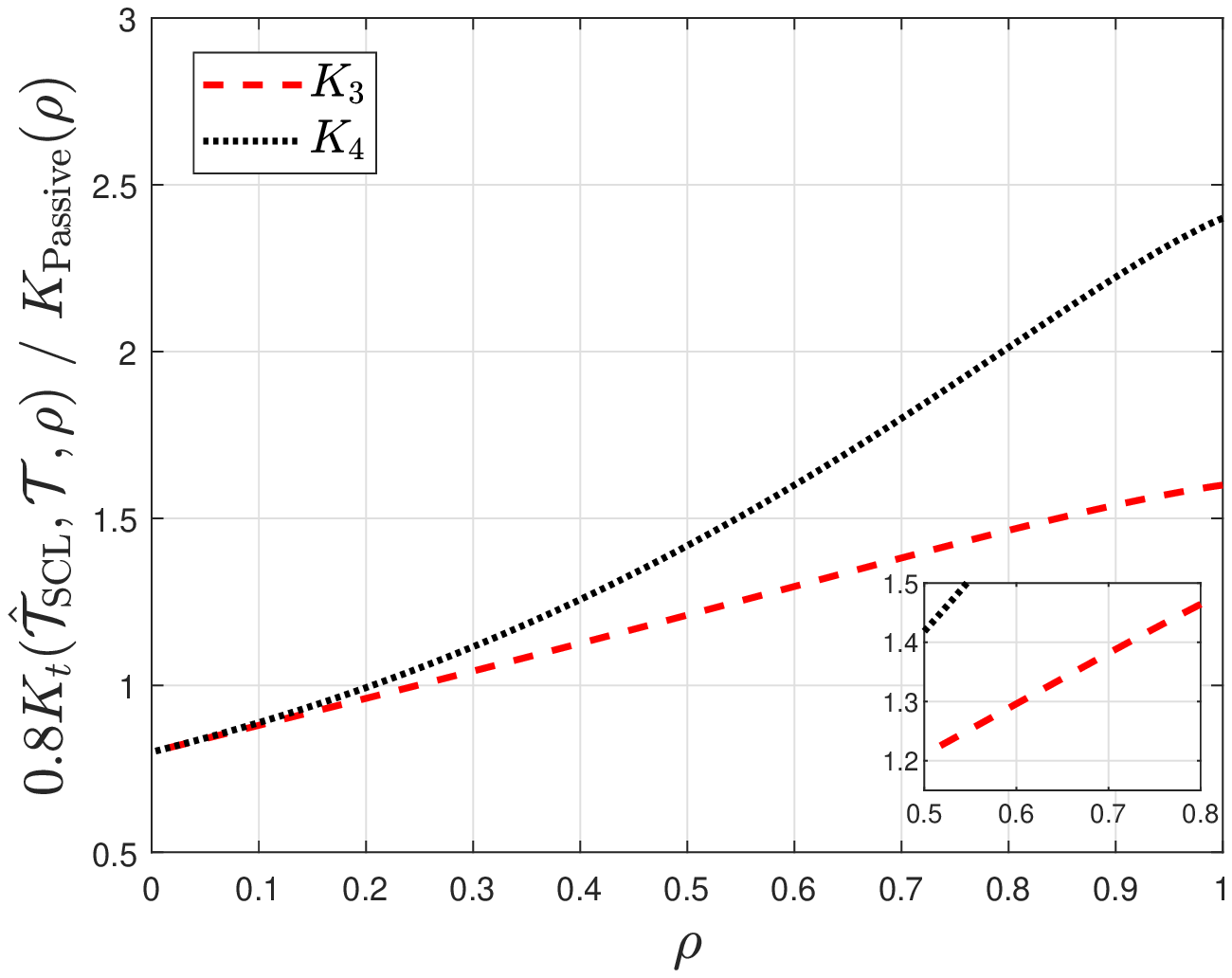}
		\end{minipage}%
		\label{fig:thoperr_8}
		}%
		\hspace{.15in}
		\subfigure[Ratios of $0.85K _{3}(\hatcalT_{{\rm SCL}},\calT,\rho)$, $0.85K _{4}(\hatcalT_{{\rm SCL}},\calT,\rho)$ and $K _{\Passive}(\rho)$.]{
		\begin{minipage}[t]{0.45\linewidth}
		\centering
		\includegraphics[width=3in]{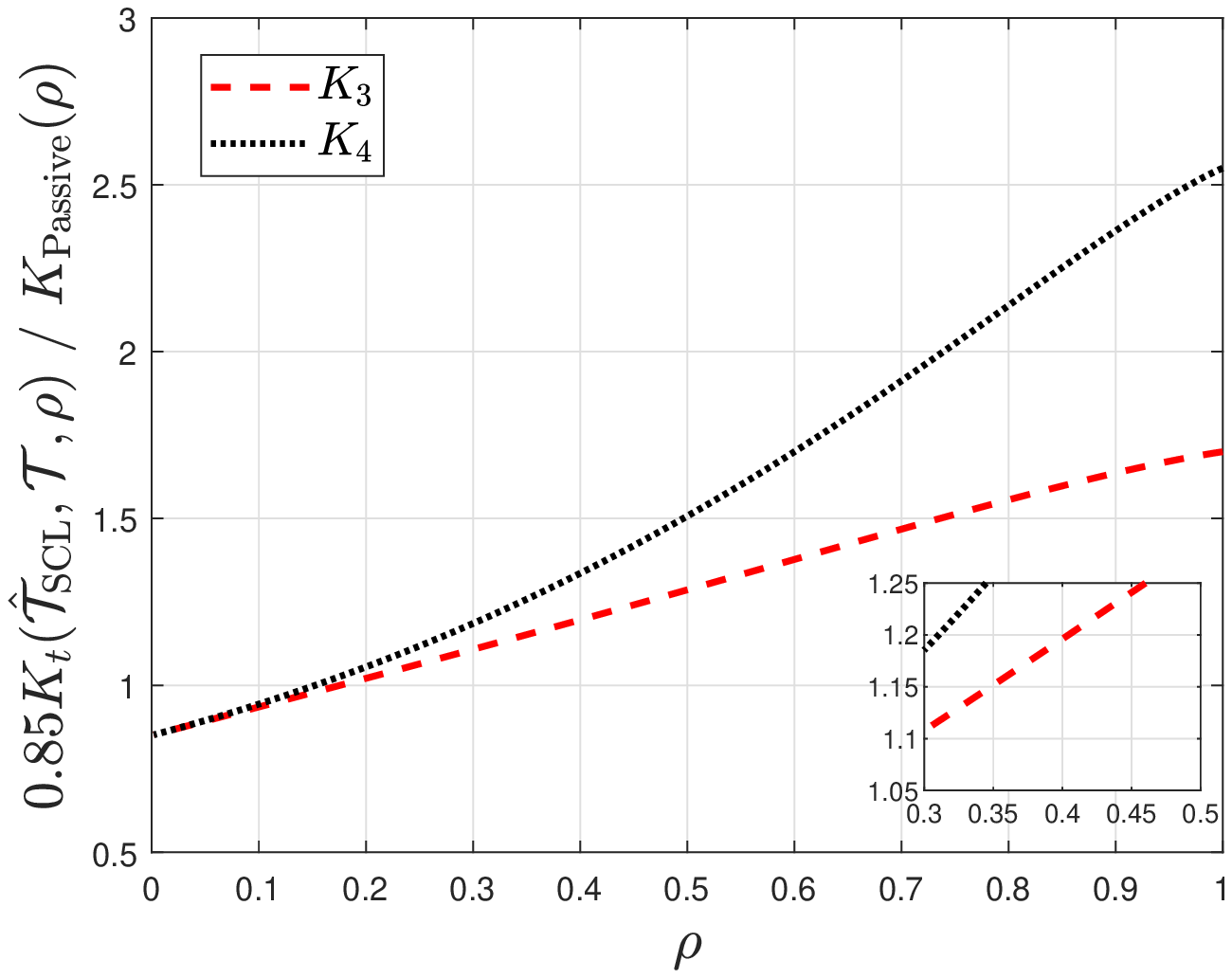}
		\end{minipage}%
		\label{fig:thoperr_85}
		}%
		\centering
		\caption{Ratios of the exponents of $t$-hop errors and that of passive structure learning error}
	\end{figure}

	Fig.~\ref{fig:thoperr_8} displays the ratio of $0.8K _{3}(\hatcalT_{{\rm SCL}},\calT,\rho)$ and $K _{\Passive}(\rho)$, and it shows that this ratio is uniformly larger than $1.4$ for $\rho\in[0.8,1)$.

	For $\rho\in[0.6,0.8)$, we also set $\calM=\Psi$, then $K ( \calA\cap \calB)\geq 0.8K _{3}(\hatcalT_{{\rm SCL}},\calT,\rho)$. It is obvious that $\Pr(\halpha\in \calM^{\complement})=0$ from Table~\ref{table:alpha}, 
	then $K (\calB^{\complement})=\infty$. Fig.~\ref{fig:thoperr_8} displays the ratio between $0.8K _{3}(\hatcalT_{{\rm SCL}},\calT,\rho)$ and $K _{\Passive}(\rho)$ is  uniformly larger than $1.29$ for $\rho\in[0.6,8)$.
	
	For $\rho\in[0.4,0.6)$, we decompose the interval as $[0.4,0.6)=[0.4,0.5)\cup [0.5,0.6)$. When $\rho\in[0.5,0.6)$, we set $\calM=\Psi$, and Fig.~\ref{fig:thoperr_8} 
	shows that the ratio of error exponents is larger than 1.19. When $\rho\in[0.4,0.5)$, we set $\calM=\Psi\backslash\{0.8\}$, then $K ( \calA\cap \calB)\geq 0.85K _{3}(\hatcalT_{{\rm SCL}},\calT,\rho)$. $\halpha\notin \calM$ implies that $\hrho \geq 0.76$. Then we have
	\begin{align}
		K (\halpha\notin \calM)&=K (\hrho(\halpha n)\geq 0.76)\\*
		&=\varliminf_{n\rightarrow \infty}-\frac{1}{n}\log\Pr\Bigg(\frac{\sum_{e\in\calE (\hatcalT_{{\rm SCL}}(\halpha n) )}\hrho_{e}(\halpha n)}{p-1}\geq 0.76\Bigg)\\
		&\geq\varliminf_{n\rightarrow \infty}-\frac{1}{n}\log\Pr\Bigg(\max_{e\in\calE (\hatcalT_{{\rm SCL}}(\halpha n) )}\hrho_{e}(\halpha n)\geq 0.76\Bigg)\\
		&\geq \varliminf_{n\rightarrow \infty}-\frac{1}{n}\log\Pr\big(\hrho_{e^{\prime}}(\halpha n)\geq 0.76\big) \label{ieq:ieq1}\\
		&\geq \varliminf_{n\rightarrow \infty}-\frac{1}{n}\log\Pr\big(\hrho_{e^{\prime}}(0.8n)\geq 0.76\big) \label{eq:eq1}\\
		&= 0.8\varliminf_{n\rightarrow \infty}-\frac{1}{n}\log\Pr\big(\hrho_{e^{\prime}}(n)\geq 0.76\big)\\
		&= 0.8D\Big(\frac{1-\rho}{2}\Big\|0.12\Big),
	\end{align}
	where inequality \eqref{ieq:ieq1} results from the fact that the correlation between non-adjacent nodes is strictly less that $\rho$ and $e^{\prime}$ is an edge such that $\rho_{e^{\prime}}=\rho$, and inequality \eqref{eq:eq1} comes from that the fact that the event $\hrho_{e^{\prime}}(\halpha n)\geq 0.76$ implies that $\halpha=0.8$. When $\rho\in[0.4,0.5)$, $0.8D(\frac{1-\rho}{2}\|0.12)>0.8D(0.25\|0.12)>1.23K _{\Passive}(\rho).$
	Fig.~\ref{fig:thoperr_85} shows that ratio of $0.85K _{3}(\hatcalT_{{\rm SCL}},\calT,\rho)$ and $K _{\Passive}(\rho)$ is larger than $1.19$ when $\rho\in[0.4,0.5)$.

	For $\rho$'s in other intervals, similar analyses can be performed.
\end{proof}

\subsubsection{Proof of Lemma \ref{lem:rhobound}}
\begin{proof}[Proof of Lemma \ref{lem:rhobound}]
	Following Lemma \ref{lem:twoseq}, by denoting the error exponents corresponding to the (sequences of) events $ \calC$ and $\calD$ as $K ( \calC)$ and $K (\calD)$, respectively, we have  the inequality $K ( \calC)\geq \min\big\{K ( \calC\cap \calD),K \big(\calD^{\complement}\big)\big\}$.
	
	We first prove  that
	\begin{align}
	    \varliminf_{n\rightarrow\infty}-\frac{1}{n}\log\Pr\Big(\rho+\frac{2(1-\rho)}{9}<\hrho(\halpha n)\Big)\geq c_{\rho}K _{\Passive}(\rho).
	\end{align}
	To prove this, we set $ \calC(n)\triangleq\big\{\rho-\frac{2(1-\rho)}{9}<\hrho(\halpha n)\big\}$ and $\calD(n)\triangleq\calD_{1}(n)\cap \calD_{2}(n)$ where
	\begin{align}
		\calD_{1}(n)&\triangleq\bigcap_{t_{1},t_{2}\geq 2} \big(\mathrm{IE}_{t_{1},t_{2}}(\hatcalT_{{\rm SCL}},\calT,\halpha n)\big)^{\complement}\qquad\mbox{and}\\
		\calD_{2}(n)&\triangleq\bigcap_{t_{3}\geq 3}\big(E_{t_{3}}(\hatcalT_{{\rm SCL}},\calT,\halpha n)\big)^{\complement}.
	\end{align}
	We remark that $\calD_{1}$ represents the event that $\hatcalT_{\rm{SCL}}(\halpha n)$  does not have a $t_{1}$-hop error that is independent of a $t_{2}$-hop error for $t_{1},t_{2}\geq 2$, and $\calD_{2}$ represents  the event that $\hatcalT_{\rm{SCL}}(\halpha n)$ does not have a $t_3$-hop error for any $t_{3}\geq 3$.

	In the following, we   prove $K ( \calC\cap \calD)\geq c_{\rho}K _{\Passive}(\rho)$ and $K (\calD^{\complement})\geq c_{\rho}K _{\Passive}(\rho)$.

	We begin by proving that $K (\calD^{\complement})\geq c_{\rho}K _{\Passive}(\rho)$. To prove this inequality, we only need to prove that $K (\calD_{i}^{\complement})\geq c_{\rho}K _{\Passive}(\rho)$ for $i=1,2$, because $\Pr(\calD^{\complement})=\Pr(\calD_{1}^{\complement}\cup \calD_{2}^{\complement})\leq \Pr(\calD_{1}^{\complement})+\Pr(\calD_{2}^{\complement}).$
	
	For $\calD_{1}^{\complement}$, Lemmas~\ref{lem:indeppair} and \ref{lem:mono} show that 
	\begin{align}
		K (\calD_{1}^{\complement})&\geq K _{t_{1}}(\hatcalT_{\rm{SCL}},\calT,\rho,0.8)+K _{t_{2}}(\hatcalT_{\rm{SCL}},\calT,\rho,0.8)\\
		&\geq 2K _{2}(\hatcalT_{\rm{SCL}},\calT,\rho,0.8)
		= 2\times 0.8\times K _{2}(\hatcalT_{\rm{SCL}},\calT,\rho)
		=1.6 K _{\Passive}(\rho).
	\end{align}
	Since $1.6\geq c_{\rho}$, we have $K (D_{1}^{\complement})\geq c_{\rho}K _{\Passive}(\rho)$. 	For $\calD_{2}^{\complement}$, Lemma \ref{lem:thoperror} proves the claim.

	We now turn our attention to the event $ \calC\cap \calD$. The event $\calD$ implies that $ \hatcalT_{\rm{SCL}}$ can have only dependent $2$-hop errors. The number
	of dependent edges in $ \hatcalT_{\rm{SCL}}$ is at most $3d-2$, and Lemma \ref{lem:indeppair} shows that the number of independent products $X_{i}X_{j}$, $\{i,j\}\in\calE(\hatcalT_{{\rm SCL}})$ 
	is at least $p-1-(3d-2)\geq 82d-1-(3d-2)=79d+1$. Consider,
	\begin{align}
		\Pr( \calC\cap \calD)&=\Pr\Bigg(\!\frac{\sum_{e\in\calE(\calG)\cap\calE( \hatcalT_{\rm{SCL}}(\halpha n))}\hrho_{e}(\halpha n)\!+\!\sum_{e^{\prime}\in\calE( \hatcalT_{\rm{SCL}}(\halpha n))\backslash\calE(\calG)}\!\hrho_{e^{\prime}}(\halpha n)}{p-1}\!>\!\rho\!+\!\frac{2(1\!-\!\rho)}{9},\calD\Bigg)\\
		&\leq\Pr\Bigg(\frac{\sum_{e\in\calE(\calG)\cap\calE( \hatcalT_{\rm{SCL}})}\hrho_{e}+\sum_{e^{\prime}\in\calE( \hatcalT_{\rm{SCL}})\backslash\calE(\calG)}1}{p-1}>\rho+\frac{2(1-\rho)}{9},\calD\Bigg)\\
		&\leq\Pr\Bigg(\frac{3d-2+\sum_{e\in\calE(\calG)\cap\calE( \hatcalT_{\rm{SCL}})}\hrho_{e}}{p-1}>\rho+\frac{2(1-\rho)}{9},\calD\Bigg)\label{ieq:ieq2}\\
		&\leq\Pr\Bigg(\frac{3d-2+\sum_{e\in\calE(\calG)\cap\calE( \hatcalT_{\rm{SCL}})}\hrho_{e}}{p-1}>\rho+\frac{2(1-\rho)}{9}\Bigg)\\
		&\leq\Pr\Bigg(\frac{3d+\sum_{e\in\calE(\calG)\cap\calE( \hatcalT_{\rm{SCL}})\backslash \{e_{1},e_{2}\}}\hrho_{e}}{p-1}>\rho+\frac{2(1-\rho)}{9}\Bigg)\label{ieq:ieq3}\\
		&=\Pr\Bigg(\frac{\sum_{e\in\calE(\calG)\cap\calE( \hatcalT_{\rm{SCL}})\backslash \{e_{1},e_{2}\}}\hrho_{e}}{p-3d-1}>\frac{(p-1)\big[\rho+\frac{2(1-\rho)}{9}\big]-3d}{p-3d-1}\Bigg)\\
		&=\Pr\Bigg(\frac{\sum_{e\in\calE(\calG)\cap\calE( \hatcalT_{\rm{SCL}})\backslash \{e_{1},e_{2}\}}\hrho_{e}}{p-3d-1}>\rho+\frac{2p-27d-2}{9(p-3d-1)}(1-\rho)\Bigg)\\
		&\leq\Pr\Bigg(\frac{\sum_{e\in\calE(\calG)\cap\calE( \hatcalT_{\rm{SCL}})\backslash \{e_{1},e_{2}\}}\hrho_{e}}{p-3d-1}>\rho+\frac{1-\rho}{6}\Bigg)\label{ieq:ieq5}\\
		&=\sum_{\beta\in\Psi}\Pr\Bigg(\frac{\sum_{e\in\calE(\calG)\cap\calE( \hatcalT_{\rm{SCL}}(\halpha n))\backslash \{e_{1},e_{2}\}}\hrho_{e}(\halpha n)}{p-3d-1}>\rho+\frac{1-\rho}{6},\halpha=\beta\Bigg)\label{eq:ome}\\
		&\leq 7\max_{\beta \in \Psi}\Pr\Bigg(\frac{\sum_{e\in\calE(\calG)\cap\calE( \hatcalT_{\rm{SCL}}(\halpha n))\backslash \{e_{1},e_{2}\}}\hrho_{e}(\halpha n)}{p-3d-1}>\rho+\frac{1-\rho}{6},\halpha=\beta\Bigg)\\
		&\leq 7\Pr\Bigg(\frac{\sum_{e\in\calE(\calG)\cap\calE( \hatcalT_{\rm{SCL}}(0.8 n))\backslash \{e_{1},e_{2}\}}\hrho_{e}(0.8 n)}{p-3d-1}>\rho+\frac{1-\rho}{6}\Bigg),
	\end{align}
	where $\calE(\calG)\cap\calE( \hatcalT_{\rm{SCL}})$ in inequality~\eqref{ieq:ieq2} satisfies $|\calE(\calG)\cap\calE( \hatcalT_{\rm{SCL}})|\geq p-3d+1$; $e_{1}$ and $e_{2} $ in inequality \eqref{ieq:ieq3} are two edges in $\calE(\calG)\cap\calE( \hatcalT_{\rm{SCL}})$; inequality \eqref{ieq:ieq5} results from the simple calculation
	\begin{equation}
	    	\frac{9(p-3d-1)}{2p-27d-2}=\frac{9}{2}\Big(1+\frac{21d}{2p-2-27d}\Big)\leq\frac{9}{2}\Big(1+\frac{21d}{133d-2}\Big)\leq 6.
	\end{equation}

	Using Sanov's theorem~\cite{zeitouni1998large}, we have
	\begin{align}
		K (\calC\cap \calD)&=\varliminf_{n\rightarrow\infty}-\frac{1}{n}\log\Pr\big( \calC(n)\cap \calD(n)\big)\\
		&\geq 0.8(p-3d-1)D\Big(\frac{5}{6}\theta\Big\|\theta\Big)\\
		&\geq 0.8(79d-1)D\Big(\frac{5}{6}\theta\Big\|\theta\Big)\\
		&\geq 125D\Big(\frac{5}{6}\theta\Big\|\theta\Big).
	\end{align}
	\begin{figure}[t]
		\centering 
		\includegraphics[width=0.45\textwidth]{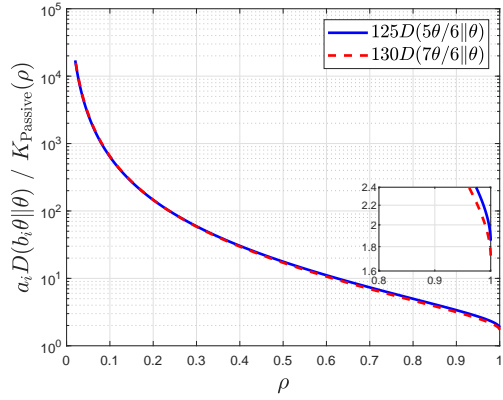} 
		\caption{The ratios of  $125D(\frac{5}{6}\theta\|\theta)$, $130D(\frac{7}{6}\theta\|\theta)$ and passive structure learning error. Note that correlation $\rho$ is related to $\theta$ as $\rho=1-2\theta$.} 
		\label{fig:thetaKL} 
	\end{figure}
	We then prove that 
	\begin{equation}
	\varliminf_{n\rightarrow\infty}-\frac{1}{n}\log\Pr\bigg(\rho-\frac{1-\rho}{6}>\hrho(\halpha n)\bigg)\geq c_{\rho}K _{\Passive}(\rho),
	\end{equation}
	and we set $ \calC^{\prime}(n)=\big\{\rho-\frac{1-\rho}{6}>\hrho(\halpha n)\big\}$ and $\calD^{\prime}(n)=\calD(n)$ as above. Consider,
	\begin{align}
		\Pr( \calC^{\prime}\cap \calD^{\prime})&=\Pr\Bigg(\frac{\sum_{e\in\calE(\calG)\cap\calE( \hatcalT_{\rm{SCL}})}\hrho_{e}+\sum_{e^{\prime}\in\calE( \hatcalT_{\rm{SCL}})\backslash\calE(\calG)}\hrho_{e^{\prime}}}{p-1}<\rho-\frac{1-\rho}{6}\Bigg)\\
		&\leq\Pr\Bigg(\frac{\sum_{e\in\calE(\calG)}\hrho_{e}}{p-1}<\rho-\frac{1-\rho}{6}\Bigg)\label{ieq:ieq4}\\
		&\leq 7\Pr\Bigg(\frac{\sum_{e\in\calE(\calG)}\hrho_{e}(0.8n)}{p-1}<\rho-\frac{1-\rho}{6}\Bigg),
	\end{align}
	where inequality \eqref{ieq:ieq4} comes from that $ \hatcalT_{\rm{SCL}}$ is derived by finding the \ac{mst} with $\hrho_{ij}$ as weights, so $\sum_{e\in\calE(\calG)\backslash\calE( \hatcalT_{\rm{SCL}})}\hrho_{e}\leq\sum_{e^{\prime}\in\calE( \hatcalT_{\rm{SCL}})\backslash\calE(\calG)}\hrho_{e^{\prime}}$. 
	Then we have
	\begin{align}
		K (\calC^{\prime}\cap \calD^{\prime})&=\varliminf_{n\rightarrow\infty}-\frac{1}{n}\log\Pr\big( \calC^{\prime}(n)\cap \calD^{\prime}(n)\big)\\
		&\geq 0.8(p-1)D\Big(\frac{7}{6}\theta\Big\|\theta\Big)\\
		&\geq 130 D\Big(\frac{7}{6}\theta\Big\|\theta\Big),
	\end{align}
	where the last inequality follows as $p\geq 82d\geq 164.$

	Fig.~\ref{fig:thetaKL} shows that the ratios of $125D(\frac{5}{6}\theta\|\theta)$, $130D(\frac{7}{6}\theta\|\theta)$ and passive structure learning error are larger than 1.4 for all  values of $\rho$.
\end{proof}

\subsubsection{Proof of Lemma \ref{lem:confide}}
\begin{proof}[Proof of Lemma \ref{lem:confide}]
	Since the error that appears in the \ac{scl} algorithm belongs to one of $t$-hop errors, the event in the probability in inequality~\eqref{ieq:confide} is equivalent to the event that the edges of any $t$-hop error in $\hatcalT_{\rm{SCL}}$ are confident for all $t\geq 2$.

	We take different methods to handle the events that the edges of any 2-hop error are confident and the edges of any $t$-hop error are confident for $t\geq 3$.

	For a $t$-hop error with $t\geq 3$, the event that the edges of any $t$-hop error are confident is included in the event of the appearance of the $t$-hop error, so the exponent of the event that the edges of 
	any $t$-hop error are confident is lower bounded by the exponent of the $t$-hop error. Then the desired result is proved by Lemma \ref{lem:thoperror}.

	For a 2-hop error, given a three-node subtree $i-j-k$, we want to prove  that
	\begin{align}
    	\varliminf_{n\rightarrow\infty}-\frac{1}{n}\log\Pr\big(\calC_{ikj}(\halpha n)\big)\geq c_{\rho}K _{\Passive}(\rho).
	\end{align}
	Recalling Lemma \ref{lem:twoseq}, we let $\calF(n)\triangleq\calC_{ikj}(\halpha n)$ be the event that the wrongly learned tree $i-k-j$ is confident and $\calG(n)\triangleq\big\{\rho-\frac{1-\rho}{6}\leq\hrho(\halpha n)\leq\rho+\frac{2(1-\rho)}{9}\big\}$. Lemma \ref{lem:rhobound} shows that $K (\calG^{\complement})\geq c_{\rho}K _{\Passive}(\rho)$, so we only need to prove that $\alpha K (\calF\cap \calG)\geq c_{\rho}K _{\Passive}(\rho)$. We have 
	\begin{align}
		&\Pr\big(\calF(n)\cap \calG(n)\big)\nonumber\\
		&=\Pr\bigg(\hrho_{ij}\!\leq\! \hrho_{ik}\frac{11+9\hrho}{20},\hrho_{ij}\!\leq\! \hrho_{jk}\frac{11+9\hrho}{20},\hrho_{ik}\!\geq\! \hrho_{ij},\hrho_{jk}\!\geq\! \hrho_{ij},\rho-\frac{1-\rho}{6}\!\leq\!\hrho\!\leq\!\rho+\frac{2(1-\rho)}{9}\bigg)\\
		&\leq\Pr\bigg(\hrho_{ij}\leq \hrho_{ik}\frac{11+9\hrho}{20}, \rho-\frac{1-\rho}{6}\leq\hrho\leq\rho+\frac{2(1-\rho)}{9}\bigg)\\
		&\leq\Pr\bigg(\hrho_{ij}\leq \hrho_{ik}\frac{13+7\rho}{20}, \rho-\frac{1-\rho}{6}\leq\hrho\leq\rho+\frac{2(1-\rho)}{9}\bigg)\\
		&\leq\Pr\bigg(\hrho_{ij}\leq \hrho_{ik}\frac{13+7\rho}{20}\bigg)\\
		&=\Pr\Big(\hrho_{ij}(\halpha n)\leq \hrho_{ik}(\halpha n)\frac{13+7\rho}{20}, \halpha\in \calM\Big)+\Pr\Big(\hrho_{ij}(\halpha n)\leq \hrho_{ik}(\halpha n)\frac{13+7\rho}{20}, \halpha\in \calM^{\complement}\Big) \\
		&\leq \max_{\beta\in \calM}\Pr\Big(\hrho_{ij}(\beta n)\leq \hrho_{ik}(\beta n)\frac{13+7\rho}{20}\Big)+\Pr( \halpha\in \calM^{\complement}), \label{eqn:maxBeta}
	\end{align}
	where $\calM\subseteq \Psi$ is any  subset of $\halpha$.
	
	Let $\rho^{(Q)}_{ij}\triangleq\bbE_Q[X_iX_j]$ be the correlation computed based on the distribution $Q$. Continuing from inequality~\eqref{eqn:maxBeta} and  by using the theory of large deviations (Sanov's theorem \cite{zeitouni1998large}), 
	\begin{align}
		&\varliminf_{n\rightarrow\infty}-\frac{1}{n}\log\Pr(\calF(n)\cap \calG(n))\nn\\*
		&\geq \min\bigg\{\min_{\beta\in \calM} \beta\cdot \min_{Q\in\calP(\calX^{3})} \Big\{D(Q\|P):\rho^{(Q)}_{ij}\leq\rho^{(Q)}_{ik}\frac{13+7\rho}{20}\Big\},K (\halpha\in \calM^{\complement})\bigg\}. \label{eqn:mink}
	\end{align}

	We then prove that $\varliminf_{n\rightarrow\infty} -\frac{1}{n}\log\Pr(\calF\cap \calG)\geq c_{\rho}K _{\Passive}(\rho)$ in a computer-aided approach. We first note that the optimization problem 
	\begin{align}
	K_{2,\mathrm{Conf}}(\rho)\triangleq\min_{Q\in\calP(\calX^{3})}    \Big\{D(Q\|P):\rho^{(Q)}_{ij}\leq\rho^{(Q)}_{ik}\frac{13+7\rho}{20}\Big\}
	\end{align}
	is a convex optimization problem (as $Q\mapsto D(Q\|P)$ is convex and the inequality constraint is linear in $Q$), where the subscript indicates that it is the exponent of the 2-hop error events on confident edges. The plot of $K_{2,\mathrm{Conf}}(\rho)$ is shown in Fig.~\ref{fig:thoperr_1}.
	
	\begin{figure}[t]
		\centering 
		\includegraphics[width=0.45\textwidth]{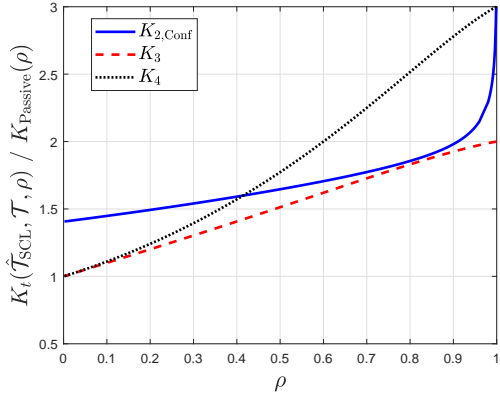} 
		\caption{The ratio of error exponents between $t$-hop error and passive structure learning error.} 
		\label{fig:thoperr_1} 
	\end{figure}

	Fig.~\ref{fig:thoperr_1} shows that $K_{2,\mathrm{Conf}}(\rho)\geq K_{3}(\hatcalT_{{\rm SCL}},\calT,\rho)$. Recalling Lemma~\ref{lem:thoperror}, we have shown that for each $0<\rho<1$
	\begin{align}
	    \min\Big\{\min_{\beta\in \tilde{\calM} }\,\beta \cdot K _{3}(\hatcalT_{{\rm SCL}},\calT,\rho),K (\halpha\in \tilde{\calM}^{\complement})\Big\}\geq c_{\rho}K_{\Passive}(\rho),
	\end{align}
	where $\tilde{\calM}\subseteq \Psi$ is the appropriately chosen subset of values of $\halpha$ that has been designed in Lemma~\ref{lem:thoperror}. 
	Set $\calM=\tilde{\calM}$ for each $\rho$. Lemma~\ref{lem:thoperror} and the fact that $K_{2,\mathrm{Conf}}(\rho)\geq K _{3}(\hatcalT_{{\rm SCL}},\calT,\rho)$ for all $\rho\in(0,1)$ show that
	\begin{align}
	    \min\bigg\{\min_{\beta\in \calM} \beta\cdot \min_{Q\in\calP(\calX^{3})} \Big\{D(Q\|P):\rho^{(Q)}_{ij}\leq\rho^{(Q)}_{ik}\frac{13+7\rho}{20}\Big\},K (\halpha\in \calM^{\complement})\bigg\}\geq c_{\rho}K_{\Passive}(\rho),
	\end{align}
	which proves that desired inequality.
\end{proof}
\subsubsection{Proof of Lemma \ref{lem:packnum}}
\begin{proof}[Proof of Lemma \ref{lem:packnum}]
    Lemma \ref{lem:thoperror} shows that 
    \begin{align}
    	K _{t}(\hatcalT_{{\rm SCL}},\calT,\rho,P_{\halpha})\geq c_{\rho}K _{\Passive}(\rho) \text{ for all }t\geq 3.
    \end{align}
    Thus, we only need to consider  $2$-hop errors for unconfident edges.

	Define $\calH(n)$ to be the event $\calH(n)\triangleq \{N(\scC_{\mathrm{E}}(\halpha n)^{\complement},\calT\big(\scC_{\mathrm{V}}(\halpha n)^{\complement}\big),2)\geq 13\}$. In addition, define $\calI(n)$ to be the event $\calI(n)\triangleq\cap_{t\geq 3}\big(E_{t}(\hatcalT_{{\rm SCL}},\calT,\halpha n)\big)^{\complement}$, namely that there  exists only $2$-hop errors (and not $t$-hop errors for $t\ge 3$). 
	Lemma~\ref{lem:thoperror} shows that $K (\calI^{\complement})\geq c_{\rho}K _{\Passive}(\rho)$, so we can focus the analysis  of the probability of the event $\calH\cap \calI$. 
	
	If $\{i,j\},\{j,k\}\in \hatcalT_{{\rm SCL}}(\halpha n)$ and $\calC_{ijk}(\halpha n)$ does not hold, we call $(i,j,k)$ an unconfident triple. Two unconfident triples are independent if they do not  share an edge in the tree that is induced by them. 
	
	The set of independent unconfident triples $T_{{\rm uc}}(m)$ is defined as follows. $(i,j,k)$ is in $T_{{\rm uc}}(m)$ if $\{i,j\},\{j,k\}\in \hatcalT_{{\rm SCL}}(m)$, $\calC_{ijk}(\halpha n)$ does not hold and $\calT(\{i,j,k\})$ is connected, and for any other
	$(i^{\prime},j^{\prime},k^{\prime}) \in T_{{\rm uc}}(m)$, $|\{i,j,k\}\cap\{i^{\prime},j^{\prime},k^{\prime}\}|\leq 1$. The condition that $\calT(\{i,j,k\})$ is connected is to ensure that there are two edges among these three nodes, i.e., there 
	are only 2-hop errors.
	
	Then we have
	\begin{align}
		\varliminf_{n\rightarrow\infty}-\frac{1}{n}\log\Pr\big(\calH(n)\cap \calI(n)\big)&\geq \varliminf_{n\rightarrow\infty}-\frac{1}{n}\log\Pr\big(|T_{{\rm uc}}(\halpha n)|\geq 13\big)\\
		&=\varliminf_{n\rightarrow\infty}-\frac{1}{n}\log\bigg(\sum_{\beta\in\Psi}\Pr(|T_{{\rm uc}}(\halpha n)|\geq 13,\halpha=\beta)\bigg)\\
		&=\varliminf_{n\rightarrow\infty}-\frac{1}{n}\log\Pr\big(|T_{{\rm uc}}(\halpha n)|\geq 13,\halpha=0.8\big)\\
		&\geq \varliminf_{n\rightarrow\infty}-\frac{1}{n}\log\Pr \big(|T_{{\rm uc}}(0.8 n)|\geq 13\big)\\
		&=0.8 \varliminf_{n\rightarrow\infty}-\frac{1}{n}\log\Pr\big(|T_{{\rm uc}}(n)|\geq 13\big),
	\end{align}
	There are two different kinds of unconfident triples. The first type consists of  triples for which the corresponding learned structures based on $\hatcalT_{{\rm SCL}}$ are correct; the other type consists of the triples for which the corresponding learned structures are incorrect. For the triple $(i,j,k)$ that yields the correct    structure $i-j-k$, we have
	\begin{align}
		&\varliminf_{n\rightarrow\infty}-\frac{1}{n}\log\Pr\big(\hatcalT_{{\rm SCL}}(n)(\{i,j,k\})=\calT(\{i,j,k\}),\calC_{ijk}(n)^{\complement}\big)\nonumber\\
		&=\varliminf_{n\rightarrow\infty}-\frac{1}{n}\log\Pr\bigg(\{\hrho_{ij}( n)>\hrho_{ik}( n),\hrho_{jk}( n)>\hrho_{ik}( n)\}\cap \nonumber\\*
		& \qquad\Big(\Big\{\hrho_{ik}( n)>\hrho_{ij}( n)\frac{11+9\hrho( n)}{20}\Big\}\cup\Big\{\hrho_{ik}( n)>\hrho_{jk}( n)\frac{11+9\hrho( n)}{20}\Big\}\Big)\bigg)\\
		&\geq \varliminf_{n\rightarrow\infty}-\frac{1}{n}\log\Pr\Big(\hrho_{ik}( n)>\hrho_{ij}( n)\frac{11+9\hrho( n)}{20}\Big).
	\end{align}
	For the triple $(i,j,k)$ with the  structure $i-j-k$ that learns the wrong graph, we have
	\begin{align}
		&\varliminf_{n\rightarrow\infty}-\frac{1}{n}\log\Pr\big(\hatcalT_{{\rm SCL}}(n)(\{i,j,k\})\neq \calT(\{i,j,k\}),\calC_{ijk}(n)^{\complement}\big)\nonumber\\
		&\geq\varliminf_{n\rightarrow\infty}-\frac{1}{n}\log\Pr\Big(\hrho_{ik}( n)>\hrho_{ij}( n),\hrho_{ij}( n)>\hrho_{ik}( n)\frac{11+9\hrho( n)}{20}\Big)\\
		&\geq \varliminf_{n\rightarrow\infty}-\frac{1}{n}\log\Pr\big(\hrho_{ik}( n)>\hrho_{ij}( n)\big)\\
		&\geq \varliminf_{n\rightarrow\infty}-\frac{1}{n}\log\Pr\Big(\hrho_{ik}( n)>\hrho_{ij}( n)\frac{11+9\hrho( n)}{20}\Big)\\
		&=\min \bigg\{\varliminf_{n\rightarrow\infty}-\frac{1}{n}\log\Pr\Big(\hrho_{ik}( n)>\hrho_{ij}( n)\frac{11+9\hrho( n)}{20},\hrho(n)\geq \rho-\frac{1-\rho}{6}\Big),\nonumber\\
		&\qquad \varliminf_{n\rightarrow\infty}-\frac{1}{n}\log\Pr\big(\hrho(n)< \rho-\frac{1-\rho}{6}\big)\bigg\}.
	\end{align}
	Lemma \ref{lem:rhobound} implies we only need to consider the case where $\hrho(n)\geq \rho-\frac{1-\rho}{6}$ in which 
	\begin{align}
		&\varliminf_{n\rightarrow\infty}-\frac{1}{n}\log\Pr\Big(\hrho_{ik}(n)>\hrho_{ij}(n)\frac{11+9\hrho(n)}{20},\hrho(n)\geq \rho-\frac{1-\rho}{6}\Big)\nonumber\\
		&\geq \varliminf_{n\rightarrow\infty}-\frac{1}{n}\log\Pr\Big(\hrho_{ik}(n)>\hrho_{ij}(n)\frac{19+21\rho}{40},\hrho(n)\geq \rho-\frac{1-\rho}{6}\Big)\\
		&\geq \varliminf_{n\rightarrow\infty}-\frac{1}{n}\log\Pr\Big(\hrho_{ik}(n)>\hrho_{ij}(n)\frac{19+21\rho}{40}\Big)\\
		&= \min_{Q\in\calP(\calX^{3})}\bigg\{D(Q\|P):\rho_{ik}^{(Q)}\geq\rho_{ij}^{(Q)}\frac{19+21\rho}{40}\bigg\}.
	\end{align}
	Then we have that
	\begin{align}
		&\varliminf_{n\rightarrow\infty}-\frac{1}{n}\log\Pr(|T_{{\rm uc}}(\halpha n)|\geq 13,\halpha=0.8)\nonumber\\
		&\geq 0.8\times 13\times\min_{Q\in\calP(\calX^{3})}\bigg\{D(Q\|P):\rho_{ik}^{(Q)}\geq\rho_{ij}^{(Q)}\frac{19+21\rho}{40}\bigg\}.
	\end{align}
	\begin{figure}[t] 
		\centering 
		\includegraphics[width=0.45\textwidth]{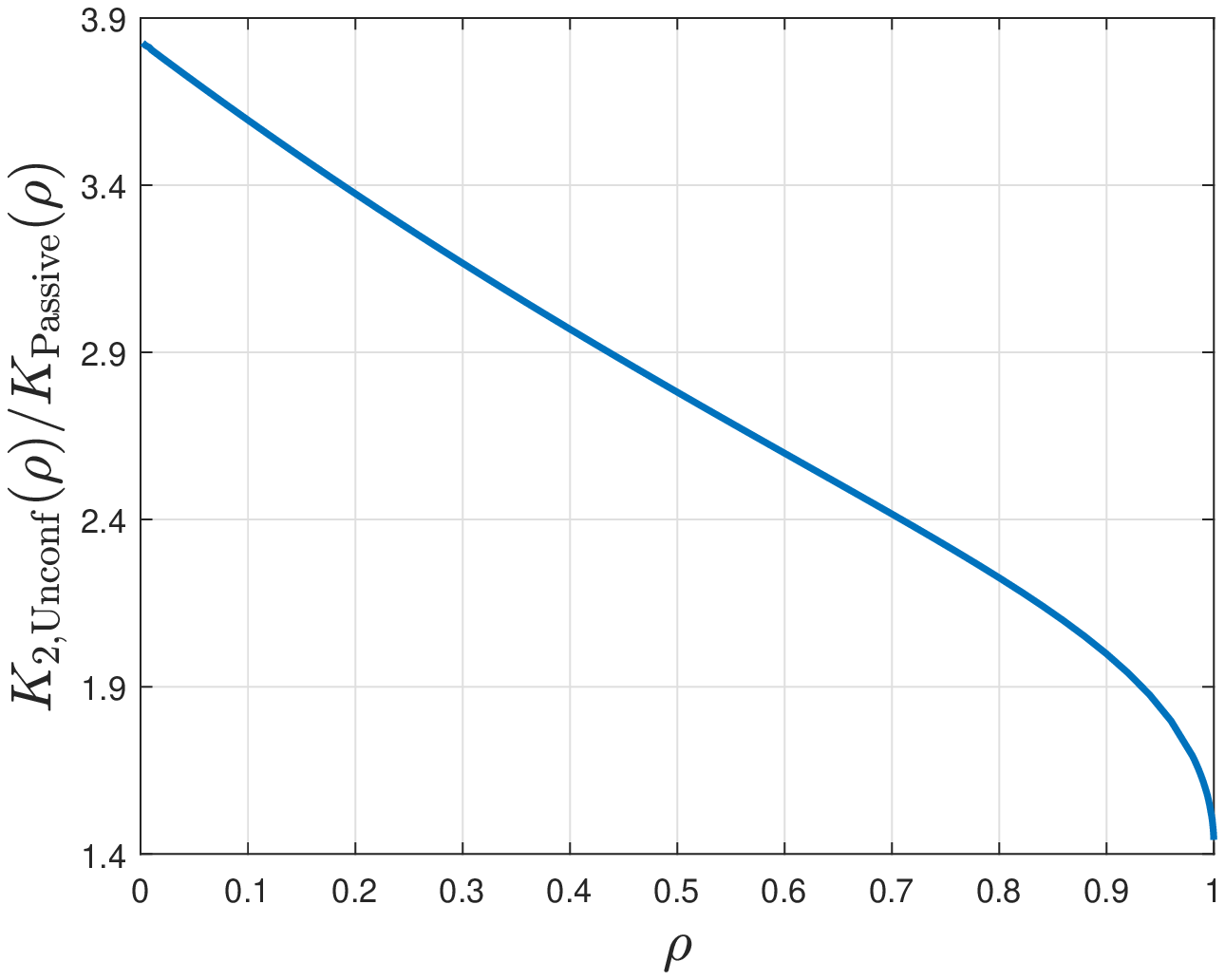} 
		\caption{The ratio between the exponent of the sequence of events $\{N(\scC_{\mathrm{E}}(\halpha n)^{\complement},\calT(\scC_{\mathrm{V}}(\halpha n)^{\complement}),2)\geq 13\}_{n\in\bbN}$ and the error exponent of passive structure learning algorithm.} 
		\label{fig:unconfide_tri} 
	\end{figure}
	
	We then prove that
	\begin{align}
    	K_{2,\mathrm{Unconf}}(\rho)\triangleq 0.8\times13\times\min_{Q\in\calP(\calX^{3})}\bigg\{D(Q\|P):\rho_{ik}^{(Q)}\geq\rho_{ij}^{(Q)}\frac{19+21\rho}{40}\bigg\}\geq c_{\rho}K _{\Passive}(\rho), \label{eqn:minDQP}
	\end{align}
	using a computer-aided approach, where the subscript of $K_{2,\mathrm{Unconf}}$ stands for 2-hop error events on unconfident edges.
	
	We first note that the optimization problem in~\eqref{eqn:minDQP} is  convex in the entries of $Q$ and so it can be evaluated efficiently and accurately using off-the-shelf convex optimization software. 
	
	Fig.~\ref{fig:unconfide_tri} shows that the ratio of error exponents for the event $\{N(\scC_{\mathrm{E}}(\halpha n)^{\complement},\calT(\scC_{\mathrm{V}}(\halpha n)^{\complement}),2)\geq 13\}_{n\in\bbN}$ and passive structure learning error is larger than $c_{\rho}K _{\Passive}(\rho)$.
\end{proof}

\subsubsection{Proof of Lemma \ref{lem:unconfide}}
To prove this important lemma, we start with a few crucial supporting lemmas. 
\begin{lemma}\label{lem:tilpbound}
	In the $\hatcalT_{\rm{SCL}}(\halpha n)$ learned in Algorithm~\ref{algo:alathe} that only has 2-hop errors, if 
	\begin{align}
	    N(\scC_{\mathrm{E}}(\halpha n)^{\complement},\calT(\scC_{\mathrm{V}}(\halpha n)^{\complement}),2)\leq 12,
	\end{align}
	then $\tilp(\halpha n)<26d$.
\end{lemma}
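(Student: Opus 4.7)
My plan is to combine a covering-style argument over a maximum $2$-packing with the structural constraint that only $2$-hop errors occur. Let $V \triangleq \scC_V(\halpha n)^\complement$, let $S = \{e_1, \ldots, e_N\}$ be a maximum $2$-packing of $\scC_E(\halpha n)^\complement$ in $\calT(V)$ so that $N \leq 12$, and let $U$ denote the set of endpoints of the packing edges, with $|U| \leq 2N \leq 24$.

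First, by maximality of $S$, any unconfident edge $e' \notin S$ must satisfy $d_{\calT(V)}(e',e_i) \leq 1$ for some $i$. Since distances in the induced subgraph $\calT(V)$ are no smaller than the corresponding distances in $\calT$, we also have $d_{\calT}(e',e_i) \leq 1$, so at least one endpoint of each unconfident edge lies in the closed $1$-neighborhood of $U$ in $\calT$. Because $\calT$ has maximum degree $d$ and two adjacent tree vertices have disjoint open neighborhoods (no triangles in a tree), this $1$-neighborhood has at most $|U| + |U|(d-1) \leq 24d$ vertices.

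Second, I would promote this from ``one endpoint per unconfident edge lies near $U$'' to ``every unconfident vertex lies in a slightly enlarged neighborhood of $U$''. Every $v \in V$ participates in at least one bad triple $(i,j,k)$ whose two $\hatcalT_{\rm{SCL}}$-edges share the middle vertex $j$, so these two edges have edge-distance $0$ and at most one of them lies in $S$. Applying the covering property of the previous step to the edges of the triple, and using the only-$2$-hop-error assumption (every $\hatcalT_{\rm{SCL}}$-edge is either a true-tree edge or a chord of some true $2$-path $x$-$y$-$z$ in $\calT$), one can show that any ``outlying'' $v$ at $\calT$-distance greater than $1$ from $U$ must be the middle vertex $y$ of such a $2$-path whose chord is unconfident. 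The number of such outlier vertices is tightly controlled by the $2d$ slack between $24d$ and $26d$.

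The main obstacle is this second step: converting the one-endpoint-close property into a bound on \emph{all} unconfident vertices. The observation that makes the argument go through is that under only-$2$-hop errors, any $2$-hop chord $\{j,k\} \in \calE(\hatcalT_{\rm{SCL}})$ comes from a true path $j$-$l$-$k$ in $\calT$, and the triple $(j,k,l)$ is automatically a bad triple (since $\rho_{jl} = \rho_{jk} = \rho^2$ makes $\calC_{jkl}$ fail with overwhelming probability), so $l$ itself is unconfident and is a $\calT$-neighbor of a chord endpoint that already lies in the $1$-neighborhood of $U$. This locally pins all outlier vertices near $U$ and yields $\tilp = |V| < 26d$.
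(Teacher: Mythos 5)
There is a genuine gap in your second step, and it is exactly the step you flag as the main obstacle. The packing--covering duality in your first step only tells you that each unconfident edge has \emph{one} endpoint within $\calT(\scC_{\mathrm{V}}(\halpha n)^{\complement})$-distance $1$ of the set $U$ of packing endpoints. To bound $\tilp(\halpha n)$ you must also locate the \emph{other} endpoint of every unconfident edge; under only-$2$-hop errors that endpoint can sit at $\calT$-distance up to $2$ beyond the first, and a single vertex near $U$ can be incident to on the order of $d^{2}$ distinct edges of $\hatcalT_{\rm{SCL}}$ (its true neighbours plus its $2$-hop chords). Carrying your count out honestly therefore yields a bound of order $d^{3}$, not $26d$, and the assertion that the outliers are ``tightly controlled by the $2d$ slack'' is never backed by any counting. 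The patch you propose also does not hold as stated: the triple formed by a chord $\{j,k\}$ and the true midpoint $l$ is only tested by \ac{alathe} if \emph{both} $\{j,l\}$ and $\{l,k\}$ are edges of $\hatcalT_{\rm{SCL}}$, which need not be the case when the chord has displaced a true edge; and arguing that $\calC_{jkl}$ ``fails with overwhelming probability'' imports probabilistic reasoning into what must be a purely deterministic combinatorial implication --- the lemma is a pointwise statement about the realized structure that feeds into the large-deviations analysis elsewhere, so it cannot itself lean on high-probability events. (Your correlation bookkeeping there is also off: for a true path $j$--$l$--$k$ one has $\rho_{jl}=\rho$ and $\rho_{jk}=\rho^{2}$, not $\rho_{jl}=\rho_{jk}=\rho^{2}$.) Even the first-step arithmetic $|U|+|U|(d-1)\le 24d$ presumes the two endpoints of each packing edge are adjacent in $\calT$, which fails when a packing edge is itself a $2$-hop chord.

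The paper proves the contrapositive by a greedy \emph{lower} bound on the packing number rather than a covering upper bound on the vertex set: in each connected component $\calT_i$ of $\calT(\scC_{\mathrm{V}}(\halpha n)^{\complement})$ it repeatedly selects an edge incident to a deepest leaf and discards all edges within edge-distance $1$ of it; the deepest-leaf choice guarantees at most $2d-1$ edges are discarded per selection, whence $N(\calE(\calT_i),\calT_i,2)\ge |\calV(\calT_i)|/(2d)$ and, summing over components, $N\ge \tilp(\halpha n)/(2d)$, so that $\tilp(\halpha n)\ge 26d$ would force a $2$-packing of size at least $13$. If you want to rescue a covering-style argument, you would need a per-packing-edge bound of roughly $2d$ on the number of unconfident \emph{vertices} it can account for, which is precisely what the greedy construction supplies from the other direction.
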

\begin{proof}
	The condition that $\hatcalT_{\rm{SCL}}(\halpha n)$ only has $2$-hop errors guarantees that $\calT(\scC_{\mathrm{V}}(\halpha n)^{\complement})$ does not have isolated nodes. We will use a proof by contraposition, i.e., we show that if $\tilp(\halpha n)\geq 26d$, then $N(\scC_{\mathrm{E}}(\halpha n)^{\complement},\calT(\scC_{\mathrm{V}}(\halpha n)^{\complement}),2)\geq 13$.

	The induced graph $\calT(\scC_{\mathrm{V}}(\halpha n)^{\complement})$ is not necessarily a connected graph, and we assume that it has $k$ components: $\calT_{1}$, \ldots, $\calT_{k}$. For each component, we 
	show that $N(\calE(\calT_{i}),\calT_{i},2)\geq \frac{|\calV(\calT_{i})|}{2d}$. See Fig.~\ref{fig:packing} for an illustration of this proof.
	
	\begin{figure}[t] 
		\centering 
		\includegraphics[width=0.36\textwidth]{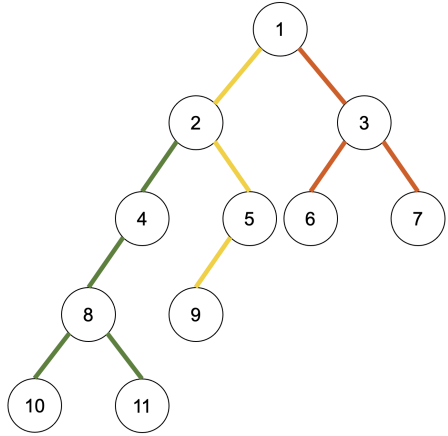} 
		\caption{Edge $\{8,10\}$ is the deepest edge and is  chosen first. Then, all then green edges ($\{2,4\}$, $\{4,8\}$, $\{8,10\}$, and $\{8,11\}$) will be deleted. Edge $\{5,9\}$ has the largest depth among the remaining edges, and all the yellow edges ($\{1,2\}$, $\{2,5\}$, and $\{5,9\}$) will then be deleted. Finally, edge $\{3,6\}$ is chosen, and all the red edges ($\{1,3\}$, $\{3,6\}$, $\{3,7\}$) will then be deleted.} 
		\label{fig:packing} 
	\end{figure}
	
	In the component $\calT_{i}$, we first choose an arbitrary node as the root node. The depth of a node is the length of the path between it and the root node. Choose an edge $\{j,l\}$ 
	incident to a leaf node with the largest depth and then delete all the edges $\{m,s\}$ in $\calT_{i}$ that satisfy $d_{\calT_{i}}(\{j,l\},\{m,s\})\leq 1$. Note that the number of the deleted edges 
	in this step is upper bounded by $2d-1$. We continue to choose an edge in the remaining edge set with the deepest terminal node in $\calT_{i}$. Repeating this process, we obtain the following bound
	\begin{align}
		N(\calE(\calT_{i}),\calT_{i},2)\geq \frac{|\calV(\calT_{i})|-1}{2d-1}.
	\end{align}
	If $|\calV(\calT_{i})|\geq 2d$, we have $N(\calE(\calT_{i}),\calT_{i},2)\geq\frac{|\calV(\calT_{i})|-1}{2d-1}\geq\frac{|\calV(\calT_{i})|}{2d}$. If $|\calV(\calT_{i})|< 2d$, since $N(\calE(\calT_{i}),\calT_{i},2)\geq 1$, so $N(\calE(\calT_{i}),\calT_{i},2)\geq \frac{|\calV(\calT_{i})|}{2d}$. Hence, we have  $N(\calE(\calT_{i}),\calT_{i},2)\geq \frac{|\calV(\calT_{i})|}{2d}$.

	Finally, we have
	\begin{align}\label{ieq:tilp}
		N(\scC_{\mathrm{E}}(\halpha n)^{\complement},\calT(\scC_{\mathrm{V}}(\halpha n)^{\complement}),2)&=\sum_{i} N(\calE(\calT_{i}),\calT_{i},2)\geq \frac{\sum_{i}|\calV(\calT_{i})|}{2d}=\frac{\tilp(\halpha n)}{2d}.
	\end{align}
	If $\tilp(\halpha n)\geq 26d$, the inequality in \eqref{ieq:tilp} implies that $N(\scC_{\mathrm{E}}(\halpha n)^{\complement},\calT(\scC_{\mathrm{V}}(\halpha n)^{\complement}),2)\geq 13$.
\end{proof}

\begin{lemma}\label{lem:numsample}
	If $p\geq 82d$ and $\tilp(\halpha n)<26d$, then $p/ \tilp(\halpha n)>3$.
\end{lemma}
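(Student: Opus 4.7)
The claim is a straightforward arithmetic inequality, so the plan is to chain the two given bounds and verify that the resulting ratio exceeds $3$. I would begin by noting that $\tilp(\halpha n)$ and $d$ are positive integers (since $\tilp(\halpha n) \geq 1$ whenever the set of unconfident nodes is nonempty, and $d \geq 1$ for a nontrivial tree), so both $p/\tilp(\halpha n)$ and $p/(26d)$ are well-defined positive quantities.

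The key step is to write
\begin{equation}
\frac{p}{\tilp(\halpha n)} > \frac{p}{26 d} \geq \frac{82 d}{26 d} = \frac{82}{26} = \frac{41}{13},
\end{equation}
where the first strict inequality uses the hypothesis $\tilp(\halpha n) < 26 d$ (so $1/\tilp(\halpha n) > 1/(26d)$, and multiplying by $p>0$ preserves the strict inequality), and the second inequality uses Assumption~\ref{assump:smalldeg} as embodied in the hypothesis $p \geq 82 d$. Since $41/13 = 3.153\ldots > 3$, we conclude $p/\tilp(\halpha n) > 3$, as desired.

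There is no real obstacle here; the lemma is essentially a sanity check that the constants $82$ and $26$ chosen in Assumption~\ref{assump:smalldeg} and Lemma~\ref{lem:tilpbound} respectively are compatible, and in particular that they leave enough headroom for the subsequent argument in Lemma~\ref{lem:unconfide} (which requires each unconfident node to receive a sufficiently inflated share of samples, specifically that $(1-\halpha)p/\tilp(\halpha n) \geq 3(1-\halpha)$, i.e., a factor of at least $3$ over the passive allocation). The only thing to be mindful of is the degenerate case $\tilp(\halpha n) = 0$, in which the ratio $p/\tilp(\halpha n)$ is formally undefined; but in that case the local refinement phase of \ac{alathe} has no work to do and the statement is vacuous, so we may safely assume $\tilp(\halpha n) \geq 1$ throughout.
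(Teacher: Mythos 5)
Your proof is correct and is essentially the same elementary arithmetic as the paper's, which simply chains $3\tilp(\halpha n) < 78d < 82d \le p$; your version divides through instead of multiplying, but the content is identical. The remark about the degenerate case $\tilp(\halpha n)=0$ is a reasonable extra sanity check but not needed.
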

\begin{proof}
    We have
	$
		3\times \tilp(\halpha n) <3\times 26d=78d<82d\leq p.
	$
\end{proof}

\begin{remark}
	 Algorithm~\ref{algo:alathe} implies that there are $(\halpha+(1-\halpha)p/\tilp)n$ scalar samples for each node in $\scC_{\mathrm{E}}(\halpha n)^{\complement}$, and Lemmas~\ref{lem:tilpbound} and \ref{lem:numsample} show that $N(\scC_{\mathrm{E}}(\halpha n)^{\complement},\calT(\scC_{\mathrm{V}}(\halpha n)^{\complement}),2)\leq 12$ is sufficient to ensure that there are at least $(\halpha+3(1-\halpha))n=(3-2\halpha)n$ scalar samples for each unconfident node.
\end{remark}
\begin{proof}[Proof of Lemma \ref{lem:unconfide}]
	Note that the edges in the induced subgraph $\hatcalT_{\Active}\big(\scC_{\mathrm{V}}(\halpha n)^{\complement}\big)$ not only contain the unconfident edges, but also contain some confident edges. However, Lemma \ref{lem:confide} shows that the exponent of the event that the confident edges are wrongly estimated is larger than $c_{\rho}K _{\Passive}(\rho)$. Thus, in the following proof, we focus on the unconfident edges in $\hatcalT_{\Active}\big(\scC_{\mathrm{V}}(\halpha n)^{\complement}\big)$.
	
	We set $\calJ(n)\triangleq\{\hatcalT_{\Active}\big(\scC_{\mathrm{V}}(\halpha n)^{\complement}\big)\neq \calT\big(\scC_{\mathrm{V}}(\halpha n)^{\complement}\big)\}$ be the event that the learned structure of unconfident nodes is wrong, and $\calK(n)\triangleq\{N(\scC_{\mathrm{E}}(\halpha n)^{\complement},\calT(\scC_{\mathrm{V}}(\halpha n)^{\complement}),2)\leq 12\}$ be 
	the event that the 2-packing number of the unconfident edges set is not more than  $12$. Lemma \ref{lem:packnum} then shows that 
	\begin{align}
		\varliminf_{n\rightarrow\infty}-\frac{1}{n}\log\Pr\big(\calK(n)^{\complement}\big)\geq c_{\rho}K _{\Passive}(\rho).
	\end{align}
	Then we only need to consider the event $\calJ\cap \calK$. Lemmas \ref{lem:tilpbound} and \ref{lem:numsample} show that there are at least $\big[\halpha+3(1-\halpha)\big]n$ scalar samples for every unconfident node, since Algorithm~\ref{algo:alathe} will distribute $(1-\halpha)pn$ scalar samples only among $\tilp$ unconfident nodes in the local refinement stage. 

	For $\rho\in [0.8,1)$, we will numerically compute the exponent of the event that $\{\halpha \neq 0.8\}$. It follows from Table~\ref{table:alpha} that $\halpha \neq 0.8$ implies $\hrho<0.76$. Furthermore, for $\rho=0.8$, we have $\rho-\frac{1-\rho}{5}=0.76$. To show that $K\big(\{\hrho(\halpha n)\leq \rho-\frac{1-\rho}{5}\}_{n\in\bbN}\big)\geq c_{\rho}K_{\Passive}(\rho)$, we only need to show that $130D(\frac{6\theta}{5}\|\theta)\geq c_{\rho}K_{\Passive}(\rho)$, following the similar procedures of Lemma \ref{lem:rhobound}.
	\begin{figure}[t] 
		\centering
		\subfigure[Lower bound of the ratio of $K\big(\{\hrho(\halpha n)\leq \rho-\frac{1-\rho}{5}\}_{n\in\bbN}\big)$ and $K_{\Passive}(\rho)$.]{
		\begin{minipage}[t]{0.45\linewidth}
		\centering
		\includegraphics[width=3in]{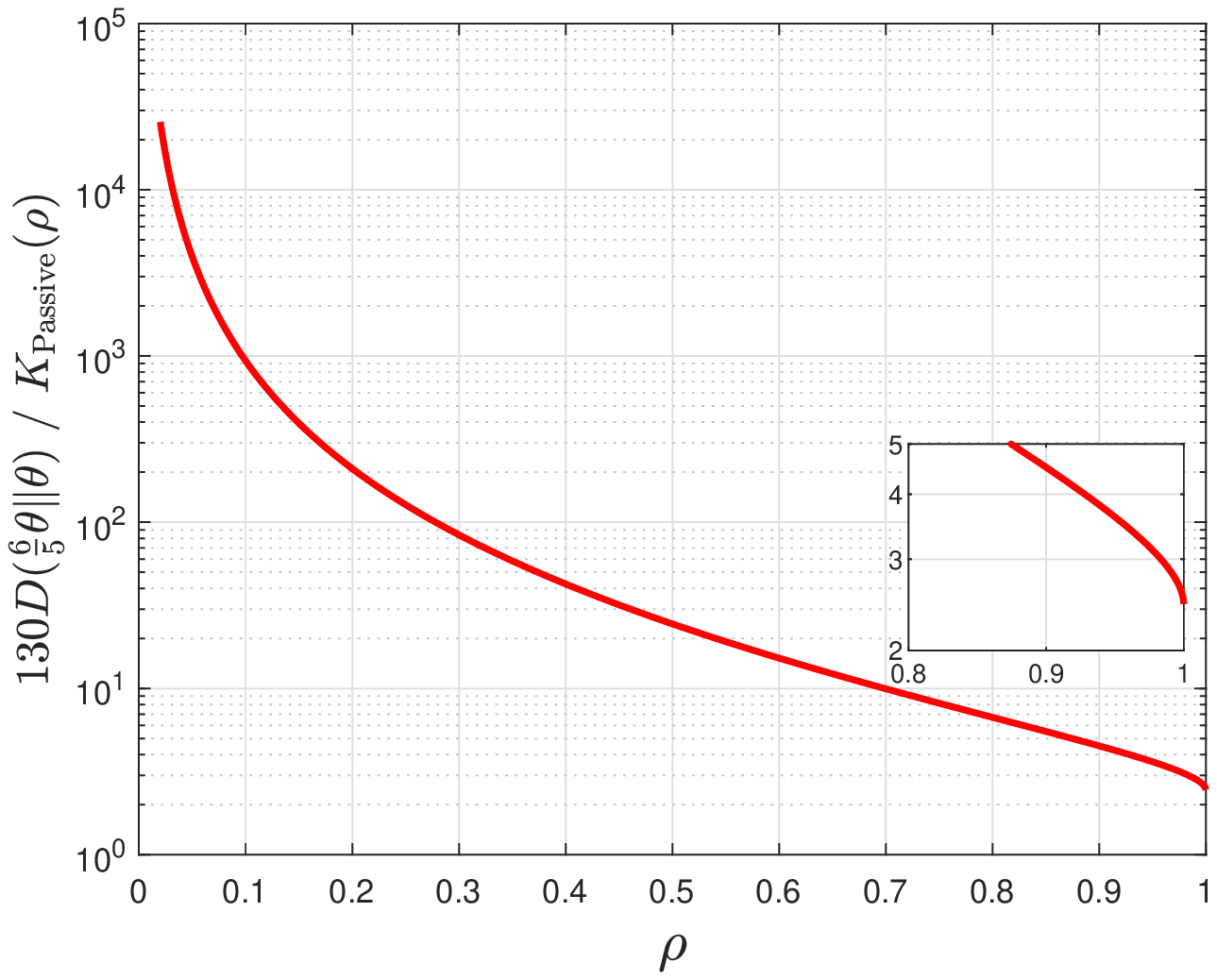}
		\end{minipage}%
		\label{fig:hrho5}
		}%
		\hspace{.15in}
		\subfigure[Lower bound of the ratio of $K\big(\{\hrho(\halpha n)\leq \rho-\frac{1-\rho}{5.72}\}_{n\in\bbN}\big)$ and $K_{\Passive}(\rho)$.]{
		\begin{minipage}[t]{0.45\linewidth}
		\centering
		\includegraphics[width=3in]{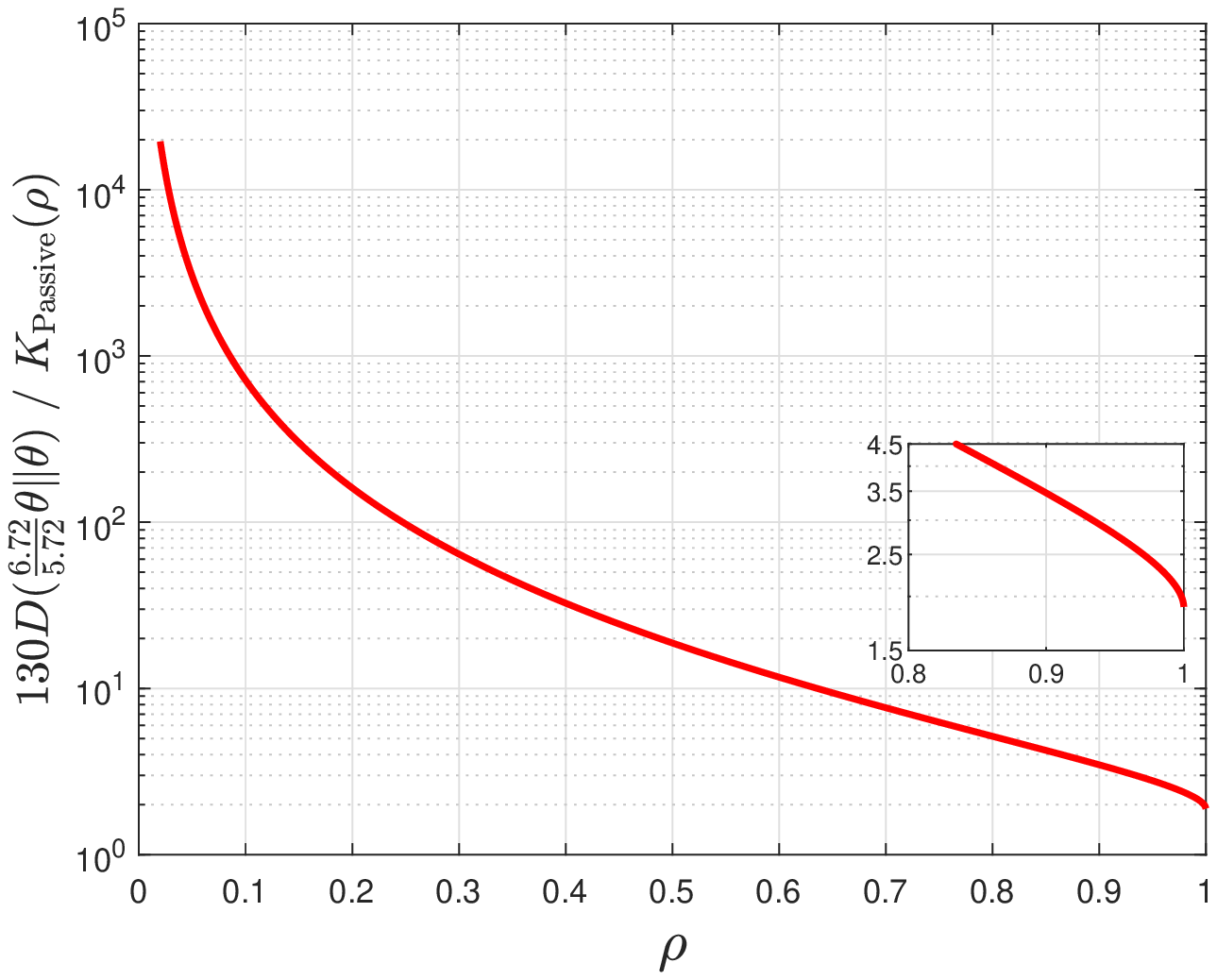}
		\end{minipage}%
		\label{fig:hrho57}
		}%
		\centering
		\caption{Lower bounds of the ratios of $K\big(\{\hrho(\halpha n)\leq \rho-\frac{1-\rho}{k}\}_{n\in\bbN}\big)$ and $K_{\Passive}(\rho)$ for $k=5$ and $k=5.72$.}
	\end{figure}
	Fig.~\ref{fig:hrho5} shows that this ratio is larger than $1.4$, so we only need to lower bound $K(\calJ\cap \calK)$ with an additional condition $\{\halpha =0.8\}$, i.e., we only need to lower bound $K(\calJ\cap \calK\cap\{ \halpha=0.8\})$. We define $\tiln (\calN)$ for any set of nodes $\calN\subseteq \calV(\calT)$ as the minimal number of scalar samples assigned to 
	each node in the set $\calN$. Lemmas \ref{lem:tilpbound} and \ref{lem:numsample} show that the event $\{N(\scC_{\mathrm{E}}(\halpha n)^{\complement},\calT(\scC_{\mathrm{V}}(\halpha n)^{\complement}),2)\leq 12\}$ implies that there are at least $\big[\halpha+(1-\halpha)\frac{p}{\tilp}\big]n\geq \big[\halpha+3(1-\halpha)\big]n$ for each 
	unconfident node, i.e., $\tiln \big(\scC_{\mathrm{E}}(\halpha n)^{\complement}\big)\geq \big[\halpha+3(1-\halpha)\big]n$. Defining the event $\calL(\calN,a) \triangleq \{\tiln \big(\calN \big)\geq a\}$, we have
	\begin{align}
		&\varliminf_{n\rightarrow\infty}-\frac{1}{n}\log\Pr\big(\calJ(n)\cap \calK(n)\cap\{ \halpha=0.8\}\big)\nonumber\\*
		&=\varliminf_{n\rightarrow\infty}-\frac{1}{n}\log\Pr\bigg(\calJ(n)\cap \calK(n)\cap\{ \halpha=0.8\}\cap \calL\Big(\scC_{\mathrm{E}}(\halpha n)^{\complement},\big[\halpha+3(1-\halpha)\big]n\Big)\bigg)\label{unconf:eq1}\\
		&=\varliminf_{n\rightarrow\infty}-\frac{1}{n}\log\Pr\bigg(\bigcup_{\calN\subseteq \calV(\calT)} \Big\{\calJ(n)\cap \calK(n)\cap \calL\Big(\scC_{\mathrm{E}}(\halpha n)^{\complement},\big[\halpha+3(1-\halpha)\big]n\Big) \cap\nonumber\\*
		&\qquad\qquad \{ \halpha=0.8\} \cap\big\{\scC_{\mathrm{E}}(\halpha n)^{\complement}=\calN\big\}\Big\}\bigg)\\
		&=\min_{\calN\subseteq \calV(\calT)}\varliminf_{n\rightarrow\infty}-\frac{1}{n}\log\Pr\bigg(\calJ(n)\cap \calK(n)\cap \calL\Big(\scC_{\mathrm{E}}(\halpha n)^{\complement},\big[\halpha+3(1-\halpha)\big]n\Big) \cap\nonumber\\*
		&\qquad\qquad\{ \halpha=0.8\} \cap\big\{\scC_{\mathrm{E}}(\halpha n)^{\complement}=\calN\big\}\bigg)\label{unconf:ieq1}\\
		&\geq \min_{\calN\subseteq \calV(\calT)}\varliminf_{n\rightarrow\infty}-\frac{1}{n}\log\Pr\bigg(\big\{\hatcalT_{\Active}(\calN)\neq\calT(\calN)\big\}\cap \calL\Big(\calN,\big[\halpha+3(1-\halpha)\big]n\Big) \cap \{ \halpha=0.8\}\bigg)\\
		&\geq \big[0.8+3\times(1-0.8)\big]K _{\Passive}(\rho)\\*
		&=1.4K _{\Passive}(\rho)= c_{\rho}K _{\Passive}(\rho), \label{unconf:ieq2}
	\end{align}
	where $c_{\rho}=1.4$ when $\rho\in[0.8,1)$, Eqn.~\eqref{unconf:eq1} follows as the event $\calK(n)$ implies the event $\calL\big(\scC_{\mathrm{E}}(\halpha n)^{\complement},\big[\halpha+3(1-\halpha)\big]n\big)$, Eqn.~\eqref{unconf:ieq1} follows  from the union bound and the fact that the smallest exponent dominates in the large deviations regime. 

	For $\rho\in [0.6,0.8)$, we compute the error exponent of the event that $\{\halpha >0.85\}$. It follows from Table~\ref{table:alpha} that $\halpha >0.85$ implies $\hrho<0.53$. Furthermore, for $\rho=0.6$, we have $\rho-\frac{1-\rho}{5.72}> 0.53$. Fig.~\ref{fig:hrho57} shows that the ratio between $K(\{\hrho(\halpha n)<0.53\}_{n\in\bbN})$ and $K_{\Passive}(\rho)$ is larger than $1.3$, and so we only need to consider the case where $\halpha \leq 0.85$. Following similar procedures as the steps leading to inequality \eqref{unconf:ieq2}, we have
	\begin{align}
		\varliminf_{n\rightarrow\infty}-\frac{1}{n}\log\Pr\big(\calJ(n)\cap \calK(n)\cap \{\halpha \leq 0.85\}\big)
		&\geq \big[0.85+3(1-0.85)\big]K _{\Passive}(\rho)\\*
		&=1.3K _{\Passive}(\rho)=c_{\rho}K _{\Passive}(\rho),
	\end{align}
	where $c_{\rho}=1.3$ when $\rho\in[0.6,0.8)$.

	For $\rho$ in other intervals,  similar analyses can be performed.
\end{proof}


\bibliographystyle{alpha}
\bibliography{IEEEabrv,StringDefinitions,TGroup}

\newcommand{\etalchar}[1]{$^{#1}$}
\begin{thebibliography}{TATW11}

\bibitem[BABK21]{boix2021chow}
E.~Boix-Adsera, G.~Bresler, and F.~Koehler.
\newblock Chow-{L}iu++: Optimal prediction-centric learning of tree {I}sing
  models.
\newblock {\em arXiv preprint arXiv:2106.03969}, 2021.

\bibitem[BDS21]{ben2021active}
N.~B.~David and S.~Sabato.
\newblock Active structure learning of {Bayesian} networks in an observational
  setting.
\newblock {\em arXiv preprint arXiv:2103.13796}, 2021.

\bibitem[BGPV21]{bha21}
A.~Bhattacharyya, S.~Gayen, E.~Price, and N.~V. Vinodchandran.
\newblock Near-optimal learning of tree-structured distributions by
  {Chow--Liu}.
\newblock In {\em Proceedings of 53rd ACM Symposium on Theory of Computing},
  2021.

\bibitem[BK20]{bresler2020learning}
G.~Bresler and M.~Karzand.
\newblock Learning a tree-structured {Ising} model in order to make
  predictions.
\newblock {\em The Annals of Statistics}, 48(2):713--737, 2020.

\bibitem[CL68]{chow1968approximating}
C.~Chow and C.~Liu.
\newblock Approximating discrete probability distributions with dependence
  trees.
\newblock {\em IEEE Trans. Inform. Theory}, 14(3):462--467, 1968.

\bibitem[CTAW11]{choi2011learning}
M.~J. Choi, V.~Y.~F. Tan, A.~Anandkumar, and A.~S. Willsky.
\newblock Learning latent tree graphical models.
\newblock {\em Journal of Machine Learning Research}, 12:1771--1812, 2011.

\bibitem[CW73]{chow1973consistency}
C.~Chow and T.~Wagner.
\newblock Consistency of an estimate of tree-dependent probability
  distributions ({C}orresp.).
\newblock {\em IEEE Trans. Inf. Theory}, 19(3):369--371, 1973.

\bibitem[DDK19]{daskalakis2019testing}
C.~Daskalakis, N.~Dikkala, and G.~Kamath.
\newblock Testing {I}sing models.
\newblock {\em IEEE Trans. Inf. Theory}, 65(11):6829--6852, 2019.

\bibitem[DSBP16]{dasarathy2016active}
G.~Dasarathy, A.~Singh, M.~Balcan, and J.~H. Park.
\newblock Active learning algorithms for graphical model selection.
\newblock In {\em Artificial Intelligence and Statistics}, pages 1356--1364.
  PMLR, 2016.

\bibitem[DZ98]{zeitouni1998large}
A.~Dembo and O.~Zeitouni.
\newblock Large deviations techniques and applications.
\newblock {\em Applications of Mathematics}, 38, 1998.

\bibitem[Han09]{hanneke2009theoretical}
S.~Hanneke.
\newblock {\em Theoretical {F}oundations of {A}ctive {L}earning}.
\newblock Carnegie Mellon University, 2009.

\bibitem[HG08]{he2008active}
Y.~He and Z.~Geng.
\newblock Active learning of causal networks with intervention experiments and
  optimal designs.
\newblock {\em Journal of Machine Learning Research}, 9(Nov):2523--2547, 2008.

\bibitem[Lau96]{lauritzen1996graphical}
S.~L. Lauritzen.
\newblock {\em Graphical models}, volume~17.
\newblock Clarendon Press, 1996.

\bibitem[LL09]{li2009active}
G.~Li and T.~Leong.
\newblock Active learning for causal {Bayesian} network structure with
  non-symmetrical entropy.
\newblock In {\em Pacific-Asia Conference on Knowledge Discovery and Data
  Mining}, pages 290--301. Springer, 2009.

\bibitem[Mon12]{montanari2012graphical}
A.~Montanari.
\newblock Graphical models concepts in compressed sensing.
\newblock In Y.~C. Eldar and G.~Kutyniok, editors, {\em Compressed Sensing:
  Theory and Applications}, pages 394--438. Cambridge University Press, 2012.

\bibitem[NJ13]{naghshvar2013active}
M.~Naghshvar and T.~Javidi.
\newblock Active sequential hypothesis testing.
\newblock {\em The Annals of Statistics}, 41(6):2703--2738, 2013.

\bibitem[NKS21]{nikolakakis2021predictive}
K.~E. Nikolakakis, D.~S. Kalogerias, and A.~D. Sarwate.
\newblock Predictive learning on hidden tree-structured {I}sing models.
\newblock {\em Journal of Machine Learning Research}, 22(59):1--82, 2021.

\bibitem[NL19]{neykov2019property}
M.~Neykov and H.~Liu.
\newblock Property testing in high-dimensional {I}sing models.
\newblock {\em The Annals of Statistics}, 47(5):2472--2503, 2019.

\bibitem[PCX14]{parikh2014spectral}
A.~Parikh, S.~B. Cohen, and E.~Xing.
\newblock Spectral unsupervised parsing with additive tree metrics.
\newblock In {\em Proceedings of ACL}, pages 1062--1072, 2014.

\bibitem[SC17]{scarlett2017lower}
J.~Scarlett and V.~Cevher.
\newblock Lower bounds on active learning for graphical model selection.
\newblock In {\em Artificial Intelligence and Statistics}, pages 55--64. PMLR,
  2017.

\bibitem[Set09]{settles2009active}
B.~Settles.
\newblock {\em Active {L}earning {L}iterature {S}urvey}.
\newblock University of Wisconsin-Madison Department of Computer Sciences,
  2009.

\bibitem[SMG{\etalchar{+}}20]{squires2020active}
C.~Squires, S.~Magliacane, K.~Greenewald, D.~Katz, M.~Kocaoglu, and
  K.~Shanmugam.
\newblock Active structure learning of causal {DAG}s via directed clique tree.
\newblock {\em arXiv preprint arXiv:2011.00641}, 2020.

\bibitem[SN87]{saitou1987neighbor}
N.~Saitou and M.~Nei.
\newblock The neighbor-joining method: a new method for reconstructing
  phylogenetic trees.
\newblock {\em Mol. Bio. Evol.}, 4(4):406--425, 1987.

\bibitem[TATW11]{tan2011large}
V.~Y.~F. Tan, A.~Anandkumar, L.~Tong, and A.~S. Willsky.
\newblock A large-deviation analysis of the maximum-likelihood learning of
  markov tree structures.
\newblock {\em IEEE Trans. Inf. Theory}, 57(3):1714--1735, 2011.

\bibitem[TK01]{tong2001active}
S.~Tong and D.~Koller.
\newblock Active learning for structure in {Bayesian} networks.
\newblock In {\em International joint conference on artificial intelligence},
  volume~17, pages 863--869. Citeseer, 2001.

\bibitem[TTZ20]{tandon2020exact}
A.~Tandon, V.~Y.~F. Tan, and S.~Zhu.
\newblock Exact asymptotics for learning tree-structured graphical models:
  Noiseless and noisy samples.
\newblock {\em IEEE Journal on Selected Areas of Information Theory},
  1(3):760--776, 2020.

\bibitem[TYT21]{tandon2021sga}
A.~Tandon, A.~H.~J. Yuan, and V.~Y.~F. Tan.
\newblock {SGA}: A robust algorithm for partial recovery of tree-structured
  graphical models with noisy samples.
\newblock In {\em International Conference on Machine Learning}. PMLR, 2021.

\bibitem[VNB14]{vats2014active}
D.~Vats, R.~Nowak, and R.~Baraniuk.
\newblock Active learning for undirected graphical model selection.
\newblock In {\em Artificial Intelligence and Statistics}, pages 958--967.
  PMLR, 2014.

\bibitem[ZT21]{zhang21}
F.~Zhang and V.~Y.~F. Tan.
\newblock Robustifying latent tree learning algorithms with vector variables.
\newblock In {\em Proceedings of the Thirty-Fifth Annual Conference on Neural
  Information Processing Systems}, 2021.

\end{thebibliography}

\end{document}